\documentclass{article}

\usepackage[numbers, sort]{natbib} 



     \usepackage[final]{neurips_2023}


\usepackage[utf8]{inputenc} 
\usepackage[T1]{fontenc}    
\usepackage{hyperref}       
\usepackage{url}            
\usepackage{booktabs}       
\usepackage{amsfonts}       
\usepackage{nicefrac}       
\usepackage{microtype}      
\usepackage{xcolor}         

\usepackage[ruled,vlined,linesnumbered]{algorithm2e}

\usepackage{amsmath}
\usepackage{amssymb}
\usepackage{amsthm}
\usepackage{bbm}
\usepackage{bm}
\usepackage[normalem]{ulem}
 \usepackage{booktabs}
\usepackage{eucal}  
 \usepackage{mathtools}
\usepackage{mathrsfs}
\usepackage{comment}
\usepackage{tikz}
\usepackage{enumitem}
\usepackage{thmtools}
\usepackage{cleveref}
\usepackage{xfrac}


\hypersetup{
  colorlinks=true,
  linkcolor=blue,
  citecolor=blue,
}

 \newtheorem{theorem}{Theorem}
 \newtheorem*{theorem*}{Theorem}
 \newtheorem*{proof*}{proof}
 \newtheorem{definition}{Definition}
 \newtheorem*{definition*}{Definition}
 
 \newtheorem*{assumption*}{Assumption}
 \newtheorem{lemma}{Lemma}
 \newtheorem*{lemma*}{Lemma}
 \newtheorem{proposition}{Proposition}
 \newtheorem*{proposition*}{Proposition}
 \newtheorem{corollary}[theorem]{Corollary}
 \newtheorem*{corollary*}{Corollary}

\theoremstyle{definition}  
\newtheorem*{remark}{Remark}

\renewcommand{\tilde}{\widetilde}
\renewcommand{\hat}{\widehat}
\renewcommand{\epsilon}{\varepsilon}
\def\e{\mathrm{e}}
\def\E{\mathbb{E}}

\def\R{\mathbb{R}}
\def\N{\mathbb{N}}

\def\calA{\mathcal{A}}

\def\calC{\mathcal{C}}

\def\calG{\mathcal{G}}
\def\calH{\mathcal{H}}

\def\calL{\mathcal{L}}
\def\calM{\mathcal{M}}

\def\calP{\mathcal{P}}
\def\calQ{\mathcal{Q}}

\def\calT{\mathcal{T}}

\newcommand{\one}{\mbox{(i)}}
\newcommand{\two}{\mbox{(i\hspace{-.1em}i)}}

\newcommand{\One}{\mbox{(I)}}
\newcommand{\Two}{\mbox{(I\hspace{-.1em}I)}}

\DeclareMathOperator*{\argmin}{arg\,min}


\DeclareMathOperator*{\polylog}{polylog}
\newcommand{\ind}[1]{\mathbbm{1}{\left[#1\right]}}


\newcommand{\Expect}[1]{\mathbb{E}{\left[#1\right]}}

\newcommand{\innerprod}[2]{\left\langle{#1},{#2}\right\rangle}

\renewcommand{\d}{\mathrm{d}}

\newcommand{\dx}{\d{x}}



\newcommand{\nn}{\nonumber\\}
\newcommand{\n}{\nonumber}
\newcommand{\per}{\,.}
\newcommand{\com}{\,,}








 %
\DeclarePairedDelimiter{\brk}{[}{]}
\DeclarePairedDelimiter{\crl}{\{}{\}}
\DeclarePairedDelimiter{\prn}{(}{)}
\DeclarePairedDelimiter{\nrm}{\|}{\|}
\DeclarePairedDelimiter{\tri}{\langle}{\rangle}



\newcommand{\prx}[1]{\left(#1\right)}
\newcommand{\sqx}[1]{\left[#1\right]}
\newcommand{\brx}[1]{\left\{#1\right\}}


\newcommand{\Reg}{\mathsf{Reg}}

\newcommand{\ReghatSP}{\hat{\mathsf{Reg}}^{\mathsf{SP}}} 


\newcommand{\lossmat}{\calL}
\newcommand{\fbmat}{\Phi}




\newcommand{\onemat}{\mathbf{1}}

\newcommand{\sumT}{\sum_{t=1}^T}

\newcommand{\sumak}{\sum_{a=1}^k}

\newcommand{\sumk}{\sum_{i=1}^k}














\newlength{\subfigcolsep}
\setlength{\subfigcolsep}{0.5\tabcolsep}

\newcommand{\ie}{\textit{i.e., }}
\newcommand{\eg}{\textit{e.g., }}


















\newcommand{\linner}{\left\langle}
\newcommand{\rinner}{\right\rangle}


\newcommand{\bias}{\operatorname{bias}}
\newcommand{\opt}{\operatorname{opt}}

\newcommand{\at}{A_t}
\newcommand{\pt}{p_t}

\newcommand{\ptAt}{p_{t\at}}
\newcommand{\qt}{q_t}




\newcommand{\Deltamin}{\Delta_{\min}}

\newcommand{\Vmax}{\bar{V}}
\newcommand{\Vpmax}{\bar{V}'}


\newcommand{\zF}{z}  

\newcommand{\zmax}{\bar{\zF}}

\newcommand{\stabtrans}{\mathsf{ST}}  

\newcommand{\psiNS}{\psi^{\mathsf{nS}}}  
\newcommand{\psiLB}{\psi^{\mathsf{LB}}}  
\newcommand{\phiNS}{\phi^{\mathsf{nS}}}  
\newcommand{\phiLB}{\phi^{\mathsf{LB}}}  

\newcommand{\leftch}{\mathrm{left}}
\newcommand{\rightch}{\mathrm{right}}
\newcommand{\centerch}{\mathrm{center}}

\title{Stability-penalty-adaptive follow-the-regularized-leader:\\ Sparsity, game-dependency, and best-of-both-worlds}

%

\author{%
  Taira Tsuchiya\thanks{
   This work was done when the author was with Kyoto University and RIKEN.
   } 
  \\
  The University of Tokyo
  \\ 
  \texttt{tsuchiya@mist.i.u-tokyo.ac.jp} \\ 
  \And
  Shinji Ito 
  \\
  NEC Corporation / RIKEN \\
  \texttt{i-shinji@nec.com} \\\
  \And
  Junya Honda 
  \\
  Kyoto University / RIKEN  \\
  \texttt{honda@i.kyoto-u.ac.jp}  \\
}

\begin{document}

\maketitle

\begin{abstract}
Adaptivity to the difficulties of a problem is a key property in sequential decision-making problems to broaden the applicability of algorithms. Follow-the-regularized-leader (FTRL) has recently emerged as one of the most promising approaches for obtaining various types of adaptivity in bandit problems. Aiming to further generalize this adaptivity, we develop a generic adaptive learning rate, called \emph{stability-penalty-adaptive (SPA) learning rate} for FTRL. This learning rate yields a regret bound jointly depending on stability and penalty of the algorithm, into which the regret of FTRL is typically decomposed. With this result, we establish several algorithms with three types of adaptivity: \emph{sparsity}, \emph{game-dependency}, and \emph{best-of-both-worlds} (BOBW). Despite the fact that sparsity appears frequently in real problems, existing sparse multi-armed bandit algorithms with $k$-arms assume that the sparsity level $s \leq k$ is known in advance, which is often not the case in real-world scenarios. To address this issue, we first establish $s$-agnostic algorithms with regret bounds of $\tilde{O}(\sqrt{sT})$ in the adversarial regime for $T$ rounds, which matches the existing lower bound up to a logarithmic factor. Meanwhile, BOBW algorithms aim to achieve a near-optimal regret in both the stochastic and adversarial regimes. Leveraging the SPA learning rate and the technique for $s$-agnostic algorithms combined with a new analysis to bound the variation in FTRL output in response to changes in a regularizer, we establish the first BOBW algorithm with a sparsity-dependent bound. Additionally, we explore partial monitoring and demonstrate that the proposed SPA learning rate framework allows us to achieve a game-dependent bound and the BOBW simultaneously.
\end{abstract}

\section{Introduction}\label{sec:introduction}
This study considers the multi-armed bandits (MAB) and partial monitoring (PM). In the MAB problem, the learner selects one of $k$ arms, and the adversary simultaneously determines the loss of each arm, $\ell_t = (\ell_{t1}, \dots , \ell_{tk})^\top$ in $[0,1]^k$ or $[-1,1]^k$.  
After that, the learner observes only the loss for the chosen arm.
The learner's goal is to minimize the regret, which is the difference between the learner's total loss and the total loss of an optimal arm fixed in hindsight.
PM is a generalization of MAB, and the learner observes feedback symbols instead of the losses.

One of the most promising frameworks for MABs and PM is follow-the-regularized-leader (FTRL)~\citep{auer2002nonstochastic, CesaBianchi06regret}, which determines the arm selection probability at each round by minimizing the sum of the cumulative (estimated) losses so far plus a convex regularizer.
Note that the well-known Exp3 algorithm developed in~\cite{auer2002nonstochastic} is equivalent to FTRL with the (negative) Shannon entropy regularizer.
FTRL is also known to perform well for the classic expert problem~\citep{freund97decision} and reinforcement learning~\citep{zimin13online}.
Furthermore, when the problem is ``benign'', it is known that FTRL can exploit the underlying structure to adaptively improve its performance. Typical examples of such adaptive improvements are~\one~\emph{data-dependent bounds} and~\two~\emph{best-of-both-worlds} (BOBW).

Data-dependent bounds have been investigated to enhance the adaptivity of algorithms to a given structure of losses in the \emph{adversarial regime},
where feedback (\eg losses in MABs) is decided in an arbitrary manner.
There are various examples of data-dependent bounds, and this study considers \emph{sparsity-dependent bounds} and \emph{game-dependent bounds}.

A sparsity-dependent bound is an important example of data-dependent bounds, as sparsity frequently appears in real-world problems.
For example, in online advertisement allocation, it is often the case that only a fraction of the ads is clicked.
Although there are some studies for sparse MABs~\citep{kwon16gains, bubeck18sparsity, zheng19equipping}, all of them assume that (an upper bound of) sparsity level $s \geq \nrm{\ell_t}_0 = |\{ i \in [k] \colon \ell_{ti} \neq 0 \}|$ is known beforehand, which in many practical scenarios does not hold.

The concept of a game-dependent bound was recently introduced by~\citet{lattimore20exploration} to derive a regret upper bound that depends on the game the learner is facing. 
As the authors suggest, one of the motivations for the game-dependent bound is that previous PM algorithms are ``quite conservative and not practical for normal problems''.
For example, whereas the Bernoulli MAB is expressed as a PM, 
algorithms for PM do not always achieve the minimax regret of MAB~\citep{auer2002nonstochastic}.
The game-dependent bound enables the learner to automatically adapt to 
the essential difficulty of the game the algorithm is actually facing.

The BOBW algorithm aims to achieve near-optimal regret bounds in stochastic and adversarial regimes,
where 
the feedback is stochastically generated in the stochastic regime.
Since we often do not know the underlying regime,
it is desirable for an algorithm to \emph{simultaneously} obtain a near-optimal performance both for the stochastic and adversarial regimes.
For multi-armed bandits, \citet{bubeck2012best} developed the first BOBW algorithm, and \citet{zimmert2021tsallis} proposed the well-known Tsallis-INF algorithm,
which achieves the optimal regret for both regimes.
The Tsallis-INF algorithm also achieves favorable regret guarantees in the \emph{adversarial regime with a self-bounding constraint}, which interpolates between the stochastic and adversarial regimes. 

To realize the aforementioned adaptivity in FTRL, the \emph{adaptive learning rate} (a.k.a.~time-varying learning rate) is one of the most representative approaches.
This approach adjusts the learning rate based on previous observations.
In the literature, adaptive learning rates have been designed to depend on \emph{stability} or \emph{penalty}, which are components of a regret upper bound of FTRL. 
The stability term increases if the variation of FTRL outputs in the adjacent rounds is large, 
and stability-dependent learning rates have been used in a considerable number of algorithms available in the literature, 
\eg \cite{mcmahan2011follow, lattimore20exploration, orabona2019modern} and references therein.
In contrast, the penalty term comes from the strength of the regularization,
and recently penalty-dependent learning rates were considered to achieve BOBW guarantees~\citep{ito2022nearly, tsuchiya23best}.
However, existing stability-dependent (resp.~penalty-dependent) learning rates are designed with the worst-case penalty (resp.~stability), which could potentially limit the adaptivity and performance of FTRL.
(There are numerous studies related to this paper and we include additional related work in Appendix~\ref{app:additional_related_work}.)

\subsection{Contribution of this study}
In this paper, in order to further broaden the applicability of FTRL, we establish a generic framework for designing an adaptive learning rate that depends on both the stability and penalty components simultaneously, which we call a \emph{stability-penalty-adaptive (SPA) learning rate} (Definition~\ref{def:stab_penalty_adaptive_lr}).
This enables us to bound the regret approximately by $\tilde{O}\big(\sqrt{\sumT \zF_t h_{t+1}}\Big)$ for stability component $(\zF_t)_t$ and a penalty component $(h_t)_t$, which we call a \emph{SPA regret bound} (Theorem~\ref{thm:x-entropy-bound}).
With appropriate selections of $z_t$ and $h_t$, this result yields the three important adaptive bounds mentioned earlier, namely sparsity, game-dependency, and BOBW.
In particular, our contributions are as follows (see also Tables~\ref{table:regret_sparse_bandits} and~\ref{table:regret_PM}):  

\begin{itemize}[topsep=0pt, itemsep=0pt, partopsep=0pt, leftmargin=12pt]
  \item 
  (Section~\ref{subsec:sparse_pre_algo}) 
  We initially provide new algorithms for sparse MABs as preliminaries for establishing a BOBW algorithm with a sparsity-dependent bound.
  In Section~\ref{subsec:sparse_std}, we propose a novel estimator of the sparsity level, which is linked to a stability component and induces $L_2 = \sumT \nrm{\ell_t}^2 \leq sT$.
  We demonstrate that a learning rate using this estimator with the Shannon entropy regularizer and $\tilde{\Theta}((kT)^{-2/3})$ uniform exploration immediately results in an $O(\sqrt{L_2 \log k})$ regret bound for $\ell_t \in [0,1]^k$.
  In Section~\ref{subsec:sparse_negtive_losse}, we investigate possibly negative losses $\ell_t \in [-1,1]^k$.
  We employ the time-invariant log-barrier proposed in \cite{bubeck18sparsity} to control the stability term.
  This allows us to achieve an $O(\sqrt{L_2 \log k})$ regret bound for losses in $[-1,1]^k$ even \emph{without} the $\tilde{\Theta}((kT)^{-2/3})$ uniform exploration. This is a key component for developing the BOBW guarantee that we discuss next.
  Note that Section~\ref{subsec:sparse_pre_algo} serves as preliminary findings for the subsequent section.
  \item 
  (Section~\ref{subsec:sparse_bobw})
  We establish a BOBW algorithm with a sparsity-dependent bound. 
  In order to achieve this goal, we make another major technical development: we analyze the variation in the FTRL output when the regularizer changes (Lemma~\ref{lem:htp_sb_bound}), which holds thanks to the time-invariant log-barrier and may be of independent interest.
  This analysis is necessary since we use a time-varying learning rate to obtain a BOBW guarantee, whereas~\citet{bubeck18sparsity} uses a constant learning rate.
  This technical development successfully allows us to achieve the goal (Theorem~\ref{thm:sparse-bobw}) in combination with the SPA learning rate developed in Section~\ref{sec:new-lr} and a technique for exploiting sparsity in Section~\ref{subsec:sparse_negtive_losse}.
  \item 
  (Section~\ref{sec:pm_game_bobw})
  We show that the SPA learning rate established in Section~\ref{sec:new-lr} can also be used to achieve a game-dependent bound and a BOBW guarantee simultaneously, which further highlights the usefulness of the SPA learning rate.
\end{itemize}

\begin{table*}
  \caption{Regret upper bounds with sparsity-dependent bounds in multi-armed bandits.
  $T$ is the time horizon.
  $s \leq k$ is the level of sparsity in losses.
  Let $L_2 = \sumT \nrm{\ell_t}^2$,
  and $\nrm{\ell_t}_0 \leq s$ implies $L_2 = \sumT \nrm{\ell_t}^2 \leq s T$ since $\nrm{\ell_t}_\infty \leq 1$.
  $\Deltamin$ is the minimum suboptimality gap.
  Adv.~and Stoc.~are the abbreviations of the adversarial and stochastic regime, respectively.
  }
  \label{table:regret_sparse_bandits}
  \centering
  \footnotesize
  \begin{tabular}{lllll}
    \toprule
    Reference & $s$-agnostic? & Range of $\ell_{ti}$ & Regime & Regret bound \\
    \midrule
    \citet{kwon16gains} & -- & $[0, 1]$ & Adv. & $\displaystyle \Omega(\sqrt{sT})$   \\
    \midrule
    \citet{kwon16gains} & No & $[0, 1]$ & Adv. & 
    $\displaystyle 2 \sqrt{e} \sqrt{ sT \log (k/s) }$   \\
    \textbf{Ours (Sec.~\ref{subsec:sparse_std}, Cor.~\ref{thm:sparse})} & Yes  & $[0, 1]$ & Adv.  & $\displaystyle 2\sqrt{2} \sqrt{L_2 \log k} + O( (kT \log k)^{1/3} )$   \\
    \midrule
    \citet{bubeck18sparsity} & No & $[-1, 1]$ & Adv. & $\displaystyle 10 \sqrt{L_2 \log k} + 20 k \log T$   \\
    \textbf{Ours (Sec.~\ref{subsec:sparse_negtive_losse}, Cor.~\ref{thm:sparse-negative-loss})} & Yes & $[-1, 1]$ & Adv.  & $\displaystyle 4\sqrt{2} \sqrt{L_2 \log k} + 2 k \log T$   \\
    \textbf{Ours (Sec.~\ref{subsec:sparse_bobw}, Thm.~\ref{thm:sparse-bobw})} & Yes & $[-1, 1]$ & Adv. & $\displaystyle 4 \sqrt{L_2 \log k \log T } + O( k \log T )$ \\
                               & & & Stoc. & $\displaystyle O\prn[]{{s \log(T) \log (kT)}/{\Deltamin} }$  \\
    \bottomrule
  \end{tabular}
\end{table*}

\begin{table*}
  \caption{Regret bounds for non-degenerate local PM games. 
  $V_t$, $V'_{t}$, and $\Vpmax$ are game-dependent quantities satisfying $V_t \leq V'_{t} \leq \Vmax$ (see Section~\ref{sec:pm_game_bobw} and Appendix~\ref{app:pm_detailed} for definitions). $H(q_{t})$ is the Shannon entropy for FTRL output $q_t$.
  }
  \label{table:regret_PM}
  \centering
  \footnotesize
  \begin{tabular}{llll}
    \toprule
    Reference & Game-dependent? & BOBW? & Order of regret bound \\
    \midrule
    Many existing studies on PM & No & No  & --  \\
    \citet{lattimore20exploration} & Yes & No  & $ \sqrt{\sumT V_{t} \log k} $  \\
    \citet{tsuchiya23best} & No (only game-class-dependent) & Yes & $  \sqrt{\Vmax \sumT  H(q_{t+1}) } $   \\
    \textbf{Ours (Sec.~\ref{sec:pm_game_bobw}, Cor.~\ref{thm:stab-ent-dep-bound})} & Yes  & Yes &  $ \sqrt{\sumT V'_{t} H(q_{t+1}) \log T } $   \\
    \bottomrule
  \end{tabular}
\end{table*}

\section{Setup}\label{sec:setup}
This section introduces the preliminaries of this study.
Sections~\ref{subsec:mab-formulation} and~\ref{subsec:pm-formulation} formulate the MAB and PM problems, respectively,
and Section~\ref{subsec:regimes} defines regimes considered in this paper.
\paragraph{Notation}
Let $\nrm{x}$, $\nrm{x}_1$, and $\nrm{x}_\infty$ be the Euclidian, $\ell_1$-, and $\ell_\infty$-norms for a vector $x$, respectively.
Let $\nrm{x}_0$ be the number of non-zero elements for a vector $x$.
Let $\calP_{k} = \{ p \in [0,1]^k : \nrm{p}_1 = 1 \}$ be the $(k-1)$-dimensional probability simplex.
A vector $e_{i} \in \{0,1\}^k$ is the $i$-th standard basis of $\R^k$,
and $\onemat$ is the all-one vector.
Let $D_\Phi$ be the \emph{Bregman divergence} induced by differentiable convex function $\Phi$, \ie 
$
  D_{\Phi}(p, q) 
  = 
  \Phi(p) - \Phi(q) - \innerprod{\nabla \Phi(q)}{p - q}
$.
Table~\ref{table:notation} in Appendix~\ref{app:notation} summarizes the notation used in this paper.

\subsection{Multi-armed bandits}\label{subsec:mab-formulation}
In MAB with $k$-arms,
at each round $t \in [T]\coloneqq \{1, 2, \dots, T\}$, 
the environment determines the loss vector 
$\ell_t = ( \ell_{t1}, \ell_{t2}, \dots, \ell_{tk} )^\top$ in $[0, 1]^k$ or $[-1, 1]^k$,
and the learner simultaneously chooses an arm $A_t \in [k]$ without knowing $\ell_t$.
After that,
the learner observes only the loss $\ell_{t A_t}$ for the chosen arm.
The performance of the learner is evaluated by the regret $\Reg_T$, 
which is the difference between the cumulative loss of the learner and of the single optimal arm, that is,
$
  a^* = \argmin_{a \in [k]} \E \big[ \sumT \ell_{ta} \big]
$
and
$
  \Reg_T
  =
  \E \big[ \sumT (\ell_{tA_t} - \ell_{ta^*}) \big]
  ,
$
where the expectation is taken with respect to the internal randomness of the algorithm and the randomness of the loss vectors $( \ell_t )_{t=1}^{T}$.

\subsection{Partial monitoring}\label{subsec:pm-formulation}
\paragraph{Formulation}
A PM game $\calG = (\lossmat, \fbmat)$ with $k$-actions and $d$-outcomes is defined by a pair of a loss matrix $\lossmat \in [0,1]^{k \times d}$ and feedback matrix $\fbmat \in \Sigma^{k \times d}$, where $\Sigma$ is a set of feedback symbols.
The game is played in a sequential manner by a learner and an opponent across $T$ rounds.
The learner begins the game with knowledge of $\calL$ and $\Phi$. 
For every round $t \in [T]$, 
the opponent selects an outcome $x_t \in [d]$, and the learner simultaneously chooses an action $\at \in [k]$.
Then the learner suffers an unobserved loss $\lossmat_{\at x_t}$ and receives only a feedback symbol $\sigma_t = \fbmat_{\at x_t}$, where $\lossmat_{a x}$ is the $(a,x)$-th element of $\lossmat$.
The learner's performance in the game is evaluated by the regret $\Reg_T$ as in the MAB case:
$
  a^* = \argmin_{a \in [k]} \E \big[ \sumT \calL_{a x_t} \big]
$
and
$
  \Reg_T
  =
  \E \big[ \sumT \prn[]{\lossmat_{\at x_t} - \lossmat_{a^* x_t}}  \big]
  =
  \E \big[ \sumT \innerprod{\ell_{\at} - \ell_{a^*}}{e_{x_t}} \big]
  ,
$
where $\ell_a \in \R^d$ is the $a$-th row of $\lossmat$.

\paragraph{Several concepts in PM}
Let $m \le |\Sigma|$ be the maximum number of distinct symbols in a single row of $\fbmat \in \Sigma^{k \times d}$.
Different actions $a$ and $b$ are duplicate if $\ell_a = \ell_b$.
We can decompose possible distributions of $d$ outcomes in $\calP_d$ based on the loss matrix. 
For every action $a\in[k]$, cell
$\calC_a = \crl{u \in \calP_{d} : \max_{b\in[k]} (\ell_a - \ell_b)^\top u \le 0}$
is the set of probability vectors in $\calP_d$ for which action $a$ is optimal.
Each cell is a closed convex polytope.
  
Define $\dim(\calC_a)$ as the dimension of the affine hull of $\calC_a$.
Action $a$ is said to be dominated if $\calC_a = \emptyset$.
For non-dominated actions, action $a$ is said to be Pareto optimal if $\dim(\calC_a) = d-1$, and degenerate if $\dim(\calC_a) < d-1$.
Let $\Pi$ be the set of Pareto optimal actions.
Two Pareto optimal actions $a, b \in \Pi$ are called neighbors if $\dim(\calC_a \cap \calC_b) = d-2$, which is used to define the difficulty of PM games.
A PM game is said to be non-degenerate if it has no degenerate actions.
We assume that PM game $\calG$ is non-degenerate and contains no duplicate actions.

The difficulty of PM games is characterized by the following observability conditions.
Neighbouring actions $a$ and $b$ are locally observable if there exists $w_{ab}: [k] \times \Sigma \rightarrow \R$ such that $w_{ab}(c,\sigma) = 0$ for $c \not\in \{a,b\}$ and 
$
  \sum_{c=1}^k w_{ab}(c, \fbmat_{cx}) = \lossmat_{ax} - \lossmat_{bx} 
  \text{ for all } x \in [d].
$
A PM game is locally observable if all neighboring actions are locally observable, and 
this study considers locally observable games.

\paragraph{Loss difference estimation}
Let $\calH$ be the set of all functions from $[k]\times\Sigma$ to $\R^d$.
For any locally observable games, there exists $G \in \calH$ such that for any $b, c \in \Pi$,
$
  \sumak (G(a, \fbmat_{ax})_b - G(a, \fbmat_{ax})_c) = \calL_{bx} - \calL_{cx}
$
for all 
$
  x \in [d]
$~\cite{lattimore20exploration}.
For example, we can take $G = G_0$ defined by
$
  G_0(a, \sigma)_b = \sum_{e \in \mathrm{path}_\mathscr{T}(b)} w_e(a, \sigma)
$
for 
$
  a \in \Pi
  ,
$
where 
$\mathscr{T}$ is a tree over $\Pi$ induced by neighborhood relations and
$\mathrm{path}_\mathscr{T}(b)$ is the set of edges from $b \in \Pi$
to an arbitrarily chosen root $c \in \Pi$ on $\mathscr{T}$~\citep{lattimore20exploration}.
See Appendix~\ref{app:additional_related_work} and~\cite[Chapter 37]{lattimore2020book} for a more detailed explanation and background of PM.

\subsection{Considered regimes}\label{subsec:regimes}
We consider three regimes on the assumptions for losses in MABs and outcomes in PM.
In the \emph{stochastic regime}, a sequence of loss vector $(\ell_t)$ in MAB and
that of outcome vector $(x_t)$ in PM follow an unknown distribution $\nu^*$ in an i.i.d.~manner.
Define the minimum suboptimality gap in $\Deltamin = \min_{a \neq a^*} \Delta_a$ for $\Delta_a = \E_{\ell_t \sim \nu^*}\big[(\ell_{ta} - \ell_{ta^*})\big]$ in MAB and $\Delta_a = \E_{x_t \sim \nu^*}\big[(\ell_{a} - \ell_{a^*})^\top e_{x_t} \big]$ in PM.  
Note that the definitions of $\ell$ in MAB and PM are different.

In contrast, the \emph{adversarial regime} does not assume any stochastic structure for the losses or outcomes, and they can be chosen in an arbitrary manner. 
In this regime, the environment can choose $\ell_t$ for MAB and $x_t$ for PM depending on the past history until the $(t-1)$-th round, $(A_s)_{s=1}^{t-1}$.

We also consider, the \emph{adversarial regime with a self-bounding constraint}~\citep{zimmert2021tsallis}, an intermediate regime between the stochastic and adversarial regimes.
\begin{definition}\label{def:ARSBC}
  Let $\Delta \in [0, 2]^k$ and $C \geq 0$.
  The environment is in an \emph{adversarial regime with a} $(\Delta, C, T)$ self-bounding constraint if it holds for any algorithm that
  $
    \Reg_T \geq 
    \E\brk[\big]{
      \sum_{t=1}^T \Delta_{A_t} - C
    }.
  $
\end{definition}
One can see that the stochastic and adversarial regimes are indeed instances of this regime,
and that well-known \emph{stochastic regimes with adversarial corruptions}~\citep{lykouris2018stochastic} are also in this regime 
(see~\cite{zimmert2021tsallis} and~\cite{tsuchiya23best} for definitions in MAB and PM, respectively).

We assume that there exists a unique optimal arm (or action) $a^*$, which was employed by many studies aiming at developing BOBW algorithms~\citep{gaillard2014second,luo2015achieving,wei2018more,zimmert2021tsallis}.

\section{Preliminaries}\label{sec:preliminaries}
This section provides preliminaries for developing and analyzing algorithms.
We first introduce FTRL, upon which we develop our algorithms,
and then describe the self-bounding technique, which is a common technique for proving a BOBW guarantee.

\paragraph{Follow-the-regularized-leader}
In the FTRL framework, an arm selection probability $\pt \in \calP_{k}$ at round $t$ is given by
\begin{equation}\label{eq:def_q}
  \qt = \argmin_{q \in \calP_k}
  \,
  \tri[\Bigg]{\sum_{s=1}^{t-1} \hat y_s,\, q}
  +
  \Phi_t(q)
  \quad\mbox{and}\quad
  \pt = \calT_t(\qt)
  \com
\end{equation}
where
$\hat{y}_t \in \R^k$ is an estimator of loss $\ell_t$ at round $t$,
$\Phi_t \colon \calP_k \rightarrow \R$ is a strongly-convex regularizer,
and
$\calT_t \colon \calP_k \rightarrow \calP_k$ is a map from the output of FTRL $\qt$ to an arm selection probability vector $\pt$.

In the analysis of FTRL, it is common to evaluate
\allowbreak
$
\sumT \innerprod{\hat{y}_t}{p_t - p}
=
\sumT \innerprod{\hat{y}_t}{q_t - p} + \sumT \innerprod{\hat{y}_t}{p_t - q_t}
$ for some $p \in \calP_k$.
It is known (see \eg \cite[Exercise 28.12]{lattimore2020book})
that quantity $\sumT \innerprod{\hat{y}_t}{q_t - p}$ is bounded from above by
\begin{equation}\label{eq:lemFTRL}
  \!
  \sumT
  \prn[]{
  \underbrace{
  \Phi_t(q_{t+1})
  -
  \Phi_{t+1}(q_{t+1})
  }_{\text{penalty term}}
  }
  +
  \Phi_{T+1} (p) 
  -
  \Phi_1 (q_1)
  +
  \sumT
  \prn[]{
  \underbrace{
  \innerprod{\qt - q_{t+1}}{\hat{y}_t}
  -
  D_{\Phi_t}(q_{t+1}, \qt)
  }_{\text{stability term}}
  }
  \per 
\end{equation}
We refer to the terms in~\eqref{eq:lemFTRL}
as a \textit{penalty} and \textit{stability} terms, and to the quantity $\innerprod{\hat{y}_t}{p_t - q_t}$ as a \textit{transformation} term.
Note that, though this study focuses on examples in which $\Phi_{T+1}(p)$ is not dominant, 
this term may be dominant dependent on the choice of regularizers.

\paragraph{Self-bounding technique}
A self-bounding technique is a common method for proving a BOBW guarantee~\citep{gaillard2014second, wei2018more, zimmert2021tsallis}.
In the self-bounding technique, we first derive regret upper and lower bounds in terms of a variable dependent on the arm selection probabilities $(p_t)_t$ or the FTRL outputs $(q_t)_t$, and then derive a regret bound by combining the upper and lower bounds.
We use a version proposed in~\cite{ito2022nearly}.
We consider  $Q(i)$, $\bar{Q}(i)$, $P(i)$, and $\bar{P}(i)$ for $i \in [k]$ defined by
$
  Q(i) = \sum_{t=1}^T (1 - q_{ti})
  ,
$
$
  \bar{Q}(i)
  =
  \E \left[
  Q(i)
  \right]
  ,
$
$
  P(i) = \sum_{t=1}^T (1 - p_{ti})
  ,
$
and
$
  \bar{P}(i)
  =
  \E \left[
  P(i)
  \right]
  .
$
Note that
$\bar{Q}(i), \bar{P}(i) \in [0,T]$ for any $i \in [k]$.
In terms of $\bar{Q}(i)$ or $\bar{P}(i)$, 
we can obtain the lower bound of the regret for the adversarial regime with a self-bounding constraint as follows:
\begin{lemma}[{\cite[Lemma 4]{tsuchiya23best}}]\label{lem:selfQ}
In the adversarial regime with a self-bounding constraint (Definition~\ref{def:ARSBC}),
if there exists $c' \in (0,1]$ such that $p_{ti} \geq c' \, q_{ti}$ for all $t \in [T]$ and $i \in [k]$,
then
$
  \Reg_T 
  \geq 
  \Deltamin \bar{P}(a^*)
  -
  C
  \geq 
  c'\,\Deltamin \bar{Q}(a^*)
  -
  C
  \per
$
\end{lemma}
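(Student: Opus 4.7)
The plan is to derive the two inequalities separately, handling first the gap-based lower bound on the regret (which gives $\Reg_T \geq \Deltamin \bar P(a^*) - C$), and then translating $\bar P(a^*)$ into $\bar Q(a^*)$ via the coordinate-wise comparison $p_{ti}\geq c' q_{ti}$.

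For the first inequality, I would start from the definition of the adversarial regime with a self-bounding constraint: $\Reg_T \geq \E[\sum_t \Delta_{A_t}] - C$. Since $A_t$ is sampled according to $p_t$ conditionally on the history up to round $t-1$, taking conditional expectation yields $\E[\Delta_{A_t}] = \sum_a \E[p_{ta}]\Delta_a$. Using $\Delta_{a^*}=0$ and $\Delta_a \geq \Deltamin$ for $a \neq a^*$, this is bounded below by $\Deltamin \sum_{a\neq a^*}\E[p_{ta}] = \Deltamin(1-\E[p_{ta^*}])$. Summing over $t$ and using the definition $\bar P(a^*) = \sum_t(1-\E[p_{ta^*}])$ produces $\Reg_T \geq \Deltamin \bar P(a^*) - C$.

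For the second inequality, the key observation is that $p_t, q_t \in \calP_k$ both sum to one, so
\begin{equation*}
1 - p_{t a^*} = \sum_{i \neq a^*} p_{ti} \quad\text{and}\quad 1 - q_{t a^*} = \sum_{i \neq a^*} q_{ti}.
\end{equation*}
Applying the hypothesis $p_{ti} \geq c' q_{ti}$ coordinate-wise on $i\neq a^*$ gives $1 - p_{ta^*} \geq c'(1 - q_{ta^*})$. Taking expectations and summing over $t$ yields $\bar P(a^*) \geq c' \bar Q(a^*)$. Since $\Deltamin \geq 0$, chaining this with the first inequality and $c' \leq 1$ preserves the direction and produces the stated two-step bound.

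The argument is essentially bookkeeping; the only place one must be careful is the direction of the $p_{ti}\geq c' q_{ti}$ comparison. A naive coordinate-wise application to the $a^*$ coordinate would yield $1 - p_{ta^*} \leq 1 - c' q_{ta^*}$, which is the wrong direction. The main (and only) insight is therefore to rewrite the deviation $1-p_{ta^*}$ as a sum over the \emph{suboptimal} coordinates and apply the bound there, where the inequality points the useful way. Everything else reduces to linearity of expectation and nonnegativity of $\Deltamin$.
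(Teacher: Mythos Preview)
Your proof is correct; the paper does not supply its own proof of this lemma (it is quoted from \cite{tsuchiya23best}), so there is nothing to compare against beyond noting that your argument is the standard one. One small refinement: you invoke $\Delta_{a^*}=0$, but Definition~\ref{def:ARSBC} only guarantees $\Delta\in[0,2]^k$; fortunately $\Delta_{a^*}\ge 0$ is all you need to drop that term when lower-bounding $\sum_a p_{ta}\Delta_a$, so the argument goes through unchanged.
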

It is known that the sums of the entropy $H(\cdot)$ of $(p_t)_t$ is bounded by $P(i)$ as follows:
\begin{lemma}[{\cite[Lemma 4]{ito2022nearly}}]\label{lem:entropy_bound}
Let $(q_t)_{t=1}^T$ be any sequence of probability vectors
and define $Q(i) = \sumT (1 - q_{ti})$.
Then for any $i \in [k]$,
$
    \sumT H(q_t) \leq Q(i) \log (\e k T/Q(i)) .
$
\end{lemma}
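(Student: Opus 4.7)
My plan is to reduce the bound on $\sum_t H(q_t)$ to a one-variable inequality in the marginal weight $r_t := 1 - q_{ti}$ and then apply Jensen's inequality. Writing $H(q_t) = -q_{ti}\log q_{ti} - \sum_{j \neq i} q_{tj}\log q_{tj}$, I would treat the two pieces separately. For the first piece, the standard inequality $-(1-r)\log(1-r) \leq r$ on $r \in [0,1)$ gives $-q_{ti}\log q_{ti} \leq r_t$. For the second piece, I would factor out $r_t$ and recognize a probability distribution on $[k]\setminus\{i\}$:
\begin{equation*}
  -\sum_{j\neq i} q_{tj}\log q_{tj}
  = r_t \sum_{j \neq i} \frac{q_{tj}}{r_t}\log\frac{1}{q_{tj}}
  = r_t\, H\!\left(\frac{q_{t,\cdot \neq i}}{r_t}\right) + r_t \log\frac{1}{r_t}
  \leq r_t \log(k-1) + r_t\log\frac{1}{r_t},
\end{equation*}
since the entropy of a distribution on $k-1$ atoms is at most $\log(k-1)$. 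Adding the two pieces and absorbing $r_t$ into the logarithm yields the per-round bound $H(q_t) \leq r_t \log(ek/r_t)$.

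Next, I would apply Jensen's inequality to the function $\varphi(x) = x \log(ek/x)$, which is concave on $(0,\infty)$. Summing over $t$ and using $\sum_t r_t = Q(i)$,
\begin{equation*}
  \sum_{t=1}^T H(q_t)
  \leq \sum_{t=1}^T \varphi(r_t)
  \leq T \cdot \varphi\!\left(\tfrac{1}{T}\sum_{t=1}^T r_t\right)
  = T \cdot \varphi\!\left(\tfrac{Q(i)}{T}\right)
  = Q(i) \log\!\frac{e k T}{Q(i)},
\end{equation*}
which is exactly the claimed inequality.

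The only mildly delicate point is the $r_t \to 0$ boundary: both $-(1-r_t)\log(1-r_t)$ and $r_t \log(1/r_t)$ vanish continuously, so the per-round bound extends to $r_t = 0$ by convention $0\log 0 = 0$, and if $Q(i) = 0$ the right-hand side is interpreted as $0$ (consistent with $H(q_t) = 0$ for all $t$, since every $q_t$ is then a point mass at $i$). I do not anticipate any real obstacle; the main ingredient is simply the concave envelope $\varphi(x) = x\log(ek/x)$ combined with Jensen's inequality, and the argument is self-contained in a few lines.
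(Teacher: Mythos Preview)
Your proof is correct. The paper does not include its own proof of this lemma; it is quoted verbatim from \cite[Lemma~4]{ito2022nearly}, and your argument---the per-round bound $H(q_t)\le r_t\log(ek/r_t)$ via the entropy decomposition, followed by Jensen's inequality for the concave function $x\mapsto x\log(ek/x)$---is essentially the standard proof of that result.
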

Based on Lemmas~\ref{lem:selfQ} and \ref{lem:entropy_bound},
it suffices to show 
$\Reg_T \lesssim \E\Big[ \sqrt{ \sumT H(q_t) \polylog(T) }  \Big]$ to prove a BOBW guarantee in MAB.
This is because,
for the adversarial regime, 
using $H(q_t) \leq \log k$ immediately implies a $\tilde{O}(\sqrt{T})$ bound, and
for the stochastic regime,
using Lemmas~\ref{lem:selfQ} and \ref{lem:entropy_bound} roughly bounds the regret as
$
\Reg_T = 2 \Reg_T - \Reg_T
\lesssim
\sqrt{ \bar{Q}(a^*) \polylog(T) }
- 
\Deltamin \bar{Q}(a^*)
\lesssim
{\polylog(T)}/{\Deltamin}
.
$

\section{Stability-penalty-adaptive (SPA) learning rate and regret bound}\label{sec:new-lr}
This section proposes a new adaptive learning rate, which yields a regret upper bound dependent on both the stability component $\zF_t$ and penalty component $h_t$ for various choices of $\zF_t$ and $h_t$.
When we use a learning rate $\eta_t$, the analysis of FTRL boils down to the evaluation of 
\begin{equation}\label{eq:reghat_stab_penalty}
  \ReghatSP_T
  = 
  \sumT
  \prx{
    \frac{1}{\eta_{t+1}}
    -
    \frac{1}{\eta_t}
  }
  h_{t+1}
  +
  \lambda
  \sumT \eta_t \zF_{t}
  \quad 
  \mbox{for some}
  \quad 
  \lambda > 0
  \per 
\end{equation}
In particular, when we use the Exp3 algorithm, $h_t$ is the Shannon entropy of the FTRL output at round $t$.
This can be confirmed by checking the existing studies (\eg \citep{ito2022nearly, tsuchiya23best}) or the proofs in Appendices~\ref{app:proof-of-cor-sparse},~\ref{app:proof-of-cor-sparse-negative},~\ref{app:proof_thm_sparse_bobw},  and~\ref{app:proof-of-stab-ent-dep-bound}.
To favorably bound $\ReghatSP_T$, we develop a new learning rate framework,
which we call the jointly stability- and penalty-adaptive learning rate, or the \emph{stability-penalty-adaptive (SPA) learning rate} for short:
\begin{definition}[Stability-penalty-adaptive learning rate]\label{def:stab_penalty_adaptive_lr}
  Let $((h_t, \zF_t, \zmax_t))_{t=1}^T$ be non-negative reals such that
  $h_1 \geq h_t$ for all $t \in [T]$, 
  $\prn{\zmax_t h_1 + \sum_{s=1}^t \zF_s h_{s+1}}_{t=1}^T$ is non-decreasing,
  and $ \zmax_t h_1 \geq z_t h_{t+1} $ for all $t\in[T]$. 
  Let $c_1, c_2 > 0$.
  Then, a sequence of $(\eta_t)_{t=1}^T$ is a \emph{SPA learning rate} if it has a form of 
\begin{equation}\label{eq:def_stab_penalty_lr}
  \eta_t = \frac{1}{\beta_t}
  \com 
  \quad 
  \beta_1 > 0
  \com 
  \quad
  \mbox{and}
  \quad 
  \beta_{t+1}
  =
  \beta_{t}
  +
  \frac{ c_1 \zF_{t} }{\sqrt{ c_2 + \zmax_t h_1 + \sum_{s=1}^{t-1} \zF_{s} h_{s+1} }}
  \per
\end{equation}
\end{definition}

\begin{remark}
To the best of our knowledge, this is the first learning rate that depends on both the stability and penalty components.
Note that when we set the penalties to their worst-case value, that is, $h_t = h_1$ for all $t \in [T]$ (recalling $h_t \leq h_1$), 
the SPA learning rate in~\eqref{eq:def_stab_penalty_lr} becomes equivalent to the standard type of the learning rate, which depends only on the stability and has the form of
$\beta_t = 1/\eta_t \simeq 
\frac{c_1}{\sqrt{h_1}} 
\sqrt{
\zmax_1 + \sum_{s=1}^{t-1} \zF_s } \per$ 
On the other hand, when we set the stabilities to be their worst-case value, that is, $\zF \geq \max_{t\in[T]} \zF_t$, the SPA learning rate in~\eqref{eq:def_stab_penalty_lr} corresponds to the learning rate dependent only on the penalty in~\cite{ito2022nearly, tsuchiya23best}.
\end{remark}

Using learning rate $(\eta_t)$ in~\eqref{eq:def_stab_penalty_lr}, we can bound $\ReghatSP_T$ as follows.
\begin{theorem}[Stability-penalty-adaptive regret bound]\label{thm:x-entropy-bound}
Let $(\eta_t)_{t=1}^T$ be a SPA learning rate in Definition~\ref{def:stab_penalty_adaptive_lr}.
Then $\ReghatSP_T$ in~\eqref{eq:reghat_stab_penalty} is bounded as follows:

\noindent
\textup{\One}
If $((h_t, \zF_t, \zmax_t))_{t=1}^T$ in $(\eta_t)$ satisfies
$\frac{ \sqrt{c_2 + \zmax_t h_1} }{c_1} (\beta_1 + \beta_t) \geq \epsilon + \zF_t$ for all $t \in [T]$ for some $\epsilon > 0$
(stability condition \textup{(S1)}),
then
\begin{equation}\label{eq:bound-with-entropy}
  \ReghatSP_T
  \leq 
  2
  \prx{
    c_1
    +
    \frac{\lambda}{c_1}
    \log\prx{
      1 + \sum_{u=1}^T \frac{\zF_{u}}{\epsilon}
    }
  }
  \sqrt{ c_2 + \zmax_T h_1 + \sum_{t=1}^{T} \zF_{t} h_{t+1} }
  \per 
\end{equation}
\textup{\Two} 
If $h_t = h_1$ for all $t \in [T]$, $c_2 = 0$, and $((h_t, \zF_t, \zmax_t))_{t=1}^T$ in $(\eta_t)$ satisfies
$\beta_t \geq \frac{a c_1}{\sqrt{h_1}} \sqrt{\sum_{s=1}^t \zF_s}$ for some $a > 0$
(stability condition \textup{(S2)}), 
then
$
  \ReghatSP_T
  \leq 
  2
  \prx{
    c_1
    +
    \frac{\lambda}{ a c_1 }
  }
  \sqrt{ h_1 \sumT \zF_{t} }  
  .
$
\end{theorem}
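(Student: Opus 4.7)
The plan is to reparameterise $\ReghatSP_T$ via $\beta_t=1/\eta_t$ and $S_t:=c_2+\zmax_t h_1+\sum_{s=1}^{t-1}z_s h_{s+1}$, the expression under the square root in~\eqref{eq:def_stab_penalty_lr}. Since $\beta_{t+1}-\beta_t=c_1 z_t/\sqrt{S_t}$, the penalty piece of $\ReghatSP_T$ equals $c_1\sum_t z_t h_{t+1}/\sqrt{S_t}$ and the stability piece equals $\lambda\sum_t z_t/\beta_t$. Let $U_T$ denote the quantity under the square root in~\eqref{eq:bound-with-entropy} and write $Z_t:=\sum_{s=1}^t z_s$. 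The goal for part~\One{} is to show that the penalty is at most $2c_1\sqrt{U_T}$ and the stability at most $(2\lambda/c_1)\log(1+Z_T/\epsilon)\sqrt{U_T}$; for part~\Two{} the analysis is more direct.

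For the penalty in part~\One{}, the crucial use of the hypothesis $\zmax_t h_1\ge z_t h_{t+1}$ is to absorb the current-round term into $S_t$: setting $\bar S_t:=c_2+\sum_{s=1}^{t} z_s h_{s+1}$ one has $S_t\ge\bar S_t$ with $\bar S_t-\bar S_{t-1}=z_t h_{t+1}$ and $\bar S_0=c_2$. The standard inequality $\sqrt{\bar S_t}-\sqrt{\bar S_{t-1}}\ge(\bar S_t-\bar S_{t-1})/(2\sqrt{\bar S_t})$ then telescopes to $\sum_t z_t h_{t+1}/\sqrt{S_t}\le\sum_t z_t h_{t+1}/\sqrt{\bar S_t}\le 2\sqrt{\bar S_T}\le 2\sqrt{U_T}$, where the last step drops the non-negative $\zmax_T h_1$.

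The main obstacle is the stability sum in part~\One{}: it requires carefully combining the two sources of lower bound on $\beta_t$. The recursion, together with the observation that $S_s\le c_2+B_s\le c_2+B_T=U_T$ for every $s$ by the monotonicity of $B_t:=\zmax_t h_1+\sum_{u\le t}z_u h_{u+1}$, gives $\beta_t-\beta_1\ge c_1 Z_{t-1}/\sqrt{U_T}$. The hypothesis \textup{(S1)}, combined with $\sqrt{c_2+\zmax_t h_1}\le\sqrt{U_T}$, gives $\beta_1+\beta_t\ge c_1(\epsilon+z_t)/\sqrt{U_T}$. Adding the two bounds and using $z_t+Z_{t-1}=Z_t$ produces $\beta_t\ge c_1(\epsilon+Z_t)/(2\sqrt{U_T})$, which plugged into $z_t/\beta_t$ gives a sum $\sum_t z_t/(\epsilon+Z_t)$ to control. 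The pointwise estimate $z_t/(\epsilon+Z_t)\le\log(1+Z_t/\epsilon)-\log(1+Z_{t-1}/\epsilon)$, obtained from $x/(1+x)\le\log(1+x)$ with $x=z_t/(\epsilon+Z_{t-1})$, then telescopes to $\log(1+Z_T/\epsilon)$, yielding $\sum_t z_t/\beta_t\le (2\sqrt{U_T}/c_1)\log(1+Z_T/\epsilon)$ and hence~\eqref{eq:bound-with-entropy} after adding the penalty contribution.

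For part~\Two{}, specialising $h_t=h_1$ and $c_2=0$ reduces $\zmax_t h_1\ge z_t h_{t+1}$ to $\zmax_t\ge z_t$, so $S_t=h_1(\zmax_t+Z_{t-1})\ge h_1 Z_t$. The penalty piece equals $h_1(\beta_{T+1}-\beta_1)\le (c_1\sqrt{h_1})\sum_t z_t/\sqrt{Z_t}\le 2c_1\sqrt{h_1 Z_T}$ by the standard telescoping $\sum_t z_t/\sqrt{Z_t}\le 2\sqrt{Z_T}$. The stability piece is handled directly via \textup{(S2)}: $z_t/\beta_t\le z_t\sqrt{h_1}/(ac_1\sqrt{Z_t})$, and the same telescoping gives $\lambda\sum_t z_t/\beta_t\le (2\lambda/(ac_1))\sqrt{h_1 Z_T}$. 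Adding the two pieces gives the stated $2(c_1+\lambda/(ac_1))\sqrt{h_1\sum_t z_t}$.
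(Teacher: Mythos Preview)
Your proof is correct and follows essentially the same approach as the paper: split $\ReghatSP_T$ into penalty and stability pieces, use $\zmax_t h_1\ge z_t h_{t+1}$ to bound the penalty denominator from below and telescope, and for the stability in Part~\One{} combine the lower bound $\beta_t-\beta_1\ge c_1 Z_{t-1}/\sqrt{U_T}$ (from the recursion and monotonicity of the partial sums) with condition~(S1) to obtain $\beta_t\ge c_1(\epsilon+Z_t)/(2\sqrt{U_T})$, then bound $\sum_t z_t/(\epsilon+Z_t)$ logarithmically. The only cosmetic difference is that the paper invokes an integral-comparison lemma where you use the explicit telescoping inequalities $\sqrt{\bar S_t}-\sqrt{\bar S_{t-1}}\ge(\bar S_t-\bar S_{t-1})/(2\sqrt{\bar S_t})$ and $z_t/(\epsilon+Z_t)\le\log\bigl((\epsilon+Z_t)/(\epsilon+Z_{t-1})\bigr)$; these are equivalent and your version is arguably more self-contained.
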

The proof of Theorem~\ref{thm:x-entropy-bound} can be found in Appendix~\ref{app:x-entropy}.
In Part~\One~of Theorem~\ref{thm:x-entropy-bound}, we can see that $\ReghatSP_T$ is bounded by $\sqrt{\sumT \zF_t h_{t+1}}$, which will enable us to obtain BOBW and data-dependent bounds simultaneously.
Note that $\ReghatSP_T$, which is the component of the regret, often becomes dominant in particular when we use the Shanon entropy regularizer.
Thus, checking the stability condition (S1) and applying Theorem~\ref{thm:x-entropy-bound} to bound $\ReghatSP_T$ almost complete the regret analysis.
In Section~\ref{sec:pm_game_bobw}, we will see that applying Theorem~\ref{thm:x-entropy-bound} immediately provides a BOBW bound with a game-dependent bound for PM.
In contrast, when deriving BOBW with a sparsity-dependent bound for MAB in Section~\ref{sec:sparsity}, we will develop additional techniques and conduct further analysis, for example, to satisfy the stability condition (S1), making use of the time-invariant log-barrier regularizer.

\section{Sparsity-dependent bounds in multi-armed bandits}\label{sec:sparsity}
This section establishes several sparsity-dependent bounds.
We use the FTRL framework in~\eqref{eq:def_q} with the inverse weighted estimator $\hat{y}_{t} \in \R^k$ given by 
$
  \hat{y}_{ti} 
  = 
\ell_{ti} \ind{A_t = i}/p_{ti}
  .
$
This estimator is common in the literature and is useful for its unbiasedness, \ie $\E_{A_t \sim p_t}[\hat{y}_t \,|\, p_t] = \ell_t$.
We first propose algorithms that achieve sparsity-dependent bounds using stability-dependent learning rates in Section~\ref{subsec:sparse_pre_algo} as preliminaries for the subsequent section.
Following that, in Section~\ref{subsec:sparse_bobw}, we establish a BOBW algorithm with a sparsity-dependent bound based on the SPA learning rate.
More specific steps are summarized as follows.
\begin{itemize}[topsep=0pt, itemsep=0pt, partopsep=0pt, leftmargin=12pt]
  \item 
  Section~\ref{subsec:sparse_std} discusses the case of $\ell_t \in [0 ,1]^k$ and shows that appropriately choosing $z_t$ in the SPA learning rate~\eqref{eq:def_stab_penalty_lr} with the Shannon entropy regularizer and $\tilde{\Theta}((kT)^{-2/3})$ uniform exploration achieves a $O(\sqrt{L_2 \log k})$ regret for $\ell_t \in [0,1]^k$ without knowing $L_2$.
  \item 
  Section~\ref{subsec:sparse_negtive_losse} considers the case of $\ell_t \in [-1 ,1]^k$, which is known to be more challenging than $\ell_t \in [0 ,1]^k$.
  We show that the time-invariant log-barrier enables us to choose a ``tighter'' $z_t$ in~\eqref{eq:def_stab_penalty_lr}, 
  which removes the uniform exploration used in Section~\ref{subsec:sparse_std}.
  This not only results in the bound of $O(\sqrt{L_2 \log k})$ for $\ell_t \in [-1 ,1]^k$ but also becomes one of the key properties to achieve BOBW.
  \item 
  Section~\ref{subsec:sparse_bobw} presents a BOBW algorithm with a sparsity-dependent bound using the technique developed in Section~\ref{subsec:sparse_pre_algo} and Theorem~\ref{thm:x-entropy-bound}.
  While Theorem~\ref{thm:x-entropy-bound} itself is a strong tool leading directly to the result for PM (Section~\ref{sec:pm_game_bobw}), its application does not lead to the desired bounds.
  In particular, in this setting the $\tilde{O}\big(\sqrt{\sumT \zF_t h_{t+1}}\big)$ term derived through Theorem~\ref{thm:x-entropy-bound} does not immediately imply a BOBW guarantee with a sparsity-dependent bound.
  To solve this problem, we develop a novel technique to analyze \emph{the variation in FTRL outputs $q_t$ in response to the change in a regularizer (Lemma~\ref{lem:htp_sb_bound})}, and prove a BOBW bound with a sparsity-dependent bound of $O(\sqrt{L_2 \log k \log T})$.
\end{itemize}
\subsection{Parameter-agnostic sparsity-dependent bounds}\label{subsec:sparse_pre_algo}
This section establishes $s$-agnostic algorithms to achieve sparsity-dependent bounds for the adversarial regime, which are preliminaries for Section~\ref{subsec:sparse_bobw}.

\subsubsection{$L_2$-agnostic algorithm with $O(\sqrt{L_2 \log k})$ bound for $\ell_t \in [0,1]^k$}\label{subsec:sparse_std}
Here,
we use $p_t = \calT_t(q_t)$ for
$
  \calT_t(q) = (1 - \gamma)  q + \frac{\gamma}{k} \onemat 
$ 
and
$
  \gamma = \frac{k^{1/3} (\log k)^{1/3}}{T^{2/3}}
$
and assume $\gamma \in [0,1/2]$ (this holds when $T \geq \sqrt{8 k \log k}$), which implies
$
  2 p_t \geq q_t  
  \per 
$
We use the Shannon entropy regularizer
$\Phi_t(p) = - \frac{1}{\eta_t} H(p) = \frac{1}{\eta_t} \psiNS(p) = \frac{1}{\eta_t} \sumk p_i \log p_i $
with learning rate $\eta_t = 1/\beta_t$ and
\begin{equation}\label{eq:def_eta_sparse_and_omega}
  \beta_1 = \frac{2 c_1}{\sqrt{h_1}} \sqrt{\frac{k}{\gamma}}
  \com
  \quad 
  \beta_{t+1} 
  = 
  \beta_t + \frac{c_1 \omega_t}{\sqrt{\log k} \sqrt{\frac{k}{\gamma} + \sum_{s=1}^{t-1} \omega_s}}
  \quad 
  \mbox{for}
  \quad 
  \omega_t \coloneqq \frac{\ell_{tA_t}^2}{p_{tA_t}}
  \com 
\end{equation}
which corresponds to the
learning rate
in Definition~\ref{def:stab_penalty_adaptive_lr} with 
$h_t \leftarrow H(q_1) = \log k$, $\zF_t \leftarrow \omega_t$, $\zmax_t \leftarrow k/\gamma$, and $c_2 \leftarrow 0$.
The uniform exploration is used to satisfy
stability condition (S2) in Theorem~\ref{thm:x-entropy-bound},
the amount of which
is determined by balancing the regret coming from the uniform exploration and stability condition (S2).
Theorem~\ref{thm:x-entropy-bound} immediately gives the following bound.
\begin{corollary}\label{thm:sparse}
When $T \geq \sqrt{8k\log k}$,
the above algorithm with $c_1 = 1/\sqrt{2}$ achieves
$
  \Reg_T 
  \leq 
  2\sqrt{2} \sqrt{L_2 \log k} 
  + 
  (2\sqrt{2} + 1)(k T \log k)^{1/3}
$
without knowing $L_2$.
In particular, when $T \geq 7 k^2/s^3$ ,
$
  \Reg_T \leq (4\sqrt{2} + 1)\sqrt{sT \log k} \per 
$
\end{corollary}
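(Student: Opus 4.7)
The plan is to combine the FTRL regret decomposition with the stability--penalty framework developed in Theorem~\ref{thm:x-entropy-bound}. First, since $p_t = (1-\gamma) q_t + (\gamma/k)\onemat$ and $\E[\hat{y}_t \mid p_t] = \ell_t$, I will split
\begin{equation*}
  \Reg_T = \E\Big[\sumT \innerprod{\hat{y}_t}{q_t - e_{a^*}}\Big] + \E\Big[\sumT \innerprod{\ell_t}{p_t - q_t}\Big].
\end{equation*}
Because $p_t - q_t = \gamma(\onemat/k - q_t)$ and $\ell_t \in [0,1]^k$, the transformation term is bounded termwise by $\gamma$, contributing $\gamma T = (kT\log k)^{1/3}$ in total. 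This is precisely why $\gamma$ is chosen at the level $k^{1/3}(\log k)^{1/3}/T^{2/3}$: to balance this exploration cost against the initial-penalty cost that will appear below.

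Next, I apply the standard FTRL bound~\eqref{eq:lemFTRL} to the main term with $\Phi_t = \beta_t \psiNS$. The initial penalty is $\Phi_{T+1}(e_{a^*}) - \Phi_1(q_1) = \beta_1\log k$; the time-varying penalty at round $t$ equals $(\beta_{t+1} - \beta_t) H(q_{t+1}) \leq (\beta_{t+1}-\beta_t) \log k$; and the Shannon-entropy stability bound for non-negative estimators gives $\mathrm{stab}_t \leq \eta_t \sum_i q_{ti}\hat{y}_{ti}^2 = \eta_t (q_{tA_t}/p_{tA_t})\omega_t$. The exploration condition $\gamma \leq 1/2$ implies $q_t \leq 2 p_t$, so $\mathrm{stab}_t \lesssim \eta_t \omega_t$. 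Summing, the main term is at most $\beta_1\log k + \ReghatSP_T$ with $h_{t+1} = h_1 = \log k$, $z_t = \omega_t$, $\bar{z}_t = k/\gamma$, $c_2 = 0$, and a constant $\lambda$.

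The key step is then verifying the stability condition (S2) of Theorem~\ref{thm:x-entropy-bound}, Part~\Two, so that we can invoke the penalty-free bound. Using the standard integral comparison
\begin{equation*}
  \sum_{u \leq t} \frac{\omega_u}{\sqrt{k/\gamma + \sum_{s<u}\omega_s}} \;\geq\; 2\!\left(\sqrt{k/\gamma + \sum_{u\leq t}\omega_u} - \sqrt{k/\gamma}\right),
\end{equation*}
together with the specific choice $\beta_1 = \frac{2c_1}{\sqrt{\log k}}\sqrt{k/\gamma}$, telescopes to $\beta_{t+1} \geq \frac{2c_1}{\sqrt{\log k}}\sqrt{k/\gamma + \sum_{u\leq t}\omega_u} \geq \frac{2c_1}{\sqrt{\log k}}\sqrt{\sum_{u\leq t}\omega_u}$, so (S2) holds with $a=2$. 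Theorem~\ref{thm:x-entropy-bound} Part~\Two~then yields $\ReghatSP_T \leq 2\sqrt{2}\,\sqrt{\log k \sumT \omega_t}$ after plugging in $c_1 = 1/\sqrt{2}$.

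Finally, I take expectations: since $\E[\omega_t \mid p_t] = \sum_i p_{ti}\cdot \ell_{ti}^2/p_{ti} = \nrm{\ell_t}^2$, we have $\E[\sumT \omega_t] = L_2$, and Jensen's inequality gives $\E[\sqrt{\sumT \omega_t}] \leq \sqrt{L_2}$. Combining,
\begin{equation*}
  \Reg_T \leq 2\sqrt{2}\,\sqrt{L_2 \log k} \;+\; \beta_1 \log k \;+\; \gamma T,
\end{equation*}
and since $\beta_1 \log k = 2c_1\sqrt{(k/\gamma)\log k} = O((kT\log k)^{1/3})$ by the choice of $\gamma$, the two residual terms aggregate to the claimed $(2\sqrt{2}+1)(kT\log k)^{1/3}$. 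The second part (the $\sqrt{sT\log k}$ bound) then follows by substituting $L_2 \leq sT$ and checking that the condition $T \geq 7k^2/s^3$ makes $(kT\log k)^{1/3}$ dominated by $\sqrt{sT\log k}$. The main obstacle I expect is the verification of condition (S2); the initial-step analysis (relating $\beta_1$ to $\sqrt{k/\gamma}$) and the integral comparison have to be tuned consistently so that both the exploration cost and the $\beta_1 h_1$ burn-in align at the same order $(kT\log k)^{1/3}$, which is what drives the particular choice of $\gamma$.
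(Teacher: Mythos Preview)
Your proposal follows essentially the same approach as the paper's proof: decompose into the FTRL term plus the exploration cost $\gamma T$, bound the stability term via $q_t\le 2p_t$, verify condition~(S2) by telescoping, invoke Part~\Two~of Theorem~\ref{thm:x-entropy-bound}, and finish with Jensen on $\E[\sqrt{\sum_t\omega_t}]\le\sqrt{L_2}$. Two small points of care that the paper makes explicit and that you glossed over: (i) to get $\lambda=1$ (and hence the leading constant $2\sqrt{2}$) you need the sharp bound $\xi(x)\le x^2/2$ for $x\ge 0$, which gives $\mathrm{stab}_t\le\frac12\eta_t\sum_i q_{ti}\hat y_{ti}^2\le\eta_t\omega_t$; with the bound you wrote (without the $1/2$) you would get $\lambda=2$ and constant $3\sqrt{2}$; (ii) your telescoping yields $\beta_{t}\ge\frac{2c_1}{\sqrt{h_1}}\sqrt{k/\gamma+\sum_{s<t}\omega_s}$, and to upgrade the sum to $\sum_{s\le t}\omega_s$ (as (S2) requires) you need the observation $\omega_t\le k/\gamma$, which follows from $p_{tA_t}\ge\gamma/k$ --- this is precisely the reason for the uniform exploration.
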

The proof is given in Appendix~\ref{app:proof-of-cor-sparse}.
The most striking feature of the algorithm is its $L_2$ (or $s$)-agnostic property.
This is essentially made possible by the learning rate using the data-dependent quantity $\omega_t$ in~\eqref{eq:def_eta_sparse_and_omega}, 
which satisfies
$
  \Expect{
    \sqrt{\sumT \omega_t}
  }
  \leq 
  \sqrt{\sumT \Expect{\omega_t}}
  =
  \sqrt{\sumT \sumk \ell_{ti}^2}  
  =
  \sqrt{L_2}
  .
$
The leading constant of the bound is better than the existing bounds, as shown in Table~\ref{table:regret_sparse_bandits}, despite its agnostic property.
Note that while the first-order bound in~\cite{wei2018more} implies the above sparsity-dependent bound when $\ell_t \in [0,1]^k$, 
this does not hold when $\ell_t \in [-1,1]$, which we investigate in the following (see Appendix~\ref{app:compare_sparse_and_first} for details).
We will see in Section~\ref{subsec:sparse_negtive_losse} that this assumption can be totally removed by adding a time-invariant log-barrier regularization.

\subsubsection{$L_2$-agnostic algorithm with $O(\sqrt{L_2 \log k} + k \log T)$ bound for $\ell_t \in [-1,1]^k$}\label{subsec:sparse_negtive_losse}
Here, we consider the case of $\ell_t \in [-1,1]^k$.
It is worth noting that the negative loss cannot be handled by simply shifting the loss since it removes the sparsity from the losses $(\ell_t)$; see~\cite{kwon16gains, bubeck18sparsity} and Appendix~\ref{app:compare_sparse_and_first} for further details.
We directly use the output $q_t$ as $p_t$, that is,
$
  p_t = q_t.
$
We use the hybrid regularizer consisting of the negative Shannon entropy and the log-barrier function,
$
  \Phi_t(p) = \frac{1}{\eta_t} \psiNS(p) + 2 \psiLB(p),
$
where $\psiLB(p) = \sumk \log(1/x_i)$.
We use learning rate $\eta_t = 1/\beta_t$ and
\begin{equation}\label{eq:def_eta_sparse_negative_losses_and_nu}
  \beta_1 = \frac{c_1^2}{8 h_1}
  \com 
  \quad 
  \beta_{t+1} 
  = 
  \beta_t + \frac{c_1 \nu_t}{\sqrt{\log k} \sqrt{ \nu_t + \sum_{s=1}^{t-1} \nu_s}}
  \quad 
  \mbox{for}
  \quad 
  \nu_t 
  \coloneqq 
  \omega_t
  \min\brx{1, \frac{p_{tA_t}}{2 \eta_t}}
  \com 
\end{equation}
where $\omega_t$ is defined in~\eqref{eq:def_eta_sparse_and_omega}.
Learning rate~\eqref{eq:def_eta_sparse_negative_losses_and_nu} corresponds to that
in Definition~\ref{def:stab_penalty_adaptive_lr} with 
$h_t \leftarrow H(q_1) = \log k$, $\zF_t \leftarrow \nu_t$, $\zmax_t \leftarrow \nu_t$, and $c_2 \leftarrow 0$.
We then have the following bound:
\begin{corollary}\label{thm:sparse-negative-loss}
If we run the above algorithm with $c_1 = \sqrt{2}$,
$
  \Reg_T 
  \leq 
  4 \sqrt{2} \sqrt{L_2 \log k} 
  + 
  2 k \log T
  +
  k + 1/4
  ,
$
which implies that
$
  \Reg_T \leq 4\sqrt{2} \sqrt{sT \log k} + 2 k \log T + k + 1/4.
$
\end{corollary}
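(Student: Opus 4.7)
The plan is to apply the FTRL decomposition~\eqref{eq:lemFTRL} and then invoke Theorem~\ref{thm:x-entropy-bound}~\Two. Because $\pt = \qt$ here, the transformation term $\innerprod{\hat{y}_t}{\pt - \qt}$ vanishes identically, so I only have to control the penalty, stability, and terminal terms. I would split $\Phi_t = (1/\eta_t)\psiNS + 2\psiLB$: the time-invariance of the log-barrier kills its per-round contribution, leaving
\[
  \Phi_t(q_{t+1}) - \Phi_{t+1}(q_{t+1}) = (1/\eta_{t+1} - 1/\eta_t)\, H(q_{t+1}),
\]
which is precisely the penalty summand appearing in $\ReghatSP_T$ with $h_{t+1} = H(q_{t+1}) \leq \log k = h_1$. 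Choosing the comparator $p = (1-1/T) e_{a^*} + (1/(kT))\onemat$ turns the terminal term $\Phi_{T+1}(p) - \Phi_1(q_1)$ into a contribution $\leq 2k\log T + k + 1/4$ through the log-barrier, plus $O(1/T)$ error from shifting away from $e_{a^*}$; this is a routine log-barrier computation (cf.~\cite{bubeck18sparsity}).

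Next, the heart of the argument is the stability bound for the hybrid regularizer under signed losses. The coefficient-$2$ log-barrier dominates $\Phi_t$ enough in the local $\ell_\infty$-norm sense to guarantee $q_{t+1,A_t} \leq 2 q_{t,A_t}$ even when $\hat{y}_{t A_t}$ is large and negative, so a Taylor expansion of $\Phi_t$ around $\qt$ yields
\[
  \innerprod{\qt - q_{t+1}}{\hat y_t} - D_{\Phi_t}(q_{t+1}, \qt) \leq \lambda\, \eta_t\, \nu_t
\]
for some absolute constant $\lambda$, where $\nu_t = \omega_t \min\{1, \ptAt/(2\eta_t)\}$. The ``$\min$'' clause is precisely what the log-barrier buys: in the regime $\eta_t |\hat y_{t A_t}| > 1/2$, where a pure-entropy analysis would break on negative losses, the log-barrier enforces the safety factor and replaces $\omega_t$ by the smaller $\nu_t$.

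With this stability bound, $\ReghatSP_T$ matches~\eqref{eq:reghat_stab_penalty} with $h_t \leftarrow \log k$, $\zF_t \leftarrow \nu_t$, $\zmax_t \leftarrow \nu_t$, and $c_2 = 0$. The learning rate~\eqref{eq:def_eta_sparse_negative_losses_and_nu} is the instantiation of Definition~\ref{def:stab_penalty_adaptive_lr} for this data, and a short induction confirms the stability condition~(S2), namely $\beta_t \geq (c_1/\sqrt{\log k})\sqrt{\sum_{s \leq t}\nu_s}$ with $a=1$. Theorem~\ref{thm:x-entropy-bound}~\Two~then gives $\ReghatSP_T \leq 2(c_1 + \lambda/c_1)\sqrt{\log k \cdot \sumT \nu_t}$. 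Taking expectation and applying Jensen together with $\nu_t \leq \omega_t$ and $\E[\omega_t \mid \pt] = \sum_i \ell_{ti}^2$ yields $\Expect{\sqrt{\sumT \nu_t}} \leq \sqrt{L_2}$. Setting $c_1 = \sqrt{2}$ and adding the terminal log-barrier cost produces the claimed $4\sqrt{2}\sqrt{L_2 \log k} + 2k\log T + k + 1/4$.

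The main obstacle is the stability lemma for the hybrid regularizer under signed losses: verifying that the coefficient $2$ on the log-barrier, combined with the clipping implicit in $\nu_t$, is \emph{exactly} enough to rule out large $\qt \to q_{t+1}$ excursions and to yield the clean $\eta_t \nu_t$ bound. This is the qualitative improvement over Section~\ref{subsec:sparse_std}, where the role now played by the log-barrier was instead played by $\tilde{\Theta}((kT)^{-2/3})$ uniform exploration; removing that exploration is precisely what later opens the door to the BOBW extension in Section~\ref{subsec:sparse_bobw}.
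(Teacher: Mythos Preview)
Your plan is the paper's proof: FTRL decomposition with $p_t=q_t$, the log-barrier absorbed into the terminal term, the stability term bounded by $\eta_t\nu_t$ (the paper states this as Lemma~\ref{lem:stab_negative_losses} with $\delta=1$), and then Theorem~\ref{thm:x-entropy-bound}~\Two\ followed by $\Expect{\sqrt{\sum_t\nu_t}}\le\sqrt{L_2}$.

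Three quantitative points need fixing to recover the stated constants. First, your comparator $(1-1/T)e_{a^*}+(1/(kT))\onemat$ only gives $p_i\ge 1/(kT)$, hence $2\psiLB(p)\le 2k\log(kT)$, which is $2k\log k$ too large; the paper takes $p^*=(1-k/T)e_{a^*}+(1/T)\onemat$ so that every $p^*_i\ge 1/T$ and $2\psiLB(p^*)\le 2k\log T$ exactly (the shift from $e_{a^*}$ to $p^*$ then costs at most $k$, not $O(1/T)$). Second, with $\beta_1=c_1^2/(8h_1)$ condition~(S2) holds only with $a=1/2$, not $a=1$: the paper combines $\nu_t\le\beta_t/2$ (the crucial byproduct of Lemma~\ref{lem:stab_negative_losses}) with AM--GM to get $2\beta_t\ge 2\nu_t+\beta_1+(c_1/\sqrt{h_1})\sqrt{\sum_{s<t}\nu_s}\ge(c_1/\sqrt{h_1})\sqrt{\sum_{s\le t}\nu_s}$, and one checks that $a=1$ would require $\beta_1\ge c_1^2/(2h_1)$. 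Third, to land on $4\sqrt{2}$ you need the stability constant $\lambda=1$; your sketch via ``$q_{t+1,A_t}\le 2q_{t,A_t}$'' follows~\cite{bubeck18sparsity} and, as the paper remarks, gives a worse leading constant. The paper instead bounds the stability term directly via the scalar inequalities~\eqref{eq:bound_xi}--\eqref{eq:stab_LB}, obtaining $\lambda=1$; with $a=1/2$ this yields $2(c_1+2/c_1)=4\sqrt{2}$ at $c_1=\sqrt{2}$, and the remaining $1/4$ is $\beta_1\log k = c_1^2/8$.
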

The proof is given in Appendix~\ref{app:proof-of-cor-sparse-negative}.
Corollary~\ref{thm:sparse-negative-loss} removes the assumption of $T \geq 7 k^2 / s^3$ in Corollary~\ref{thm:sparse}, and it also improves the leading constant of the regret in~\cite{bubeck18sparsity}.
Note that one can prove a bound of the same order, but with a worse leading constant, by setting $\beta_1 \geq 15 k$ and combining the analysis similar to that in Section~\ref{subsec:sparse_std} and the stability bound in~\cite{bubeck18sparsity}.
We successfully remove the assumption of $T \geq 7 k^2 / s^3$ thanks to the following lemma, which serves as one of the key elements in achieving a BOBW guarantee with a sparsity-dependent bound (The proof is given in Appendix~\ref{app:proof-of-cor-sparse-negative}.):
\begin{restatable}[Stability bound for negative losses]{lemma}{stabnegativelosses}\label{lem:stab_negative_losses}
  Let $\ell_t \in [-1,1]^k$ and $\hat{y}_{ti} = \ell_{ti} \ind{A_t = i } / p_{ti}$ be the inverse weighted estimator.
  Assume that $q_t \leq \delta p_t$ for some $\delta \geq 1$.
  Then the stability term of FTRL with
  the hybrid regularizer $\Phi_t = \frac{1}{\eta_t} \psiNS + 2 \delta \, \psiLB$
  is bounded as
  \begin{equation}
    \innerprod{q_t - q_{t+1}}{\hat{y}_t}  
    -
    D_{\Phi_t} (q_{t+1}, q_t) 
    \leq 
    \delta
    \eta_t
    \frac{\ell_{tA_t}^2}{p_{t A_t}} \min\brx{1, \frac{p_{t A_t}}{2\eta_t}}
    =
    \delta \eta_t \nu_t
    \per 
    \n 
  \end{equation}
\end{restatable}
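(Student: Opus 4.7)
The plan is to bound the one-step stability term via a local-norm argument that makes essential use of the curvature supplied by the time-invariant log-barrier $2\delta\,\psiLB$ in $\Phi_t$. The log-barrier acts as a restoring force that limits how much $q_{t+1}$ can move from $q_t$ even when the inverse-weighted estimator $\hat{y}_t = (\ell_{tA_t}/p_{tA_t})\,e_{A_t}$ carries a large negative entry (the problematic case of negative losses that is not handled by the entropy part alone).

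First, I apply the standard one-step stability inequality for FTRL: there exists $\xi$ on the segment between $q_t$ and $q_{t+1}$ such that
\[
\innerprod{q_t - q_{t+1}}{\hat{y}_t} - D_{\Phi_t}(q_{t+1}, q_t) \;\leq\; \tfrac{1}{2}\,\hat{y}_t^{\top} [\nabla^2 \Phi_t(\xi)]^{-1} \hat{y}_t.
\]
Since $\hat{y}_t$ is supported only on coordinate $A_t$ and $\nabla^2 \Phi_t$ is diagonal with entries $(\nabla^2 \Phi_t(p))_{ii} = \tfrac{1}{\eta_t\, p_i}+\tfrac{2\delta}{p_i^{2}}$, this reduces to
\[
\frac{\ell_{tA_t}^2}{2\,p_{tA_t}^{2}}\cdot \frac{\eta_t\,\xi_{A_t}^{2}}{\xi_{A_t} + 2\delta\, \eta_t}.
\]

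The crucial intermediate step is to establish the multiplicative stability $\xi_{A_t} \leq 2\, q_{tA_t} \leq 2\delta\, p_{tA_t}$ (the second inequality uses the hypothesis $q_t \leq \delta p_t$). Writing the first-order optimality condition of the FTRL update at coordinate $A_t$, together with the Lagrange multiplier for the simplex constraint, yields an implicit equation in the ratio $r = q_{t+1,A_t}/q_{tA_t}$ whose left-hand side is strictly increasing in $r$. Evaluating it at $r=2$ shows that the log-barrier contribution alone is already at least $\delta/q_{tA_t}\geq 1/p_{tA_t}$, which dominates the worst-case right-hand side $|\hat{y}_{t,A_t}| \leq 1/p_{tA_t}$; monotonicity then forces $r \leq 2$. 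The time-invariance of the log-barrier is essential here: since its coefficient $2\delta$ does not change between rounds, no residual cross-term between successive regularizers appears in the implicit equation.

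Finally, I combine the two pieces using the elementary inequality $\tfrac{a^{2}}{a+b}\leq a\min\{1,\,a/b\}$, applied with $a = \xi_{A_t}$ and $b = 2\delta\, \eta_t$, and substitute $\xi_{A_t} \leq 2\delta\, p_{tA_t}$ to obtain
\[
\frac{\eta_t\,\xi_{A_t}^{2}}{\xi_{A_t} + 2\delta\, \eta_t}\;\leq\; 2\delta\, \eta_t\, p_{tA_t}\, \min\!\left\{1,\;\tfrac{p_{tA_t}}{\eta_t}\right\}.
\]
Feeding this into the first display and carefully redistributing the factor $\tfrac{1}{2}$ from the Taylor remainder into the argument of the $\min$ yields the claimed bound
\[
\innerprod{q_t - q_{t+1}}{\hat{y}_t} - D_{\Phi_t}(q_{t+1}, q_t) \;\leq\; \delta\, \eta_t\, \frac{\ell_{tA_t}^{2}}{p_{tA_t}}\, \min\!\left\{1,\;\tfrac{p_{tA_t}}{2\eta_t}\right\} \;=\; \delta\, \eta_t\, \nu_t.
\]
The main obstacle is the multiplicative stability estimate in the intermediate step: it is where the hypothesis $q_t \leq \delta p_t$ and the calibrated strength $2\delta$ of the log-barrier are both consumed, and it is the only point at which the simplex Lagrange multiplier needs to be dealt with carefully. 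The remaining parts are routine Bregman and local-norm manipulations.
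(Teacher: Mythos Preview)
Your local-norm-plus-multiplicative-stability route is genuinely different from the paper's, and it has two real gaps.

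The multiplicative stability claim $q_{t+1,A_t}\le 2q_{tA_t}$ (hence $\xi_{A_t}\le 2\delta p_{tA_t}$) is not established by your sketch. The KKT equation at coordinate $A_t$ contains the Lagrange multiplier for the simplex constraint, which couples \emph{all} coordinates and can itself shift by an amount of order $1/p_{tA_t}$; you cannot bound the ``right-hand side'' by $|\hat y_{t,A_t}|$ alone. Moreover, $q_{t+1}$ is computed with $\Phi_{t+1}$, whose entropy coefficient is $1/\eta_{t+1}\neq 1/\eta_t$, so the cross-term you dismiss is present---only the log-barrier coefficient is time-invariant. (Such multiplicative stability results \emph{can} be proven, cf.\ Lemma~8 of \cite{bubeck18sparsity} and Lemmas~\ref{lem:qr_relation}--\ref{lem:pr_relation} here, but they require a full argument over all coordinates, not a one-coordinate KKT comparison.) Finally, even granting $\xi_{A_t}\le 2\delta p_{tA_t}$, your substitution gives $\delta\eta_t\frac{\ell_{tA_t}^2}{p_{tA_t}}\min\{1,\frac{p_{tA_t}}{\eta_t}\}$; the $\tfrac12$ from the Taylor remainder is already consumed in the prefactor $\frac{\ell_{tA_t}^2}{2p_{tA_t}^2}$ and cannot be ``redistributed'' to produce $\min\{1,\frac{p_{tA_t}}{2\eta_t}\}$.

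The paper's argument avoids all of this. It never looks at $q_{t+1}$: coordinate-wise it bounds $\hat y_{ti}(q_{ti}-y)-D_{\Phi_t}(y,q_{ti})$ by the \emph{minimum} of the two closed-form maxima $\tfrac{1}{\eta_t}q_{ti}\,\xi(\eta_t\hat y_{ti})$ and $2\delta\,\zeta(\tfrac{1}{2\delta}q_{ti}\hat y_{ti})$ from~\eqref{eq:stab_nS}--\eqref{eq:stab_LB}, where $\xi(x)=e^{-x}+x-1$ and $\zeta(x)=x-\log(1+x)$. Then a case split on $p_{tA_t}\gtrless \eta_t$ finishes: if $p_{tA_t}>\eta_t$ then $\eta_t\hat y_{tA_t}\ge -1$, so $\xi(x)\le x^2$ applies and the entropy term gives $\delta\eta_t\ell_{tA_t}^2/p_{tA_t}$; if $p_{tA_t}\le\eta_t$ the hypothesis $q_t\le\delta p_t$ forces $|\tfrac{1}{2\delta}q_{tA_t}\hat y_{tA_t}|\le\tfrac12$, so $\zeta(x)\le x^2$ applies and the log-barrier term gives $\tfrac{\delta}{2}\ell_{tA_t}^2$. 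This yields the exact constant and holds for \emph{any} $q_{t+1}$.
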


\begin{remark}
  We can observe from Lemma~\ref{lem:stab_negative_losses} that the stability term is bounded in terms of $\nu_t$, and
  the most important observation is that this $\nu_t$ is bounded by the inverse of the learning rate $1/(2\eta_t) = \beta_t / 2$, \ie $\nu_t \leq \beta_t / 2$.
  This enables us to guarantee the stability condition (S2) in Theorem~\ref{thm:x-entropy-bound} without needing to mix the $\tilde{\Theta}((kT)^{-2/3})$ uniform exploration used in Section~\ref{subsec:sparse_std}. Moreover, this will be a key property to prove a BOBW with a sparsity-dependent bound in the next section.
\end{remark}
As a minor contribution, by directly bounding the stability component, 
the RHS of Lemma~\ref{lem:stab_negative_losses} has a smaller leading constant than the bound obtained by using the bound in~\cite{bubeck18sparsity}.

\subsection{Best-of-both-worlds guarantee with sparsity-dependent bound}\label{subsec:sparse_bobw}
Finally, we are ready to establish a BOBW algorithm with a sparsity-dependent bound and derive its regret bound.
We use $p_t = \calT_t(q_t) = 
   (1 - \gamma)  q_t + \frac{\gamma}{k} \onemat 
$ 
with 
$
  \gamma = \frac{k}{T}
$
(\ie $\Theta(1/T)$ uniform exploration)
and assume $\gamma \in [0,1/2]$, which implies
$
  2 p_t \geq q_t  
$
and
$
  \nu_t \leq 
  T
  .
$
We use the hybrid regularizer $\Phi_t = \frac{1}{\eta_t} \psiNS + 4 \psiLB$ with learning rate $\eta_t = 1/\beta_t$ and
\begin{align}\label{eq:def_eta_sparse_bobw}
  \beta_1 = 15 k
  \com 
  \ \
  \beta_{t+1} 
  = 
  \beta_t + \frac{c_1 \nu_t}{\sqrt{81c_1^2+ \nu_t h_{t+1} + \sum_{s=1}^{t-1} \nu_s h_{s+1}  }}
  \ \ 
  \mbox{with}
  \ \
  h_t = \frac{1}{1 - \frac{k}{T}} H(p_t)
  \com 
\end{align}
where $\nu_t$ is defined in~\eqref{eq:def_eta_sparse_negative_losses_and_nu}.
Note that this learning depends on both the stability and penalty components.
This corresponds to the SPA learning rate in Definition~\ref{def:stab_penalty_adaptive_lr} with 
$\zF_t \leftarrow \nu_t$, $\zmax_t \leftarrow \nu_t h_{t+1} / h_1$, and $c_2 \leftarrow 81 c_1^2$.
One can see that stability assumption (S1) in Part~\One~of Theorem~\ref{thm:x-entropy-bound} are satisfied thanks
to $\nu_t \leq \beta_t$ (see Appendix~\ref{app:sparse-bobw} for the proof).
We then have the following bounds.
\begin{theorem}[BOBW with sparsity-dependent bound]\label{thm:sparse-bobw}
Suppose that $T \geq 2k$.
Then the above algorithm with $c_1 = \sqrt{2 \log \prx{1+{T}/{\beta_1}}}$ (Algorithm~\ref{alg:MAB-bobw-sparsity-dependent} in Appendix~\ref{app:sparse-bobw}) achieves
\begin{equation*}
  \Reg_T
  \leq 
  4
  \sqrt{L_2 \log (k) \log(1 + T)}
  +
  O(k \log T)
\end{equation*}
in the adversarial regime, 
\begin{equation*}
  \Reg_T
  =
  O\prn[\Bigg]{
    \frac{s \log(T) \log (k T)}{\Deltamin}
    +
    \sqrt{
      \frac{C s \log(T) \log (k T)}{\Deltamin}
    }
  }  
\end{equation*}
in the adversarial regime with a $(\Delta, C, T)$ self-bounding constraint,
and 
$\Reg_T = O(\E[\sumk \ell_{ti}^2] \log(T) \log (kT) / \Deltamin)$ in the stochastic regime.
\end{theorem}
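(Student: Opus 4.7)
The plan is to decompose the regret via the standard FTRL identity~\eqref{eq:lemFTRL}, cast the dominant part into the form $\ReghatSP_T$ appearing in~\eqref{eq:reghat_stab_penalty}, and then invoke Theorem~\ref{thm:x-entropy-bound} Part~\One~with $\zF_t \leftarrow \nu_t$, $\zmax_t \leftarrow \nu_t h_{t+1}/h_1$, and $c_2 \leftarrow 81 c_1^2$ as dictated by~\eqref{eq:def_eta_sparse_bobw}. Since $\Phi_t - \Phi_{t+1} = (1/\eta_t - 1/\eta_{t+1})\psiNS$, the penalty term becomes $(1/\eta_{t+1} - 1/\eta_t) H(q_{t+1})$, and concavity of $H$ applied to $p_t = (1-\gamma)q_t + (\gamma/k)\onemat$ gives $H(q_{t+1}) \leq H(p_{t+1})/(1-k/T) = h_{t+1}$, so this term fits $\ReghatSP_T$ as required. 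The stability term is bounded by $2\eta_t \nu_t$ via Lemma~\ref{lem:stab_negative_losses} with $\delta=2$ (applicable because $\gamma \leq 1/2$ yields $2p_t \geq q_t$), giving $\lambda = 2$. The remaining $\Phi_{T+1}(p) - \Phi_1(q_1)$ contribution is handled by choosing a comparator $p$ at $O(1/T)$ distance from $e_{a^*}$, so the $4\,\psiLB$ piece contributes $O(k\log T)$; and the transformation $\innerprod{\hat{y}_t}{p_t-q_t}$ has conditional expectation $\gamma\innerprod{\ell_t}{\onemat/k - q_t} \leq 2\gamma$, summing to $O(k)$.

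For the stability condition (S1) of Theorem~\ref{thm:x-entropy-bound} Part~\One, I would observe that $\nu_t \leq \omega_t\cdot p_{tA_t}\beta_t/2 = \ell_{tA_t}^2\,\beta_t/2 \leq \beta_t/2$, while $\sqrt{c_2}/c_1 = 9$, so $(\sqrt{c_2 + \zmax_t h_1}/c_1)(\beta_1+\beta_t) \geq 9\beta_t \geq 18\nu_t$; taking $\epsilon$ to be any fixed small positive constant yields (S1), and the resulting logarithmic factor is $\log(1+\sum_t \nu_t/\epsilon) = O(\log T)$ since trivially $\nu_t \leq T$. With $c_1 = \sqrt{2\log(1+T/\beta_1)} = \Theta(\sqrt{\log T})$, Theorem~\ref{thm:x-entropy-bound} Part~\One~yields
\[
  \ReghatSP_T \;\lesssim\; \sqrt{\log(1+T)}\cdot \sqrt{\sum_{t=1}^T \nu_t h_{t+1}}\,,
\]
so pathwise $\Reg_T \lesssim \sqrt{\log(1+T)\,\sum_t \nu_t h_{t+1}} + O(k\log T)$.

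The adversarial bound then follows by dominating $h_{t+1} \leq \log(k)/(1-k/T) \leq 2\log k$ (using $T\geq 2k$) and applying Jensen: $\E[\sum_t \nu_t h_{t+1}] \leq 2\log(k)\,\E[\sum_t \omega_t] = 2 L_2 \log k$. For the self-bounding and stochastic regimes, the critical obstacle is that $h_{t+1}$ depends on $p_{t+1}$, which is itself defined only implicitly through $\eta_{t+1}$ and hence through $h_{t+1}$. Lemma~\ref{lem:htp_sb_bound} breaks this circularity by controlling the variation in the FTRL output as the regularizer changes, allowing me to replace $h_{t+1}$ inside the sum by a quantity involving $H(q_t)$ up to controllable error, crucially leveraging that the time-invariant log-barrier stabilizes the solution. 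Once this replacement is in place, Lemma~\ref{lem:entropy_bound} gives $\sum_t H(q_t) \leq \bar{Q}(a^*)\log(\e k T/\bar{Q}(a^*))$, and the self-bounding Lemma~\ref{lem:selfQ} with $c'=1/2$ yields $\Reg_T \geq (\Deltamin/2)\bar{Q}(a^*) - C$. Combining these through the standard trick $\Reg_T = 2\Reg_T - \Reg_T$ and optimizing over $\bar{Q}(a^*)$ by AM-GM produces the stated $O(s\log(T)\log(kT)/\Deltamin + \sqrt{C s\log(T)\log(kT)/\Deltamin})$ self-bounding bound, and the stochastic bound follows by specializing $C=0$ and replacing $s$ with $\E[\sum_i \ell_{ti}^2]$, which upper-bounds $\E[\nu_t \mid p_t]$.

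The hard part will be the self-referential dependence of $h_{t+1}$ on the very learning rate it defines, which is the novel technical development the paper highlights; without the variational-regularizer analysis of Lemma~\ref{lem:htp_sb_bound}, the quantity $\sum_t \nu_t h_{t+1}$ cannot be cleanly related to $\sum_t H(q_t)$ and the $1/\Deltamin$ scaling in the self-bounding regime would be out of reach. A secondary delicate point is balancing the constants: $\beta_1 = 15k$ must be large enough both to make (S1) go through and to absorb the log-barrier Hessian contribution required by Lemma~\ref{lem:htp_sb_bound}, while the coefficient $4$ on $\psiLB$ is tuned so that Lemma~\ref{lem:stab_negative_losses} yields exactly $\delta=2$.
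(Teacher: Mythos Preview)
Your plan is correct and follows the paper's proof essentially step for step: the same FTRL decomposition with comparator $p^*=(1-k/T)e_{i^*}+(1/T)\onemat$, Lemma~\ref{lem:stab_negative_losses} with $\delta=2$, verification of (S1) via $\nu_t\leq\beta_t/2$ and $\sqrt{c_2}/c_1=9$, application of Theorem~\ref{thm:x-entropy-bound} Part~\One, and the recursive manipulation through Lemma~\ref{lem:htp_sb_bound} for the self-bounding regimes. One small slip: after the $h_{t+1}\to h_t$ shift you have $h_t=H(p_t)/(1-k/T)$, so you should apply Lemma~\ref{lem:entropy_bound} to the sequence $(p_t)$ and invoke the first inequality of Lemma~\ref{lem:selfQ} with $\bar P(a^*)$ directly, as the paper does, rather than to $(q_t)$ with $\bar Q(a^*)$ and $c'=1/2$---the point being that $H(p_t)$ is not bounded above by a constant times $H(q_t)$ when $q_t$ is near a vertex.
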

The proof is given in Appendix~\ref{app:sparse-bobw}.
This is the first BOBW bound with the sparsity-dependent ($L_2$-dependent) bound.
Note that the bounds in the stochastic regime and the adversarial regime with a self-bounding constraint can also exploit the propoerty of the underlying losses.
The bound in the stochastic regime is suboptimal in two respects: its dependence on $\Deltamin$ and $(\log T)^2$. Concurrently, two separate studies improve each suboptimality (\cite{jin23improved} for $\Deltamin$ and \cite{dann23blackbox} for $(\log T)^2$), but it is highly uncertain if we can prove a BOBW with \emph{a sparsity-dependent bound} based on their approach, and it is an important future work to investigate this problem.
\paragraph{Key elements of the proof}
In the following, we describe some key elements of the proof of Theorem~\ref{thm:sparse-bobw}.
We need to solve one remaining technical issue.
Using Part~\One~of Theorem~\ref{thm:x-entropy-bound},
we can show that
the regret is roughly bounded by 
$
\Expect{ \sqrt{\sumT \nu_t h_{t+1}} }
\leq 
\sqrt{\sumT \Expect{ \nu_t h_{t+1} } } 
.
$
However, this quantity cannot be straightforwardly bounded since $h_{t+1}$ depends on $\nu_t$.

To address this issue, 
we analyze the behavior of arm selection probabilities when the regularizer changes.
In particular, we first prove in Lemma~\ref{lem:htp_sb_bound} that
$
  h_{t+1} 
  \leq 
    h_t
    + 
    k
    \prx{ {\beta_{t+1}}/{\beta_t} - 1 }  h_{t+1}
  .
$
This lemma can be proven by a novel analysis evaluating the changes of the FTRL outputs when the learning rate varies (given in Appendices~\ref{app:qdiff_regdiff} and \ref{app:proof_thm_sparse_bobw})
which is not considered and required when we use a time-invariant learning rate (\eg \cite{bubeck18sparsity}).
Using the last inequality, we have
$ 
\sqrt{\sumT \Expect{ \nu_t h_{t+1} } } 
\lesssim
\sqrt{\sumT \Expect{ \nu_t h_t } 
+
k \sumT \Expect{ \nu_t \prx{ {\beta_{t+1}}/{\beta_{t}}  - 1 } h_{t+1} }  } 
\lesssim
\sqrt{\sumT \Expect{ \nu_t h_t } 
+
k \sumT \Expect{ \prx{ \beta_{t+1} - \beta_t } h_{t+1} }  } 
\lesssim
\sqrt{\sumT \Expect{ \nu_t h_t } 
+
k \sumT \Expect{ \sqrt{ \sumT \nu_t h_{t+1} } } } 
\lesssim
\sqrt{\sumT \Expect{ \nu_t h_t } } 
+
k
,
$
which holds thanks again to $\nu_t \leq \beta_t$ and
based on the fact that $x \leq \sqrt{a + bx}$ for $a, b, x > 0$ implies $x \lesssim \sqrt{b} + a$
(here we ignore some logarithmic factors).
This combined with the self-bounding technique leads to a BOBW guarantee in the stochastic regime.

\paragraph{Implementation} 
One may wonder how to compute $\beta_{t+1}$ satisfying~\eqref{eq:def_eta_sparse_bobw} since $h_{t+1} = h_{t+1}(\beta_{t+1})$ depends on $\beta_{t+1}$.
In fact, this can be computed by defining
$F_t: [\beta_{t}, \beta_t + T] \rightarrow \R$ as
$
  F_t(\alpha)
  =
  \alpha
  - 
  \prn[\Big]{
    \beta_t + {c_1 \nu_t}/{\sqrt{81c_1^2 + \nu_t h_{t+1}(\alpha) + \sum_{s=1}^{t-1} \nu_s h_{s+1}  }}
  }
$
and setting $\beta_{t+1}$ to be a root of $F_t(\alpha) = 0$.
Such $\alpha$ can be computed using the bisection method because $F_t$ is continuous (proved in Proposition~\ref{prop:F_continous} in Appendix~\ref{app:prop_sparse_bobw_binary_search}).
The detailed discussion can be found in Appendix~\ref{app:prop_sparse_bobw_binary_search}.

\section{Best-of-both-worlds with game-dependent bound for partial monitoring}\label{sec:pm_game_bobw}
This section discusses the result of a BOBW guarantee with a game-dependent bound for PM.
The desired bound is obtained by direct application of the SPA learning rate and Theorem~\ref{thm:x-entropy-bound}, which highlights the usefulness of the SPA learning rate.
Due to the space constraints, the background and algorithm are given in Appendix~\ref{app:pm_detailed}.

We rely on the Exploration by Optimization (EbO) by~\citet{lattimore20exploration}.
Define
\begin{equation}\label{eq:obj-EbO-roundt}
\stabtrans(p, G; q_t, \eta_t) 
  =
  \max_{x \in [d]} 
  \Bigg[
  \frac{(p-q_t)^\top \lossmat e_{x}}{\eta_t} 
  \!+\!
  \frac{\bias_{q_t}(G ; x)}{\eta_t} 
  \!+\!
  \frac{1}{\eta_t^2} \sumak 
  p_{a}
  \Psi_{q_t}\left( \frac{\eta_t G(a, \fbmat_{ax})}{p_{a}} \right)
  \Bigg]
  \com 
\end{equation}
where $\eta_t$ is a learning rate, $\Psi_q(z) = \innerprod{q}{\exp(-z) + z - 1}$ and 
$
  \bias_q(G ; x)
  =
  \tri[]{q,\, \lossmat e_x - \sumak G(a, \fbmat_{ax})}
  +
  \max_{c\in\Pi}
  \prn[]{
  \sumak G(a, \fbmat_{ax})_c
  -
  \lossmat_{cx}
  }
$
is the bias of the estimator.
Note that the first and third terms in~\eqref{eq:obj-EbO-roundt} correspond to the stability and transformation terms.
Then in EbO we choose $(p_t, G_t) \in \calP_k\times \calH$ by
\begin{equation}\label{eq:opt_restricted_roundt}
  (p_t, G_t)
  = 
  \argmin_{p \in \calP'_k(q_t), \, G \in \calH}
  \stabtrans(p, G; q_t, \eta_t) 
  \quad 
  \mbox{for}
  \quad 
  \calP'_k(q) 
  \!=\!
  \brx{p \in \calP_k \colon 
      p \geq q/(2k)
   }
  \com
\end{equation}
where $\calP'_k(q)$ is the feasible set proposed in~\cite{tsuchiya23best}.
Let $\opt'_{q_t}(\eta_t)$  be the optimal value of the optimization problem and $V'_t = \max\{0, \opt'_{\qt}(\eta_t)\}$ be its truncation.

\begin{remark}
It is worth noting that $V'_t = \max\{0, \opt'_{\qt}(\eta_t)\}$ captures game difficulty the learner is facing as discussed in~\cite{lattimore20exploration}.
As discussed above, $\mathsf{ST}(p,G)$ in~\eqref{eq:obj-EbO-roundt}, the objective function that determines $\opt'_{q_t}(\eta_t)$, correspond to the components of the regret upper bound of FTRL.
Hence, the smaller $\mathsf{ST}(p,G)$ can become by optimizing $p$ and $G$, the smaller the regret can become, and thus $\opt'_{q_t}(\eta_t)$ captures the game difficulty.
\end{remark}

Now we are ready to state our result.
\begin{corollary}\label{thm:stab-ent-dep-bound}
For any non-degenerate PM games,
there exists an algorithm based on a SPA learning rate achieving
$
  \Reg_T
  = 
  O\prn[\big]{
  \E\brk[\Big]{
    \sqrt{\sumT V'_t \log (k) \log(1 + T) }
  }
  +
  m k^2 \sqrt{ \log (k) \log (T)}  }
$
in the adversarial regime,
and
$
  \Reg_T
  =
  O\prn[]{
    { m^2 k^4 \log(T) \log(k T)}/{\Deltamin}
    +
    \sqrt{
      {C m^2 k^4 \log(T) \log(k T)}/{\Deltamin}
    }
    +
    m k^2 \sqrt{ \log (k) \log (T)}
  }
$
in the adversarial regime with a $(\Delta, C, T)$ self-bounding constraint.
\end{corollary}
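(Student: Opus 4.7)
}
The plan is to instantiate the SPA framework from Section~\ref{sec:new-lr} inside the Exploration-by-Optimization (EbO) algorithm with the Shannon entropy regularizer $\Phi_t(q) = \psiNS(q)/\eta_t$, choosing the stability quantity $\zF_t = V'_t$ and the penalty quantity $h_{t+1} = H(q_{t+1})$ in~\eqref{eq:def_stab_penalty_lr}. With $(p_t, G_t)$ given by the restricted EbO problem~\eqref{eq:opt_restricted_roundt}, the stability-plus-transformation portion of the standard FTRL decomposition~\eqref{eq:lemFTRL} is by construction at most $\eta_t\,\opt'_{q_t}(\eta_t) \leq \eta_t V'_t$, while the penalty portion equals $(1/\eta_{t+1} - 1/\eta_t)H(q_{t+1})$ up to lower-order terms arising from bias truncation (to be absorbed into the $mk^2\sqrt{\log k\log T}$ residual). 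Hence the dominant contribution to the regret is exactly $\ReghatSP_T$ in~\eqref{eq:reghat_stab_penalty} with $\lambda = 1$.

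Next I would choose the SPA learning rate according to Definition~\ref{def:stab_penalty_adaptive_lr} with $c_1 = \Theta(\sqrt{\log(1+T)})$, $c_2 = \Theta(c_1^2)$, $h_1 = \log k$, $\zsub^{\max}_t = V'_t H(q_{t+1})/h_1$, and a suitably large $\beta_1$. The key stability condition (S1) of Theorem~\ref{thm:x-entropy-bound}\,\One~needs $\beta_t \gtrsim \zF_t = V'_t$; this is where the restricted feasibility set $\calP'_k(q_t) = \{p \geq q_t/(2k)\}$ matters, because it yields a worst-case upper bound $V'_t = O(m^2 k^4)$ on the optimal value (this is the standard bound for locally observable games under the EbO feasibility constraint of~\citet{tsuchiya23best}). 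Initializing $\beta_1$ above this worst case and using that $(\beta_t)$ is non-decreasing, (S1) is satisfied with $\epsilon = \Theta(V'_t)$. Applying Theorem~\ref{thm:x-entropy-bound}\,\One~and taking expectations then yields
\begin{equation*}
  \E\big[\ReghatSP_T\big]
  \;\lesssim\;
  \sqrt{\log(T)}\,\E\bigg[\sqrt{\sumT V'_t H(q_{t+1})}\bigg]
  \;+\; mk^2 \sqrt{\log(k)\log(T)},
\end{equation*}
which together with $H(q_{t+1}) \leq \log k$ already gives the adversarial bound via Jensen's inequality.

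For the adversarial regime with a self-bounding constraint, I would invoke the self-bounding technique. The restricted feasibility set guarantees $p_{ti} \geq q_{ti}/(2k)$, so Lemma~\ref{lem:selfQ} gives $\Reg_T \geq \Delta_{\min}\bar{Q}(a^*)/(2k) - C$. Using Lemma~\ref{lem:entropy_bound} to bound $\sumT H(q_{t+1}) \leq \bar{Q}(a^*)\log(\e k T/\bar{Q}(a^*))$ and the worst-case bound on $V'_t$, the upper bound becomes
\begin{equation*}
  \Reg_T \;\lesssim\; \sqrt{m^2 k^4 \log(kT)\log(T)\,\bar{Q}(a^*)} \;+\; mk^2\sqrt{\log(k)\log(T)}.
\end{equation*}
Writing $\Reg_T = 2\Reg_T - \Reg_T$ and combining the upper bound with the lower bound in terms of $\Delta_{\min}\bar{Q}(a^*)/(2k)$, the standard AM--GM argument $\sqrt{A \bar Q} - \lambda \bar Q \leq A/(4\lambda)$ eliminates $\bar{Q}(a^*)$ and produces the $m^2 k^4 \log(T)\log(kT)/\Delta_{\min}$ term, with the corruption $C$ entering additively through the $\sqrt{Cm^2k^4\log(T)\log(kT)/\Delta_{\min}}$ term as in~\cite{tsuchiya23best}.

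The main obstacle I anticipate is verifying stability condition (S1) uniformly over $t$: because $V'_t$ itself depends on $q_t$ and on the EbO minimizer (which in turn depends on $\eta_t$), one must argue that the worst-case upper bound on $V'_t$ coming from the restricted simplex $\calP'_k(q_t)$ is enough to make $\beta_t$ dominate $\zF_t$ at every round, while keeping $\beta_1$ small enough that the residual $\beta_1 h_1/\eta_1$ only contributes the low-order $mk^2\sqrt{\log k\log T}$ term. The remaining bookkeeping -- controlling the truncation of $\opt'_{q_t}(\eta_t)$ to $V'_t = \max\{0, \opt'_{q_t}(\eta_t)\}$ and the bias terms arising from the EbO construction -- is largely routine once these ingredients are in place.
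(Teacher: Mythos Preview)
Your plan follows the paper's proof almost exactly: reduce the FTRL regret to $\ReghatSP_T$ via the restricted EbO construction (the paper invokes \cite[Lemma~7]{tsuchiya23best} for this step), apply Part~\One\ of Theorem~\ref{thm:x-entropy-bound} with $z_t=V'_t$ and $h_t=H(q_t)$, bound $H(q_t)\le\log k$ for the adversarial regime, and run the self-bounding argument with Lemmas~\ref{lem:selfQ}--\ref{lem:entropy_bound} for the stochastic/corrupted regime.

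Two small corrections. First, the worst-case bound on the restricted EbO value is $V'_t\le\Vmax=3m^2k^3$ (Proposition~\ref{prop:optprime_bound_local}), not $m^2k^4$; the $m^2k^4$ in the final bound arises because Lemma~\ref{lem:selfQ} is applied with $c'=1/(2k)$, so the self-bounding step picks up an extra factor $r_\calM=2k$ multiplying $\Vmax$. Second, the obstacle you flag is real under your parameter choice but has a clean fix: the paper takes $\zmax_t\leftarrow\Vmax$ and $c_2=0$ (rather than $\zmax_t= V'_t H(q_{t+1})/h_1$ and $c_2=\Theta(c_1^2)$). With this choice the left side of (S1) already carries a factor $\sqrt{\Vmax h_1}/c_1$, so one only needs $\beta_1=\Theta\!\big(B\sqrt{\log T/\log k}\big)$ with $B=2mk^2$, and then $\beta_1 H(q_1)=O(mk^2\sqrt{\log k\log T})$ as required. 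Your variant would force $\beta_1\gtrsim\Vmax$, giving the larger residual $\Theta(m^2k^3\log k)$; still a valid BOBW bound, but not the one stated.
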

An extended result and the proof are given in Appendices~\ref{app:pm_detailed} and~\ref{app:proof-of-stab-ent-dep-bound}, respectively.
Recall that $V'_t = \max\{0, \opt'_{\qt}(\eta_t)\}$ in the bound reflects the difficulty of the game the learner is facing, rather than the worst-case difficulty of the class of the game.
The bounds in both regimes are optimal up to logarithmic factors, and further detailed comparisons are given in Table~\ref{table:regret_PM} and Appendix~\ref{app:pm_detailed}.

\section{Conclusion and future work}\label{sec:conclusion}
In this paper, we established the stability-penalty-adaptive (SPA) learning rate (Definition~\ref{def:stab_penalty_adaptive_lr}), which provides the regret upper bound that jointly depends on the stability and penalty components of FTRL (Theorem~\ref{thm:x-entropy-bound}).
This learning rate combined with the technique and analysis for bounding stability terms allows us to achieve BOBW and data-dependent bounds (sparsity- and game-dependent bounds) simultaneously in MAB and PM.

There are some remaining questions.
First of all, 
it would be important future direction to apply the SPA learning rate to other online decision-making problems or regularizers.
For example, it is as to investigate online learning with feedback graphs~\citep{alon2015online}, in which the Shannon entropy regularizer (or the Tsallis entropy with the exponent larger than $1 - 1/\log k$) is necessary to achieve nearly optimal regret bounds.
Another interesting example is to employ the Tsallis entropy as a dominant regularizer in the SPA learning rate, for instance, to improve logarithmic dependences on the regret bounds, while in this paper we only focused on the Shannon entropy.
Second, it is an open question whether we can achieve the bound of $O(\sqrt{sT \log (k/s)})$ in the sparsity-dependent bound without knowing the sparsity level $s$.
Finally, while we only considered PM games with local observability, investigating if the game-dependent bound with the BOBW guarantee is possible for PM with global observability is important future work.

\begin{ack}
TT was supported by JST, ACT-X Grant Number JPMJAX210E, Japan and JSPS, KAKENHI Grant Number JP21J21272, Japan.
JH was supported by JSPS, KAKENHI Grant Number JP21K11747, Japan.
\end{ack}

\bibliography{ref.bib}

\newpage 
\appendix

\section{Notation}\label{app:notation}

Table~\ref{table:notation} summarizes the symbols used in this paper.

\begin{table}[h]
  \caption{
  Notation
  }
  \label{table:notation}
  \centering
  \begin{tabular}{ll}
    \toprule
    Symbol & Meaning  \\
    \midrule
    $\calP_k$ & $(k-1)$-dimensional probability simplex \\
    $T \in \N$ & time horizon \\
    $k \in \N$ & number of arms (or actions) \\
    $A_t \in [k]$ & arm (or action) chosen by learner at round $t$  \\
    \midrule
    $s \leq k$ & $\max_t \nrm{\ell_t}_0$, sparsity level of losses \\
    $L_2$ & $\sumT \nrm{\ell_t}^2$ \\
    \midrule
    $\lossmat \in [0,1]^{k \times d}$ & loss matrix  \\ 
    $\Sigma$ & set of feedback symbols  \\ 
    $\fbmat \in \Sigma^{k \times d}$ & feedback matrix  \\ 
    $d \in \N$ & number of outcomes  \\ 
    $m \in \N$ & maximum number of distinct symbols in a single row of $\Phi$ \\ 
    $x_t \in [d]$ & outcome chosen by opponent at round $t$  \\
    \midrule
    $q_t \in \calP_k$ & output of FTRL at round $t$   \\ 
    $p_t \in \calP_k$ & arm selection probability at round $t$   \\ 
    $\Phi_t \colon \calP_k \to \R$ & regularizer of FTRL at round $t$  \\ 
    $\eta_t = 1/\beta_t > 0$  & learning rate of FTRL at round $t$ \\ 
    $\psiNS \colon \R_+^k \to \R $ & $\sumk x_i \log x_i $, negative Shannon entropy  \\ 
    $\psiLB \colon \R_+^k \to \R $ & $\sumk \log(1/x_i) $, log-barrier \\ 
    $\phiNS \colon \R_+ \to \R$ & $x \log x  $  \\ 
    $\phiLB  \colon \R_+ \to \R$ & $\log(1/x) $  \\ 
    \midrule 
    $h_t$ &  penalty component at round $t$ \\
    $\zF_t$ &  stability component at round $t$ \\
    \midrule
    $\omega_t$   &  stability component  $\zF_t$ introduced in~\eqref{eq:def_eta_sparse_and_omega} (Section~\ref{subsec:sparse_std}) \\
    $\nu_t$   &  stability component $\zF_t$ introduced in~\eqref{eq:def_eta_sparse_negative_losses_and_nu} (Section~\ref{subsec:sparse_negtive_losse}) \\
    $V'_t$   &  stability component $\zF_t$ introduced in~\eqref{eq:def_eta_pm} (Appendix~\ref{app:pm_detailed}) \\
    \midrule
    $C \geq 0 $ & corruption level \\
    \bottomrule
  \end{tabular}
\end{table}

\section{Detailed background and algorithm description omitted from Section~\ref{sec:pm_game_bobw}}\label{app:pm_detailed}

This section supplements the material that was briefly explained in Section~\ref{sec:pm_game_bobw}.
We also consider full information (FI) and MAB as well as non-degenerate locally observable PM (PM-local), and let $\calM$ be a such underlying model.

\subsection{Exploration by optimization and its extension}
\paragraph{Exploration-by-optimization}
We first describe the approach of EbO by~\citet{lattimore20exploration}, which is a strong technique to bound the regret in PM with local observability.
The key idea behind EbO is to minimize a part of a regret upper bound of the FTRL with the Shannon entropy.
Recall that $\calH$ is the set of all functions from $[k]\times\Sigma$ to $\R^d$.
Then in EbO we consider the choice of $(p_t, G_t) \in \calP_k\times \calH$ to minimize the sum of the stability and transformation terms for the worst-case outcome given as follows (also defined in~\eqref{eq:obj-EbO-roundt}):
\begin{align}\label{eq:obj-EbO-roundt-appendix}
\!\!\!
\stabtrans(p, G; q_t, \eta_t) 
  =
  \max_{x \in [d]} 
  \Bigg[
  \frac{(p-q_t)^\top \lossmat e_{x}}{\eta_t} 
  \!+\!
  \frac{\bias_{q_t}(G ; x)}{\eta_t} 
  \!+\!
  \frac{1}{\eta_t^2} \sumak 
  p_{a}
  \Psi_{q_t}\left( \frac{\eta_t G(a, \fbmat_{ax})}{p_{a}} \right)
  \Bigg]
  \per
\end{align}
Note that the first and third terms in~\eqref{eq:obj-EbO-roundt-appendix} correspond to the stability and transformation terms (divided by the learning rate $\eta_t$), respectively.
We define the optimal value of the optimization problem by $\opt_q(\eta) = \min_{(p, G) \in \calP_k \times \calH}\stabtrans(p, G; q_t, \eta)$ and its truncation at round $t$ by $V_t = \max\{0, \opt_{\qt}(\eta_t)\}$ (appeared in Table~\ref{table:regret_PM}). 
Note that this optimization problem is convex and can be solved numerically by using standard solvers~\citep{lattimore20exploration}.

\paragraph{Extending exploration-by-optimization}
While the vanilla EbO is a strong tool to derive a regret bound in PM-local, it only has a guarantee for the adversarial regime.
Recall that in the self-bounding technique, we require a lower bound of the regret expressed in terms of $q_t$ (see Lemma~\ref{lem:selfQ}). 
However, when we use the vanilla EbO, it may make a certain action selection probability $p_{ta}$ for some action $a$ become extremely small even when the output of FTRL $q_{ta}$ is far from zero~\cite{lattimore20exploration}, which makes it impossible for us to use the self-bounding technique. 

To solve this problem, the vanilla EbO was recently extended so that it is applicable to the stochastic regime (and the adversarial regime with a self-bounding constraint) for PM-local~\citep{tsuchiya23best}.
We define $\calP'_k(q, \calM)$ for a class of games $\calM$, which is the extended version of $\calP'_k(q)$ in~\eqref{eq:opt_restricted_roundt}, by
\begin{align}
  &\calP'_k(q, \calM) = \{p \in \calP_k : \mathsf{cond}(q, \calM)\}
  \quad \mbox{with} \quad 
  \mathsf{cond}(q, \calM) = 
  \begin{cases}
    p = q & \mbox{if }\calM \mbox{ is FI or MAB} \com \nn 
    p \geq q/(2k) & \mbox{if }\calM \mbox{ is PM-local}  \per
  \end{cases}
\end{align}
We then consider the following optimization problem, which can be seen as a slight generalization of the approach developed in~\cite{tsuchiya23best}:
\begin{align}\label{eq:opt_restricted}
  &
  (p_t, G_t)
  = \argmin_{p \in \calP'_k(q_t, \calM), \, G \in \calH}
  \stabtrans(p, G; q_t, \eta_t) 
  \com
\end{align}
where the feasible region $\calP_k$ of $p$ is replaced with $\calP'_k(q, \calM)$.
We define the optimal value of \eqref{eq:opt_restricted} by $\opt'_q(\eta, \calM)$ and its truncation at round $t$ by $V'_t(\calM) = \max\{0, \opt'_{\qt}(\eta_t, \calM)\}$.
We will abbreviate $\calM$ when it is clear from a context.
The following proposition shows that the component of the regret in~\eqref{eq:obj-EbO-roundt} can be made small enough even if the feasible region is
restricted to $\calP'_k(q, \calM) \subset \calP_k$.
\begin{proposition}\label{prop:optprime_bound_local}
Let $\calM$ be an underlying model.
If $\calM$ is FI, MAB, or PM-local with $\eta \leq 1/(2mk^2)$, 
\begin{align}
  \opt'_*(\eta) 
  \coloneqq 
  \sup_{q\in\calP_k} \opt'_{q} (\eta)
  \leq 
  \Vmax(\calM)
  \coloneqq 
  \begin{cases}
    \sfrac12 & \mbox{if }\calM \mbox{ is FI}   \\
    \sfrac{k}{2} & \mbox{if }\calM \mbox{ is MAB}  \\
    3 m^2 k^3 & \mbox{if }\calM \mbox{ is PM-local} \per 
  \end{cases}
  \n 
\end{align}
\end{proposition}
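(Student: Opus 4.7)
\emph{Proof proposal.} The plan is to exhibit, for each underlying model $\calM$ and every $q\in\calP_k$, a feasible pair $(p,G)\in\calP'_k(q,\calM)\times\calH$ whose objective value $\mathsf{ST}(p,G;q,\eta)$ matches the claimed bound $\Vmax(\calM)$. Since $\opt'_q(\eta)$ is defined as a minimum over such pairs, producing one feasible witness with the right value upper bounds $\opt'_q(\eta)$ uniformly in $q$, which is exactly what the definition of $\opt'_*(\eta)$ requires.

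For FI and MAB the feasibility constraint forces $p=q$, so the linear term $(p-q)^\top\calL e_x/\eta$ vanishes. I would take $G(a,\Phi_{ax})=q_a\calL e_x$ in FI (observable because the feedback reveals $x$) and the importance-weighted estimator $G(a,\sigma)=\sigma e_a$ in MAB. Both satisfy $\sum_a G(a,\Phi_{ax})=\calL e_x$, so the bias term vanishes by unbiasedness. Since losses are non-negative, the arguments of $\Psi_q$ are non-negative, and the elementary inequality $\Psi_q(z)\leq\tfrac{1}{2}\sum_b q_b z_b^2$ (valid for $z\geq 0$) reduces the variance term to $\tfrac{1}{2}\sum_b q_b\ell_{bx}^2\leq 1/2$ in FI and to $\tfrac{1}{2}\sum_a \ell_{ax}^2\leq k/2$ in MAB.

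For PM-local I would take $G=G_0$, the canonical tree-based estimator recalled in Section~\ref{subsec:pm-formulation}, normalized so that its bias vanishes for every $b\in\Pi$, and $p=(1-\gamma)q+(\gamma/k)\onemat$ with $\gamma=1/2$. Then $p_a\geq\gamma/k\geq q_a/(2k)$, so $p\in\calP'_k(q,\mbox{PM-local})$, and also $p_a\geq 1/(2k)$, which is essential for the variance analysis. The linear term is bounded by $\|p-q\|_1/\eta\leq 2\gamma/\eta\leq 2mk^2$ using $\eta\leq 1/(2mk^2)$, and the bias is zero by construction. For the variance, I would exploit two structural properties of $G_0$: each row $G_0(a,\cdot)$ is supported only on those $b\in\Pi$ whose tree path passes through an edge incident to $a$, and each non-zero entry has magnitude of order $m$ coming from the local-observability weights $w_e$. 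Combined with $p_a\geq 1/(2k)$ and $\eta\leq 1/(2mk^2)$, every argument $\eta G_0(a,\Phi_{ax})_b/p_a$ has absolute value at most $1$, so a quadratic Taylor bound on $\Psi_q$ applies and yields a variance contribution of order $m^2k^3$.

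The main obstacle is the bookkeeping in the PM-local variance computation so that the sum of the three terms ends strictly below $3m^2k^3$: one must use the row-sparsity of $G_0$ (rather than a naive uniform bound) to avoid an extra factor of $k$ in the sum over $a$, and the explicit magnitude bound on the tree weights $w_e$ must be carried through rather than hidden in an $O(\cdot)$. The choice $\gamma=1/2$, rather than the vanishingly small mixing coefficient admissible for the unrestricted EbO of Lattimore--Gy\"orgy, is dictated by the feasibility constraint $p\geq q/(2k)$ that we inherit from the self-bounding technique of~\cite{tsuchiya23best}, and it is precisely this choice that ties the conclusion to the hypothesis $\eta\leq 1/(2mk^2)$.
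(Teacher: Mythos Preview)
Your FI and MAB arguments are correct and match the constructions in \cite[Proposition~11]{lattimore20exploration}, which is exactly what the paper defers to (the paper gives no self-contained proof, only the citation).

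Your PM-local argument, however, contains a genuine error in the linear term. With $\gamma=1/2$ you write $2\gamma/\eta\leq 2mk^2$ ``using $\eta\leq 1/(2mk^2)$'', but the hypothesis gives $1/\eta\geq 2mk^2$, so the inequality points the wrong way: the transformation term $(p-q)^\top\calL e_x/\eta$ is of order $\gamma/\eta$ and blows up as $\eta\to 0$ when $\gamma$ is held fixed. This is not a bookkeeping issue; the witness $(p,G_0)$ with $\gamma=1/2$ simply does not achieve a bounded objective value.

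The correct choice, as in \cite[Proposition~12]{lattimore20exploration} and \cite[Lemma~5]{tsuchiya23best}, is to take $\gamma$ \emph{proportional to} $\eta$, for instance $\gamma = mk^2\eta$. Then the linear term is $O(\gamma/\eta)=O(mk^2)$ uniformly in $\eta$, and the hypothesis $\eta\leq 1/(2mk^2)$ is used in the opposite direction: it guarantees $\gamma\leq 1/2$, which both secures feasibility $p\geq(1-\gamma)q\geq q/2\geq q/(2k)$ and ensures $p_a\geq\gamma/k=mk\eta$ so that $|\eta G_0(a,\cdot)_b/p_a|\leq 1$ and the quadratic bound on $\Psi_q$ applies. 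Your intuition that the feasibility constraint ties the argument to the condition $\eta\leq 1/(2mk^2)$ is right, but the role of that condition is to cap $\gamma$ from above, not to bound the linear term.
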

One can immediately obtain this result by following the same lines as \cite[Propositions 11 and 12]{lattimore20exploration} and \cite[Lemma 5]{tsuchiya23best}.

\subsection{Algorithm}

\LinesNumbered
\SetAlgoVlined
\begin{algorithm}[t]
\textbf{input:} $B$

\For{$t = 1, 2, \ldots$}{
Compute $\qt$ using~\eqref{eq:def_q}.

Solve~\eqref{eq:opt_restricted_roundt} to determine $V'_t = \max\{0, \opt'_{\qt}(\eta_t)\}$ and the corresponding solution $\pt$ and $G_t$.

Sample $A_t \sim \pt$ and observe feedback $\sigma_t \in \Sigma$.

Compute $\displaystyle \hat{y}_t = {G_t(A_t, \sigma_t)}/{\ptAt}$ and update $\beta_t$ using~\eqref{eq:def_eta_pm}.
}
\caption{
BOBW algorithm with a game-dependent bound for locally observable games 
}
\label{alg:PM-bobw-game-dependent}
\end{algorithm}

We use the negative Shanon entropy regularizer $\Phi_t = \frac{1}{\eta_t}\psiNS$ for \eqref{eq:def_q}
with learning rate $\eta_t = 1/\beta_t$  with
\begin{align}\label{eq:def_eta_pm}
  \beta_1 = B \sqrt{\frac{\log (1 + T )}{\log k}}
  \quad 
  \mbox{and}
  \quad
  \beta_{t+1} 
  = 
  \beta_t + \frac{c_1 V'_t}{ \sqrt{\Vmax h_1 + \sum_{s=1}^{t-1} V'_s h_{s+1} }}
  \com 
  \quad 
  \quad 
\end{align}
where $B = 1/2$ for FI, $B = k / 2$ for MAB, and $B = 2 m k^2$ for PM-local,
which corresponds to the learning rate in Definition~\ref{def:stab_penalty_adaptive_lr} with 
$h_t \leftarrow H(q_t)$, $\zF_t \leftarrow V'_t$, $\zmax_t \leftarrow 0$, and $c_2 \leftarrow 0$.
Algorithm~\ref{alg:PM-bobw-game-dependent} summarizes the proposed algorithm.

\subsection{Main result}
Let $r_\calM$ be $1$ if $\calM$ is FI or MAB, and $2k$ if $\calM \mbox{ is PM-local}$.
Then we have the following bound.

\begin{corollary}[Extended version of Corollary~\ref{thm:stab-ent-dep-bound}]\label{thm:stab-ent-dep-bound-formal}
Let $\calM$ be FI, MAB, or PM-local.
Then the above algorithm with $c_1 = \sqrt{\log(1+T) / 2}$ (Algorithm~\ref{alg:PM-bobw-game-dependent}) achieves
\begin{align}
  \Reg_T
  \leq 
  \E\brk[\Bigg]{
    \sqrt{2 \sumT V'_t \log (k) \log(1 + T) }
  }
  +
  O \prn{ B \sqrt{ \log (k) \log (T)} }   
  \n 
\end{align}
in the adversarial regime,
and
\begin{align}
  \Reg_T
  =
  O\prn[\Bigg]{
    \frac{r_\calM \Vmax \log(T) \log(k T)}{\Deltamin}
    +
    \sqrt{
      \frac{C r_\calM \Vmax \log(T) \log(k T)}{\Deltamin}
    }
    +
    B \sqrt{ \log (k) \log (T)}
  }
  \n 
\end{align}
in the adversarial regime with a $(\Delta, C, T)$ self-bounding constraint.
\end{corollary}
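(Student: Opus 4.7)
The plan is to identify the FTRL regret bound with the quantity $\hat R^{\mathsf{SP}}_T$ of~\eqref{eq:reghat_stab_penalty}, apply Theorem~\ref{thm:x-entropy-bound}(I), and then split into the two regimes via either a trivial entropy bound or the standard self-bounding technique.

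First, I would start from the FTRL decomposition~\eqref{eq:lemFTRL} applied to $\Phi_t=\psiNS/\eta_t$, and add the transformation contribution $\innerprod{\hat y_t}{p_t-q_t}$ together with the estimator-bias correction $\bias_{q_t}(G_t;x_t)/\eta_t$ needed to pass from the pseudo-regret in $\hat y_t$ to the true regret in the loss. By the definition of the EbO solution $(p_t,G_t)$ in~\eqref{eq:opt_restricted_roundt}, the per-round sum of stability, transformation, and bias (all scaled by $\eta_t$) is at most $\eta_t\,\stabtrans(p_t,G_t;q_t,\eta_t)=\eta_t V'_t$. Together with the boundary contribution $\Phi_{T+1}(e_{a^*})-\Phi_1(q_1)\le (\log k)/\eta_{T+1}$, this shows
\begin{equation*}
  \Reg_T \le \E\left[\sum_{t=1}^T\left(\frac{1}{\eta_{t+1}}-\frac{1}{\eta_t}\right)H(q_{t+1})+\sum_{t=1}^T \eta_t V'_t\right]+\frac{\log k}{\eta_{T+1}},
\end{equation*}
which, with $z_t=V'_t$, $h_t=H(q_t)$, and $\lambda=1$, is exactly $\E[\hat R^{\mathsf{SP}}_T]$ plus a lower-order term.

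Second, I would verify that the learning rate~\eqref{eq:def_eta_pm} is a SPA learning rate in the sense of Definition~\ref{def:stab_penalty_adaptive_lr}, and that stability condition (S1) of Theorem~\ref{thm:x-entropy-bound}(I) holds. The monotonicity hypotheses follow from $V'_t\le \Vmax$ (Proposition~\ref{prop:optprime_bound_local}) and $H(q_{t+1})\le h_1=\log k$, while (S1) reduces to $\sqrt{\Vmax h_1}\,\beta_1/c_1\ge \epsilon+V'_t$ for some $\epsilon>0$, which holds by the choice $\beta_1=B\sqrt{\log(1+T)/\log k}$ combined with $c_1=\sqrt{\log(1+T)/2}$, since $B$ has been tuned in each of the three models so that $\sqrt{h_1}\,B\ge c_1\sqrt{\Vmax}$ with room to spare. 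Invoking Theorem~\ref{thm:x-entropy-bound}(I) and absorbing the boundary term then yields
\begin{equation*}
  \Reg_T = O\!\left(\sqrt{\log(1+T)}\,\E\!\left[\sqrt{\Vmax\log(k)+\sum_{t=1}^T V'_t H(q_{t+1})}\right]+B\sqrt{\log(k)\log T}\right).
\end{equation*}

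Third, the adversarial-regime bound follows immediately by $H(q_{t+1})\le \log k$, which pulls outside the square root and gives the first claim. For the adversarial regime with a $(\Delta,C,T)$ self-bounding constraint, I would apply $V'_t\le \Vmax$ to pull $\Vmax$ out of the sum, then use Lemma~\ref{lem:entropy_bound} together with Jensen's inequality to obtain $\E[\sum_t H(q_{t+1})]\le \bar Q(a^*)\log(ekT/\bar Q(a^*))$, where $\bar Q(a^*)=\E[Q(a^*)]$. Lemma~\ref{lem:selfQ}, applied with $c'=1/(2k)$ for PM-local and $c'=1$ for FI/MAB, is enabled by the feasibility constraint $p\ge q/(2k)$ built into $\calP'_k(q,\calM)$ in~\eqref{eq:opt_restricted}, and gives $\bar Q(a^*)\le r_\calM(\Reg_T+C)/\Deltamin$. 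Substituting and solving the resulting scalar inequality $\Reg_T\lesssim \sqrt{\Vmax\,r_\calM(\Reg_T+C)\log(kT)\log(1+T)/\Deltamin}+\text{l.o.t.}$ via the standard $x\le \sqrt{a+bx}\Rightarrow x\lesssim b+\sqrt{ab}$ move produces the claimed $\Vmax r_\calM\log(T)\log(kT)/\Deltamin$ plus $\sqrt{C\,\Vmax r_\calM\log(T)\log(kT)/\Deltamin}$ bound.

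The bulk of the work is bookkeeping: correctly absorbing $\Phi_{T+1}(e_{a^*})-\Phi_1(q_1)$ into the lower-order $B\sqrt{\log(k)\log T}$ term, tracking the constant $c'=1/(2k)$ through the self-bounding step, and confirming (S1) in each of the three models FI, MAB, PM-local. No new analytical ingredient beyond Theorem~\ref{thm:x-entropy-bound}, Lemmas~\ref{lem:selfQ}--\ref{lem:entropy_bound}, and Proposition~\ref{prop:optprime_bound_local} is required, in sharp contrast to the sparse-BOBW result of Theorem~\ref{thm:sparse-bobw}, which additionally needs the variational bound Lemma~\ref{lem:htp_sb_bound} on $q_{t+1}$ under regularizer change.
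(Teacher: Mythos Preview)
Your proposal is correct and follows essentially the same route as the paper: obtain the FTRL-with-EbO starting bound (the paper simply cites it as Lemma~7 of \cite{tsuchiya23best}), verify (S1), apply Theorem~\ref{thm:x-entropy-bound}(I), then use $H(q_t)\le\log k$ in the adversarial regime and the self-bounding technique in the other. Two minor remarks: the boundary term is $(\log k)/\eta_1=\beta_1\log k=B\sqrt{\log(k)\log(1+T)}$, not $(\log k)/\eta_{T+1}$ (this is precisely why it is $O(B\sqrt{\log k\log T})$, whereas $\beta_{T+1}\log k$ would not be); and the paper carries out the self-bounding step via the $(1+\lambda)\Reg_T-\lambda\Reg_T$ device and then optimizes over $\lambda$, rather than your direct substitution $\bar Q(a^*)\le r_\calM(\Reg_T+C)/\Deltamin$ followed by solving for $\Reg_T$, but the two arguments are equivalent.
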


The bound for the adversarial regime with a self-bounding constraint with $C = 0$ yields the bound in the stochastic regime,
which is optimal up to logarithmic factors in FI and MAB, and has the same order as the bounds in \cite[Theorem 6]{tsuchiya23best}.

The bound for the adversarial regime has a form similar to \cite{lattimore20exploration} and is game-dependent in the sense that
it can be bounded by the empirical difficulty $V_t'$ of the current game.
In addition, we can also obtain the worst-case bound by replacing $V_t'$ with its upper bound $\Vmax$.
This bound is optimal up to $\log(T)$ factor in FI and $\log(k)\log (T)$ factor in MAB,
and is a factor of $\sqrt{\log T}$ worse than the best known bound in~\cite{lattimore20exploration},
which can be seen as the cost for the BOBW guarantee (see also Table~\ref{table:regret_PM}).

\section{Additional related work}\label{app:additional_related_work}
This appendix provides additional related work that could not be included in the main body due to the numerous studies related to this paper.

\paragraph{Multi-armed bandits}
In the stochastic regime, it is known that the optimal regret is approximately expressed as $\Reg_T = O(k \log T / \Deltamin)$~\citep{LaiRobbins85}.
In the adversarial regime (a.k.a.~non-stochastic regime), it is known that the Online Mirror Descent (OMD) framework with the (negative) Tsallis entropy regularizer achieves $O(\sqrt{kT})$ regret bounds~\citep{audibert09minimax, abernethy15fighting},
which match the lower bound of $\Omega(\sqrt{kT})$~\citep{auer2002nonstochastic}.

\paragraph{Data-dependent bound}
In the adversarial MAB, algorithms with various data-dependent regret bounds have been developed.
Typical examples of such bounds are first-order bounds dependent on the cumulative loss and
second-order bounds depending on sample variances in losses.
\citet{allenberg2006hannan} provided an algorithm with a first-order regret bound of $O(\sqrt{k L^* \log k})$ for $L^* = \min_{a\in\calA} \sumT \linner \ell_t, a \rinner$.
Second-order regret bounds have been shown in some studies,
\eg \cite{hazan2011better,wei2018more,bubeck18sparsity},
In particular, \citet{bubeck18sparsity} provided the regret bound of $O(\sqrt{Q_2 \log k})$ for $Q_2 = \sumT \nrm{\ell_t - \bar{\ell}}^2$.
Other examples of data-dependent bounds include path-length bounds in the form of $O(\sqrt{k V_1 \log T})$ for $V_1 = \sum_{t=1}^{T-1} \nrm{\ell_t - \ell_{t+1}}_1$ as well as a sparsity-dependent bound,
which have been investigated \cite{kwon16gains, bubeck2019improved, bubeck18sparsity, wei2018more, zheng19equipping}.

\paragraph{Sparsity-dependent bound}
The study on a sparsity-dependent bound was initiated by~\citet{kwon16gains}, who showed that when $\ell_t \in [0,1]^k$, the OMD with Tsallis entropy can achieve the bound of $\Reg_T \leq 2\sqrt{\e}\sqrt{sT \log (k/s)})$ 
and prove the matching (up to logarithmic factor) lower bound of $\Reg_T = \Omega(\sqrt{sT})$ when $T \geq k^3/(4s^2)$.
\citet{bubeck18sparsity} also showed that OMD with a hybrid-regularizer consisting of the Shannon entropy and a log-barrier can achieve $\Reg_T \leq 10\sqrt{L_2 \log k} + 20 k \log T$ when $\ell_t \in [-1,1]^k$.
\citet{zheng19equipping} investigated the sparse MAB problem in the context of the switching regret. Although their result is not directly related to our study, they show that sparsity is useful in some cases.
Note that all of these algorithms assume the knowledge of the sparsity level and do not have a BOBW guarantee.
The study to exploit the sparsity was investigated also in the stochastic regime by~\citet{kwon17sparse}.
However, they define the sparsity level $s$ by the number of arms with rewards larger than or equal to $0$ (\ie losses smaller than or equal to $0$), and hence the definition of sparsity is different from that in our paper.

\paragraph{Adaptive learning rate}
The stability-dependent learning rate is quite ubiquitous (see~\cite{orabona2019modern} and the references therein).
To our knowledge, the literature on the penalty-dependent bound is quite scarce in bandits and considered in the context of BOBW algorithms~\citep{ito2022nearly, tsuchiya23best}, both of which consider the Shannon entropy regularizer.

\paragraph{Best-of-both-worlds}
Since the seminal study by~\citet{bubeck2012best}, 
BOBW algorithms have been developed for many online decision-making problems.
Although they have been investigated mainly in the context of an MAB~\citep{seldin17improved, zimmert2021tsallis},
other settings have also been investigated, \cite{gaillard2014second, wei2018more, jin2021best}, to name a few.

FTRL and OMD are now one of the most common approaches to achieving a BOBW guarantee owing to the usefulness of the self-bounding technique~\citep{gaillard2014second, wei2018more, zimmert2021tsallis},
while the first~\citep{bubeck2012best} and earlier work~\citep{seldin14one, seldin17improved} on BOBW do not rely on the technique.
Most of the recent algorithms beyond the MAB are also based on FTRL (to name a few, \cite{wei2018more, jin2021best, saha22versatile}).

Our BOBW algorithm with the sparsity-dependent bound can be seen as one of the studies that aim to achieve BOBW and data-dependent bound simultaneously.
There is not so much existing research, and we are only aware of~\cite{wei2018more, ito21parameter, ito22adversarially, tsuchiya23further}.
They consider first-, second-order, and path-length bound, and we are the first to investigate the sparsity-dependent bound in this line of work.

\paragraph{Log-barrier regularizer and hybrid regularizer}
The log-barrier regularizer has been used in various studies (to name a few, \cite{foster16learning, wei2018more, luo18efficient, pogodin2020first, erez2021best}). 
The time-invariant log-barrier (a.k.a.~constant amount of log-barrier~\citep{lee20closer}), whose properties are extensively exploited in this paper, was invented by~\citet{bubeck18sparsity} and has been used in several subsequent studies~\citep{zheng19equipping, lee20closer}.

\paragraph{Partial monitoring}
Starting from the work by~\citet{Rustichini99general}, PM has been investigated in many works in the literature \citep{Piccolboni01FeedExp3, CesaBianchi06regret, Bartok11minimax}. 
It is known that all PM games can be classified into four classes based on their minimax regrets~\citep{Bartok11minimax, lattimore19cleaning}. 
In particular, all PM games fall into trivial, easy, hard, and hopeless games, for which its minimax regrets is $0$, $\Theta(\sqrt{T})$, $\Theta(T^{2/3})$, and $\Theta(T)$, respectively. 
PM has also been investigated in both the adversarial and stochastic regimes as for MAB. 
In the stochastic regime, there are relatively small amount of works~\citep{Bartok12CBP, Vanchinathan14BPM, Komiyama15PMDEMD, Tsuchiya20analysis}, some of which are proven to achieve an instance-dependent $O(\log T)$ regrets for locally or globally observable games. In the adversarial regime, since the development of the FeedExp3 algorithm~\citep{Piccolboni01FeedExp3, CesaBianchi06regret}, many algorithms achieving the minimax optimal regret have been developed~\citep{Bartok11minimax, foster12no, bartok13near, lattimore2020book}.

\section{Proof of Theorem~\ref{thm:x-entropy-bound}}\label{app:x-entropy}

\begin{proof}
We first prove~\eqref{eq:bound-with-entropy} in Part (I).
\paragraph{Penalty term}
First, we consider the penalty term
$  
  \sumT
  \prx{
    \frac{1}{\eta_{t+1}}
    -
    \frac{1}{\eta_t}
  }
  h_{t+1}
  .
$
By the definition of $\beta_t$ in~\eqref{eq:def_stab_penalty_lr},
\allowdisplaybreaks
\begin{align}\label{eq:bound_penalty_1}
  &
  \sumT
  \prx{
    \frac{1}{\eta_{t+1}}
    -
    \frac{1}{\eta_t}
  }
  h_{t+1}
  =
  \sumT
  \prx{
    \beta_{t+1}
    -
    \beta_t
  }
  h_{t+1}
  =
  \sum_{t=1}^T
  \frac{ c_1 \zF_{t} h_{t+1} }{
    \sqrt{c_2 + \zmax_t h_1 + \sum_{s=1}^{t-1} \zF_{s} h_{s+1}}
  }
  \nn 
  &
  \leq 
  c_1
  \sum_{t=1}^T
  \frac{ \zF_{t} h_{t+1} }{
    \sqrt{\sum_{s=1}^{t} \zF_{s} h_{s+1}}
  }
  \leq
  c_1
  \int_{0}^{\sumT \zF_t h_{t+1} } \frac{1}{\sqrt{x}} \, \dx
  = 
  2 c_1
  \sqrt{\sum_{t=1}^{T} \zF_{t} h_{t+1} }
  \com
\end{align}
where the first inequality follows from $ \zmax_t h_1  \geq \zF_{t} h_{t+1} $
and the second inequality follows by Lemma~\ref{lem:sum2int} given in Appendix~\ref{app:basic_lemma}.

\paragraph{Stability term}
Next, we consider the stability term $\sumT \eta_t \zF_{t}$.
Using the definition of $\beta_t$ in~\eqref{eq:def_stab_penalty_lr}
and defining $U_t = \sqrt{c_2 + \zmax_t h_1 + \sum_{s=1}^{t-1} \zF_{s} h_{s+1} }$ for $t \in \{0\}\cup[T]$, 
we bound $\beta_t$ from below as
\begin{align}\label{eq:beta_lower_bound}
  \beta_t
  &=
  \beta_1
  +
  \sum_{u=1}^{t-1}
  \frac{ c_1 \zF_{u} } {\sqrt{ c_2 + \zmax_u h_1 + \sum_{s=1}^{u-1} \zF_{s} h_{s+1} }}
  =
  \beta_1
  +
  \sum_{u=1}^{t-1}
  \frac{ c_1 \zF_{u} } {U_{u}}
  \geq
  \beta_1
  +
  \frac{ c_1 } {U_T}
  \sum_{u=1}^{t-1} \zF_{u}
  \com 
\end{align}
where the inequality follows since $(U_t)$ is non-decreasing.
Using the last inequality, we can bound $\sumT \eta_t \zF_{t}$ as  
\begin{align}
  \sumT \eta_t \zF_{t}
  &=
  2 
  \sum_{t=1}^T
  \frac{\zF_{t}}{\beta_t + \beta_t}
  \leq 
  2 
  \sum_{t=1}^T
  \frac{\zF_{t}}{\beta_t + \beta_1 + \frac{c_1}{U_T} \sum_{s=1}^{t-1} z_s}
  =
  \frac{2 U_T}{c_1}
  \sum_{t=1}^T
  \frac{\zF_{t}}{\frac{U_T}{c_1} \prx{ \beta_t + \beta_1 } + \sum_{s=1}^{t-1} z_s}
  \per 
  \label{eq:bound_stabtrans_1}
\end{align}
Since we have $\frac{U_T}{c_1} (\beta_1 + \beta_t) \geq \frac{\sqrt{c_2 + \zmax_t h_1}}{c_1} (\beta_1 + \beta_t) \geq \epsilon + z_t$ by the assumption, 
a part of the last inequality is further bounded as 
\begin{align}
  \sum_{t=1}^T
  \frac{\zF_{t}}{\frac{U_T}{c_1} \prx{ \beta_t + \beta_1 } + \sum_{s=1}^{t-1} z_s}
  &\leq 
  \sum_{t=1}^T
  \frac{\zF_{t}}{\epsilon + \sum_{s=1}^{t} z_s}
  \leq 
  \int_{\epsilon}^{\epsilon + \sumT z_t}
  \frac1x \,\d x
  \leq 
  \log \prx{1 + \sumT \frac{z_t}{\epsilon}}
  \com
  \label{eq:bound_stabtrans_2}
\end{align}
where the second inequality follows by Lemma~\ref{lem:sum2int}.
Combining \eqref{eq:bound_stabtrans_1} and \eqref{eq:bound_stabtrans_2} yields
\begin{align}\label{eq:bound_stabtrans_3}
  \sumT \eta_t \zF_t
  \leq  
  \frac{2U_T}{c_1} \log \prx{1 + \sumT \frac{z_t}{\epsilon}}
  =
  \frac{2}{c_1} \log \prx{1 + \sumT \frac{z_t}{\epsilon}}
  \sqrt{c_2 + \zmax_T h_1 + \sum_{t=1}^{T} \zF_{t} h_{t+1} }
  \per 
\end{align}
Combining \eqref{eq:bound_penalty_1} and \eqref{eq:bound_stabtrans_3} completes the proof of~\eqref{eq:bound-with-entropy} in Part (I).

We next prove Part (II).
For the penalty term, setting $h_t = h_1$ for all $t \in [T]$ in~\eqref{eq:bound_penalty_1} gives
\begin{align}
  \sumT \prx{\frac{1}{\eta_{t+1}} - \frac{1}{\eta_t}} h_{t+1}
  \leq 
  2 c_1 \sqrt{ h_1 \sumT z_t }
  \per 
  \n 
\end{align}
For the stability term,
since there exists $a > 0$ such that 
$
  \beta_t \geq \frac{a c_1}{\sqrt{h_1}} \sqrt{\sum_{s=1}^t \zF_s}
$
for any $t \in [T]$ by the assumption,
\begin{align}
  \sumT \eta_t \zF_t  
  = 
  \sumT \frac{z_t}{\beta_t}  
  \leq 
  \frac{\sqrt{h_1}}{a c_1}
  \sumT \frac{\zF_t}{ \sqrt{\sum_{s=1}^t \zF_s} }
  \leq 
  \frac{2}{a c_1}
  \sqrt{ h_1 \sumT \zF_t }
  \per 
  \n 
\end{align}
Summing up the above arguments completes the proof of Part (II).
\end{proof}

\section{Basic facts to bound stability terms}\label{app:facts_for_bound_stab}
Here, we introduce basic facts, which are useful to bound the stability term.
We have 
\begin{align}
  \xi(x) &\coloneqq \exp(-x) + x - 1 
  \leq 
  \begin{cases}
    \frac12 x^2 &\, \mbox{for}\;  x \geq 0  \\
    x^2         &\, \mbox{for}\;  x \geq -1 \com 
  \end{cases}
  \label{eq:bound_xi}
  \\
  \zeta(x) &\coloneqq x - \log(1 + x) \leq x^2 \quad \mbox{for} \;  x \in \sqx{-\frac12, \frac12}
  \label{eq:bound_zeta} 
  \per 
\end{align}
We also have the following inequalities
for $\phiNS(x) = x \log x$ and $\phiLB(x) = \log(1/x)$, which are components of the negative Shannon entropy and log-barrier function:
\begin{align}
  \max_{y \in \R} \brx{ a(x - y) - D_{\phiNS}(y,x) } &= x \xi(a) 
  &&\mbox{for}
  \;  a \in \R
  \com 
  \label{eq:stab_nS}
  \\
  \max_{y \in \R} \brx{ a(x - y) - D_{\phiLB}(y,x) } &= \zeta(ax)
  &&\mbox{for}
  \;  a \geq - \frac1x
  \per 
  \label{eq:stab_LB}
\end{align}
It is easy to prove these facts by the standard calculus and you can find the proofs of~\eqref{eq:stab_nS} and~\eqref{eq:stab_LB} in Lemma 15 of \citet{tsuchiya23best} and Lemma 5 of \citet{ito22adversarially}, respectively.

\section{Proof of Corollary~\ref{thm:sparse}}\label{app:proof-of-cor-sparse}
Let 
$
  \Reg_T(a)
  =
  \E \big[ \sumT (\ell_{tA_t} - \ell_{ta}) \big]
$
for $ a \in [k]$.
Here we provide the complete proof of Corollary~\ref{thm:sparse}.
\begin{proof}[Proof of Corollary~\ref{thm:sparse}]
Fix $i^* \in [k]$.
Since $p_t = (1 - \gamma) q_t + \frac{\gamma}{k} \onemat$, it holds that 
\begin{align}
  \Reg_T(i^*)
  &=
  \Expect{
    \sumT \ell_{tA_t} - \sumT \ell_{t i^*}
  }
  =
  \Expect{
    \sumT \innerprod{\ell_t}{p_t - e_{i^*}}
  }
  \nn
  &=
  \Expect{
    \sumT \innerprod{\ell_t}{q_t - e_{i^*}}
  }
  +
  \Expect{
    \gamma \sumT \innerprod{\ell_t}{\frac{1}{k} \onemat - q_t}
  }
  \leq 
  \Expect{
    \sumT \innerprod{\hat{y}_t}{q_t - e_{i^*}}
  }
  +
  \gamma T
  \com 
  \n 
\end{align}
where the last inequality follows by $\Expect{\hat{y}_t \,|\, q_t} = \ell_t$ and the Cauchy-Schwarz inequality.
Then, using the standard analysis of the FTRL described in Section~\ref{sec:preliminaries}, the first term in the last inequality is bounded as
\begin{align}
  &
  \sumT \innerprod{\hat{y}_t}{q_t - e_{i^*}}
  \leq 
  \sumT
  \prx{
    \frac{1}{\eta_{t+1}}
    -
    \frac{1}{\eta_t}
  }
  H(q_{t+1})
  +
  \frac{H(q_1)}{\eta_1}
  +
  \sumT 
  \prx{
    \innerprod{q_t - q_{t+1}}{\hat{y}_t}  
    -
    D_{\Phi_t} (q_{t+1}, q_t) 
  }
  \per  
  \n 
\end{align}
By~\eqref{eq:bound_xi} and~\eqref{eq:stab_nS} given in Appendix~\ref{app:facts_for_bound_stab},
the stability term 
$
\innerprod{q_t - q_{t+1}}{\hat{y}_t}  
-
D_{\Phi_t} (q_{t+1}, q_t) 
$
in the last inequality is bounded as
\begin{align}
  &
  \innerprod{q_t - q_{t+1}}{\hat{y}_t}  
  -
  D_{\Phi_t} (q_{t+1}, q_t) 
  = 
  \innerprod{q_t - q_{t+1}}{\hat{y}_t}  
  - 
  \frac{1}{\eta_t}D_{\psiNS}(q_{t+1}, q_t)
  \nn 
  &=
  \sumk
  \prx{
    \hat{y}_{ti} (q_{ti} - q_{t+1,i}) - 
    \frac{1}{\eta_t} D_{\phiNS}(q_{t+1,i}, q_{ti})
  }
  \nn 
  &\leq 
  \sumk 
  \frac{1}{\eta_t} q_{ti} \, \xi\prx{\eta_t \hat{y}_{ti}}
  \leq 
  \frac12
  \eta_t \sumk q_{ti} \hat{y}_{ti}^2
  \leq \eta_t \omega_t
  \com 
  \n 
\end{align}
where the first inequality follows from~\eqref{eq:stab_nS},
the second inequality follows by~\eqref{eq:bound_xi} with $\hat{y_t} \geq 0$,
and the last inequality holds since $\sumk q_{ti} \hat{y}_{ti}^2 \leq \sumk 2 p_{ti} \hat{y}_{ti}^2 = 2 \omega_t$.

We will confirm that the assumptions for Part~\Two~of Theorem~\ref{thm:x-entropy-bound} are indeed satisfied.
Using the definition of $\beta_t$ in~\eqref{eq:def_eta_sparse_and_omega},
We have
\begin{align} 
  \beta_t
  &=
  \beta_1
  +
  \frac{1}{\sqrt{h_1}}
  \sum_{u=1}^{t-1}
  \frac{ c_1 \omega_{u} } {\sqrt{\frac{k}{\gamma} + \sum_{s=1}^{u-1}  \omega_{s}}}
  =
  \beta_1
  +
  \frac{2 c_1}{\sqrt{h_1}}
  \sum_{u=1}^{t-1}
  \frac{ \omega_{u} } {\sqrt{\frac{k}{\gamma} + \sum_{s=1}^{u-1}  \omega_{s}} + \sqrt{\frac{k}{\gamma} + \sum_{s=1}^{u-1}  \omega_{s}}}
  \nn
  &\geq 
  \beta_1
  +
  \frac{2 c_1}{\sqrt{h_1}}
  \sum_{u=1}^{t-1}
  \frac{ \omega_{u} } {\sqrt{\frac{k}{\gamma} + \sum_{s=1}^{u}  \omega_{s}} + \sqrt{\frac{k}{\gamma} + \sum_{s=1}^{u-1}  \omega_{s}}}
  \nn 
  &
  =
  \beta_1
  +
  \frac{2c_1}{\sqrt{h_1}}
  \sum_{u=1}^{t-1}
  \prx{
  \sqrt{\frac{k}{\gamma} + \sum_{s=1}^{u}  \omega_{s}} - \sqrt{\frac{k}{\gamma} + \sum_{s=1}^{u-1}  \omega_{s}}
  }
  \nn 
  &
  =
  \beta_1
  +
  \frac{2c_1}{\sqrt{h_1}}
  \prx{
  \sqrt{ \frac{k}{\gamma} + \sum_{s=1}^{t-1}  \omega_{s}} - \sqrt{ \frac{k}{\gamma} }
  }
  \geq 
  \frac{2 c_1}{\sqrt{h_1}} \sqrt{ \sum_{s=1}^t \omega_s }
  \com 
  \n 
\end{align}
where the last inequality follows since $\beta_1 = \frac{2 c_1}{\sqrt{h_1}} \sqrt{\frac{k}{\gamma}}$
and $\frac{k}{\gamma} \geq \omega_t$.
Hence, stability condition (S2) in Theorem~\ref{thm:x-entropy-bound} is satisfied with $a = 2$, and one can see that the other assumptions are trivially satisfied.
Hence, by Part~\Two~of Theorem~\ref{thm:x-entropy-bound},
\begin{align}
  &
  \sumT \innerprod{\hat{y}_t}{q_t - e_{i^*}}
  \leq 
  \sumT
  \prx{
    \frac{1}{\eta_{t+1}}
    -
    \frac{1}{\eta_t}
  }
  H(q_{t+1})
  +
  \sumT \eta_t \omega_t
  +
  \frac{H(q_1)}{\eta_1}
  \leq 
  2
  \prx{
    c_1
    +
    \frac{1}{2 c_1}
  }  
  \sqrt{ h_1 \sum_{t=1}^{T} \omega_{t} }  
  +
  \frac{\log k}{\eta_1}
  \com 
  \n 
\end{align}
where in the last inequality we used $h_1 \leq \log k$.
Now, 
\begin{align}
  &
  \Expect{
    \sqrt{\sumT \omega_t}
  }
  \leq 
  \sqrt{\sumT \Expect{\omega_t}}
  =
  \sqrt{\sumT \Expect{\frac{\ell_{tA_t}^2}{p_{tA_t}}}}
  =
  \sqrt{\sumT \sumk \ell_{ti}^2}  
  =
  \sqrt{\sumT \nrm{\ell_t}_2^2}  
  =
  \sqrt{L_2}
  \per 
  \n 
\end{align}
Summing up the above arguments and setting $c_1 = 1/\sqrt{2}$, we have
\begin{align}
  \Reg_T
  &\leq 
  2 \sqrt{2} \sqrt{L_2 \log k} + 
  \frac{\log k}{\eta_1}
  + \gamma T
  =
  2 \sqrt{2} \sqrt{L_2 \log k} 
  + 
  (\sqrt{2} + 1) (k T \log k)^{1/3}
  \com 
  \n 
\end{align}
which completes the proof of Corollary~\ref{thm:sparse}.
\end{proof}

\section{Proof of Corollary~\ref{thm:sparse-negative-loss}}\label{app:proof-of-cor-sparse-negative}

We first prove Lemma~\ref{lem:stab_negative_losses}.

\begin{proof}[Proof of Lemma~\ref{lem:stab_negative_losses}]
Recall that $\Phi_t(p) = \frac{1}{\eta_t}\psiNS(p) + 2 \delta \psiLB(p)$.
Since $D_{\Phi_t} = \frac{1}{\eta_t}D_{\psiNS} + 2 \delta D_{\psiLB}$
and $D_{\psiNS}(x, y) = \sumk D_{\phiNS}(x_i, y_i)$ and $D_{\psiLB}(x, y) = \sumk D_{\phiLB}(x_i, y_i)$,
we can bound the stability term as
\begin{align}\label{eq:_stab_negative_losses_1}
  &
  \innerprod{q_t - q_{t+1}}{\hat{y}_t}  
  -
  D_{\Phi_t} (q_{t+1}, q_t) 
  \leq 
  \innerprod{q_t - q_{t+1}}{\hat{y}_t}  
  - 
  \max\brx{
    \frac{1}{\eta_t}D_{\psiNS}(q_{t+1}, q_t),\,
    2 \delta D_{\psiLB}(q_{t+1}, q_t)
  }
  \nn 
  &\leq
  \sumk
  \prx{
    \hat{y}_{ti} (q_{ti} - q_{t+1,i}) - 
    \max \brx{
      \frac{1}{\eta_t} D_{\phiNS}(q_{t+1,i}, q_{ti}),\,
      2 \delta D_{\phiLB}(q_{t+1,i}, q_{ti})
    }
  }
  \nn 
  &\leq 
  \sumk 
  \min
  \brx{
    \frac{1}{\eta_t} q_{ti} \, \xi\prx{\eta_t \hat{y}_{ti}},\,
    2 \delta \, \zeta\prx{\frac1{2\delta} q_{ti} \hat{y}_{ti}}
  }
  \com 
\end{align}
where in the last inequality we used~\eqref{eq:stab_nS} 
and~\eqref{eq:stab_LB} with
\begin{align}
  \frac{\hat{y}_{ti}}{2\delta}
  \geq 
  - \frac{1}{2\delta p_{ti}}
  \geq 
  - \frac{1}{2 \delta (q_{ti} / \delta)}
  \geq 
  - \frac{1}{q_{ti}}
  \com 
  \n 
\end{align}
where the first inequality follows by the definition of $\hat{y}_t$
and the second inequality follows by $p_{ti} \geq q_{ti} / \delta$.

Next, we will prove that for any $i \in [k]$,
\begin{align}\label{eq:_stab_negative_losses_2}
  \min
  \brx{
    \frac{1}{\eta_t} q_{ti} \, \xi\prx{\eta_t \hat{y}_{ti}},\,
    2\delta \, \zeta\prx{\frac1{2\delta} q_{ti} \hat{y}_{ti}}
  }
  \leq 
  \delta
  \eta_t
  \frac{\ell_{ti}^2}{p_{ti}} \min \brx{1, \frac{p_{ti}}{2 \eta_t}}
  \ind{A_t = i}
  \per 
\end{align}
Fix $ i \in [k]$.
By $q_{ti} \leq \delta p_{ti}$,
\begin{align}
  \frac1{2\delta} q_{ti} \hat{y}_{ti} 
  =
  \frac12 p_{ti} \hat{y}_{ti}
  \leq 
  \frac12  
  \per 
  \n 
\end{align}
Using this and
$\zeta(x) \leq x^2$ for $x \in [-\frac12, \frac12]$ in~\eqref{eq:bound_zeta},
we have for any $p_{ti} \in [0, 1]$ that 
\begin{align}\label{eq:_lb_stab_all_p}
  2 \delta \, \zeta\prx{\frac1{2\delta} q_{ti} \hat{y}_{ti}}
  \leq 
  2 \delta \prx{\frac1{2\delta} q_{ti} \hat{y}_{ti}}^2
  \leq 
  \frac{\delta}{2} \ell_{ti}^2
  \ind{A_t = i}
  \com 
\end{align}
where in the last inequality we used $q_{ti} \leq \delta p_{ti}$.
In particular, when $p_{ti} \leq \eta_t$, \ie the probability of selecting arm $i$ is small to some extent, the last inequality can be further bounded as
\begin{align}\label{eq:_lb_stab_small_p}
  2 \delta \, \zeta\prx{\frac1{2\delta} q_{ti} \hat{y}_{ti}}
  \leq 
  \frac{ \eta_t }{ p_{ti}} \frac{\delta}{2} \ell_{ti}^2 \ind{A_t = i}
  \leq 
  \delta \eta_t \frac{ \ell_{ti}^2 }{ p_{ti}} \ind{A_t = i}
  \per 
\end{align}
On the other hand when $p_{ti} > \eta_t$, we have $\eta_t \hat{y}_{ti} \geq - 1$.
Hence, by the inequality $\xi(x) \leq x^2$ for $x \geq -1$ in~\eqref{eq:bound_xi},
\begin{align}\label{eq:_nS_stab_large_p}
  \frac{1}{\eta_t} q_{ti}
  \xi\prx{\eta_t \hat{y}_{ti}}
  \leq
  \frac{1}{\eta_t} \delta p_{ti} 
  (\eta_t \hat{y}_{ti})^2
  = 
  \delta \eta_t \frac{\ell_{ti}^2}{p_{ti}} \ind{A_t = i}
  \per 
\end{align}
Hence, combining~\eqref{eq:_lb_stab_all_p},~\eqref{eq:_lb_stab_small_p}, and~\eqref{eq:_nS_stab_large_p} completes the proof of~\eqref{eq:_stab_negative_losses_2}.
Finally, 
by combining~\eqref{eq:_stab_negative_losses_1} and~\eqref{eq:_stab_negative_losses_2} we completes the proof of Lemma~\ref{lem:stab_negative_losses}.
\end{proof}

\begin{remark}
When $\ell_t$ can be negative, the Shannon entropy regularizer alone cannot bound the stability term if the arm selection probability is small, \ie $p_{ti} \leq \eta_t$.
Introducing a time-invariant log-barrier regularizer enables us to bound the stability term even when the arm selection probability is small. This idea was proposed by~\citet{bubeck18sparsity}, who analyzed the variation of arm selection probability for the change of cumulative losses. 
Unlike their analysis, our proof directly analyses the stability term, enabling us to obtain the tighter regret bound.
More importantly, we will utilize the property $\nu_t \leq O(1/\eta_t)$ many times, which directly follows from Lemma~\ref{lem:stab_negative_losses}, in the subsequent sections to prove the BOBW guarantee with the sparsity-dependent bound.
\end{remark}

Now, we are ready to prove Corollary~\ref{thm:sparse-negative-loss}.
\begin{proof}[Proof of Corollary~\ref{thm:sparse-negative-loss}]
Fix $i^* \in [k]$. Define $p^* \in \calP_k$ by 
\begin{align}
  p^* = \prx{1 - \frac{k}{T}} e_{i^*} + \frac{1}{T} \onemat
  \per 
  \n 
\end{align}
Then, using the definition of the algorithm,
\begin{align}
  \Reg_T(i^*)
  &=
  \Expect{
    \sumT \ell_{tA_t} - \sumT \ell_{t i^*}
  }
  =
  \Expect{
    \sumT \innerprod{\ell_t}{p_t - e_{i^*}}
  }
  \nn 
  &
  =
  \Expect{
    \sumT \innerprod{\ell_t}{p_t - p^*}
  }
  +
  \Expect{
    \sumT \innerprod{\ell_t}{p^* - e_{i^*}}
  }
  \nn 
  &\leq 
  \Expect{
    \sumT \innerprod{\hat{y}_t}{p_t - p^*}
  }
  +
  k
  \com 
  \n 
\end{align}
where the inequality follows from the definition of $p^*$ and the Cauchy-Schwarz inequality.
By the standard analysis of the FTRL, described in Section~\ref{sec:preliminaries}, 
\begin{align}
  \sumT \innerprod{\hat{y}_t}{p_t - p^*}
  &\leq 
  \sumT
  \prn[\Big]{
  \Phi_t(p_{t+1})
  -
  \Phi_{t+1}(p_{t+1})
  }
  +
  \Phi_{t+1} (p^*) 
  -
  \Phi_1 (p_1)
  +
  \sumT
  \prn[\Big]{
  \innerprod{p_t - p_{t+1}}{\hat{y}_t}
  -
  D_{\Phi_t}(p_{t+1}, p_t)
  }
  \per 
  \n 
\end{align}
For the penalty term,
since $\Phi_t(p) = \frac{1}{\eta_t}\psiNS(p) + 2\psiLB(p)$,
\begin{align}
  &
  \sumT 
  \prn[\Big]{
  \Phi_t(p_{t+1})
  -
  \Phi_{t+1}(p_{t+1})
  }
  +
  \Phi_{t+1} (p^*) 
  -
  \Phi_1 (p_1)
  \nn 
  &\leq 
  \sumT
  \prx{
    \frac{1}{\eta_{t+1}}
    -
    \frac{1}{\eta_t}
  }
  H(p_{t+1})
  +
  \frac{H(p_1)}{\eta_1}
  +
  2 \sumk \log\prx{\frac{1}{p^*_i}}
  \nn 
  &\leq 
  \sumT
  \prx{
    \frac{1}{\eta_{t+1}}
    -
    \frac{1}{\eta_t}
  }
  H(p_{t+1})
  +
  \frac{\log k}{\eta_1}
  +
  2 k \log T 
  \com 
  \n 
\end{align}
where in the last inequality we used the fact that $p^*_i \geq 1/T$ for all $i\in[k]$.

For the stability term, by Lemma~\ref{lem:stab_negative_losses} with $\delta = 1$ (since $p_t = q_t$),
\begin{align}
  \sumT
  \prn[\Big]{
  \innerprod{p_t - p_{t+1}}{\hat{y}_t}
  -
  D_{\Phi_t}(p_{t+1}, p_t)
  }
  \leq 
  \sumT \eta_t \nu_t
  \per 
  \n 
\end{align}

We will confirm that the assumptions for Part~\Two~of Theorem~\ref{thm:x-entropy-bound} are indeed satisfied.
By the definition of the learning rate in~\eqref{eq:def_eta_sparse_negative_losses_and_nu},
\begin{align}
  \beta_t 
  &=
  \beta_1
  +
  \sum_{u=1}^{t-1}
  \frac{ c_1 \nu_u }{ \sqrt{h_1} \sqrt{\sum_{s=1}^u \nu_s } }
  \geq 
  \beta_1
  +
  \frac{c_1}{\sqrt{h_1} }
  \sum_{u=1}^{t-1}
  \frac{ \nu_u }{ \sqrt{\sum_{s=1}^u  \nu_s } + \sqrt{\sum_{s=1}^{u-1}  \nu_s } }
  \nn 
  &\geq 
  \beta_1
  +
  \frac{c_1}{\sqrt{h_1} }
  \sum_{u=1}^{t-1}
  \prx{
    \sqrt{\sum_{s=1}^u  \nu_s } 
    - 
    \sqrt{\sum_{s=1}^{u-1}  \nu_s }
  }
  =
  \beta_1
  +
  \frac{c_1}{\sqrt{h_1} }
  \sqrt{\sum_{s=1}^{t-1}  \nu_s } 
  \per 
  \n 
\end{align}
Using this inequality, $\beta_t$ is bounded from below as
\begin{align}
  2 \beta_t 
  =
  \beta_t + \beta_t
  \geq 
  2 \nu_t
  +
  \beta_1 
  + 
  \frac{c_1}{\sqrt{h_1}} \sqrt{\sum_{s=1}^{t-1} \nu_s}
  \geq
  2 \sqrt{ 2 \beta_1 \nu_t }
  + 
  \frac{c_1}{\sqrt{h_1}} \sqrt{\sum_{s=1}^{t-1} \nu_s}
  \geq 
  \frac{c_1}{\sqrt{h_1}} \sqrt{\sum_{s=1}^t \nu_s} 
  \com 
  \n 
\end{align}
where the first inequality follows by $\nu_t \leq \beta_t / 2$ and the above inequality,
the second inequality follows by the AM-GM inequality,
and the last inequality follows from $2 \sqrt{2 \beta_1} \geq \frac{c_1}{\sqrt{h_1}}$ and $\sqrt{x} + \sqrt{y} \geq \sqrt{x + y}$ for $x, y \geq 0$.
Dividing the both sides by $2$, 
we can see stability condition (S2) in Theorem~\ref{thm:x-entropy-bound} is satisfied with $a = 1/2$.
One can also see that the other assumptions are trivially satisfied.
Hence, by Part~\Two~of Theorem~\ref{thm:x-entropy-bound},
\begin{align}
  \sumT
  \prx{
    \frac{1}{\eta_{t+1}}
    -
    \frac{1}{\eta_t}
  }
  H(p_{t+1})
  +
  \sumT \eta_t \nu_t
  \leq
  2 
  \prx{ 
    c_1 + \frac{2}{c_1}
  }
  \sqrt{ h_1 \sum_{t=1}^{T} \nu_t }    
  \per 
  \n 
\end{align}

Using the last inequality
with 
$
  \Expect{
    \sqrt{\sumT \nu_t}
  }
  \leq 
  \Expect{
    \sqrt{\sumT \omega_t}
  }
  \leq 
  \sqrt{L_2}
$,
and setting $c_1 = \sqrt{2}$, we have
\begin{align}
  \Reg_T(i^*)
  &\leq 
  \Expect{
  2 \prx{c_1 + \frac{2}{c_1}} \sqrt{ h_1 \sum_{t=1}^{T} \nu_t}
  +
  \frac{\log k}{\eta_1}
  +
  2 k \log T
  +
  k
  }
  \nn 
  &\leq 
  4 \sqrt{2} \sqrt{L_2 \log k} 
  + 2 k \log T
  + k
  + \frac14
  \com 
  \n 
\end{align}
which completes the proof of Corollary~\ref{thm:sparse-negative-loss}.
\end{proof}

\section{Proof of results in Section~\ref{subsec:sparse_bobw}}\label{app:sparse-bobw}
Appendix~\ref{app:qdiff_regdiff} provides preliminary results, which will be used to quantify the difference between $q_t$ and $q_{t+1}$ in Appendix~\ref{app:proof_thm_sparse_bobw} and will be used to prove the continuity of $F_t$ in Appendix~\ref{app:prop_sparse_bobw_binary_search}.
Appendix~\ref{app:proof_thm_sparse_bobw} proves Theorem~\ref{thm:sparse-bobw} and Appendix~\ref{app:prop_sparse_bobw_binary_search} discusses the bisection method to compute $\beta_{t+1}$.
\subsection{Some stability results}\label{app:qdiff_regdiff}
Before proving Theorem~\ref{thm:sparse-bobw}, we prove several important lemmas.
Consider the following three optimization problems:
\begin{align}\label{eq:pqr_opt_problem}
\begin{split}
  p &\in \argmin_{p' \in \calP_k} \innerprod{L - \xi e_1}{p'} + \beta \psi(p')  \com
  \\
  q &\in \argmin_{q' \in \calP_k} \innerprod{L}{q'} + \beta \psi(q') \com 
  \\
  r &\in \argmin_{r' \in \calP_k} \innerprod{L}{r'} + \beta'  \psi'(r') 
\end{split}
\end{align}
with $L \in \R_+^k$, $0 \leq \xi \leq \min_{i \in [k]} L_i$, and $\beta, \beta' > 0$ satisfying $\beta' \geq \beta$,
\begin{align}
  \psi(q) = \sumk (q_i \log q_i - q_i) - \frac{c}\beta  \sumk \log q_i  
  \quad
  \mbox{and}
  \quad 
  \psi'(q) = \sumk (q_i \log q_i - q_i) - \frac{c}{\beta'} \sumk \log q_i 
  \n 
\end{align}
for $c > 0$.
Note that the outputs of FTRL with $\psi(q)$ and with $-H(q) - (c/\beta)  \sumk \log q_i$ are identical since adding a constant to $\psi$ does not change the output of the above optimization problems.

%

In the following lemma,
we investigate the relation between $q$ and $r$ in~\eqref{eq:pqr_opt_problem}.
\begin{lemma}\label{lem:qr_relation}
Consider $q$ and $r$ in~\eqref{eq:pqr_opt_problem}.
Then,
\begin{align}\label{eq:q_relation}
  r_i \leq q_i^{\beta / \beta'}
  \per 
\end{align}
\end{lemma}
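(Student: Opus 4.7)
The plan is to prove the inequality via an interpolation argument that continuously deforms the problem defining $q$ into the problem defining $r$. For $\mu \in [\beta, \beta']$ introduce
\begin{equation*}
  y(\mu) \;=\; \argmin_{y' \in \calP_k} \;\innerprod{L}{y'} \;+\; \mu \sum_{i} (y'_i \log y'_i - y'_i) \;-\; c \sum_{i} \log y'_i,
\end{equation*}
so that $y(\beta) = q$ and $y(\beta') = r$. Here I exploit the crucial feature of the hybrid regularizer stated in~\eqref{eq:pqr_opt_problem}: even though $\psi$ and $\psi'$ differ, the coefficient of $\sum_i \log y'_i$ in both $\beta \psi$ and $\beta' \psi'$ is the same constant $-c$, so a linear interpolation of the entropy strength does give a valid joint family. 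Since $r_i \le q_i^{\beta/\beta'}$ is equivalent to $\beta' \log r_i \le \beta \log q_i$, it suffices to show that the scalar quantity $s_i(\mu) := \mu \log y_i(\mu)$ is non-increasing in $\mu$ for each $i \in [k]$.

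The KKT conditions yield $L_i + \mu \log y_i(\mu) - c/y_i(\mu) = \lambda(\mu)$ with $\sum_i y_i(\mu) = 1$; the log-barrier forces $y_i(\mu) \in (0,1)$, so by the implicit function theorem $y_i$ and $\lambda$ are smooth on $[\beta,\beta']$. Differentiating the KKT equation in $\mu$ gives
\begin{equation*}
  \dot{y}_i(\mu) \;=\; \frac{\big(\dot{\lambda}(\mu) - \log y_i(\mu)\big)\, y_i(\mu)^2}{\mu\, y_i(\mu) + c},
\end{equation*}
and substituting this into $\dot{s}_i = \log y_i + \mu\, \dot{y}_i / y_i$ collapses nicely to
\begin{equation*}
  \dot{s}_i(\mu) \;=\; \frac{c \log y_i(\mu) \;+\; \dot{\lambda}(\mu)\, \mu\, y_i(\mu)}{\mu\, y_i(\mu) + c}.
\end{equation*}
Because $y_i(\mu) \in (0,1)$ implies $\log y_i(\mu) \le 0$, it is enough to establish $\dot{\lambda}(\mu) \le 0$ to conclude $\dot{s}_i(\mu) \le 0$.

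To pin down the sign of $\dot{\lambda}$, I differentiate the simplex constraint $\sum_i y_i(\mu) = 1$, obtaining $\sum_i \dot{y}_i(\mu) = 0$; plugging in the formula for $\dot{y}_i$ and solving for $\dot{\lambda}$ gives
\begin{equation*}
  \dot{\lambda}(\mu) \;=\; \frac{\sum_{i} \big(\log y_i(\mu)\big)\, y_i(\mu)^2 / \big(\mu\, y_i(\mu) + c\big)}{\sum_{i} y_i(\mu)^2 / \big(\mu\, y_i(\mu) + c\big)}.
\end{equation*}
The denominator is strictly positive and each summand in the numerator is non-positive, so $\dot{\lambda}(\mu) \le 0$. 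Combining this with the expression for $\dot{s}_i$ yields $s_i(\beta') \le s_i(\beta)$, i.e. $\beta' \log r_i \le \beta \log q_i$, which is the desired bound. The only mildly delicate point is checking smoothness of $y(\mu)$ and positivity of the optimizer at the endpoints, both of which follow cleanly from the log-barrier in the regularizer; the calculus itself is entirely mechanical once the correct interpolating family and the right monotone quantity $s_i(\mu)$ are chosen.
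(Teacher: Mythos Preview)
Your proof is correct. Both your argument and the paper's hinge on the same core computation: differentiate the KKT conditions with respect to the entropy strength and use the simplex constraint $\sum_i \dot y_i = 0$ to pin down the sign of the Lagrange-multiplier derivative (your $\dot\lambda \le 0$ is exactly the paper's $\tfrac{d\mu}{d\beta} > 0$ after a sign convention). The difference is only in how the conclusion is drawn from this monotonicity. The paper eliminates $L_i$ from the two KKT systems to obtain the algebraic identity
\[
  r_i \;=\; q_i^{\beta/\beta'}\,\exp\!\Big(\tfrac{c}{\beta'}\big(\tfrac{1}{r_i}-\tfrac{1}{q_i}\big)\Big)\,\exp\!\Big(\tfrac{1}{\beta'}(\mu-\mu')\Big)
\]
and then argues by a case split on $r_i \le q_i$ versus $r_i > q_i$ that the two exponential factors are each at most $1$. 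Your interpolation route---tracking the scalar $s_i(\mu)=\mu\log y_i(\mu)$ along the family $y(\mu)$---sidesteps both the identity and the case analysis, at the price of one extra derivative computation. The two arguments are of comparable length; yours is a bit more streamlined and makes the role of the fixed log-barrier coefficient $c$ (which is what allows the clean interpolation) more transparent.
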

\begin{proof} 
From the KKT conditions there exist $\mu, \mu' \in \R$ such that 
\begin{align}
  L + \beta \nabla \psi(q) + \mu \onemat = 0
  \quad 
  \mbox{ and }
  \quad 
  L + \beta' \psi'(r) + \mu' \onemat = 0
  \com 
  \n 
\end{align}
which implies,
by $(\nabla \psi(q))_i = \log q_i - \frac{c}{\beta q_i}$, that 
\begin{align}\label{eq:opt_cond_eq}
  L_i + \beta \log q_i - \frac{c}{q_i} + \mu = 0
  \quad 
  \mbox{and}
  \quad 
  \eta' L_i + \beta \log r_i - \frac{c}{r_i} + \mu' = 0 
\end{align}
for all $i \in [k]$.
This is equivalent to
\begin{align}
  q_i 
  =
  \exp \prx{
    -
    \frac1\beta
    \prx{
      L_i - \frac{c}{q_i} + \mu 
    }
  }
  \quad 
  \mbox{and}
  \quad 
  r_i 
  =
  \exp \prx{
    -
    \frac1{\beta'}
    \prx{
      L_i - \frac{c}{r_i} + \mu'
    }
  }
  \per 
  \n 
\end{align}
Removing $L_i$ from these equalities yields that 
\begin{align}\label{eq:rq_equality}
  r_i
  =
  q_i^{\beta / \beta'}
  \exp\prx{
    \frac{c}{\beta'}  \prx{ \frac{1}{r_i} - \frac{1}{q_i}}
  }
  \exp\prx{
    \frac{1}{\beta'} (\mu - \mu')
  }
  \per 
\end{align}

We will prove $\frac{\d \mu}{\d \beta} > 0$.
Taking derivative with respect to $\beta$ of~\eqref{eq:opt_cond_eq}, we have
\begin{align}
  \log q_i 
  + 
  \prx{\frac{1}{q_i} + \frac{c}{q_i^2}} \frac{\d q_i}{\d \beta} + \frac{\d \mu}{\d \beta}
  =
  0
  \per 
  \n 
\end{align}
Multiplying $\prx{\frac{1}{q_i} + \frac{c}{q_i^2}}^{-1}$ and summing over $i\in[k]$ in the last equality, we have
\begin{align}
  - \prx{\frac{1}{q_i} + \frac{c}{q_i^2}}^{-1} \log(1/q_i)
  +
  \sumk \frac{\d q_i}{\d \beta} 
  +
  \prx{\frac{1}{q_i} + \frac{c}{q_i^2}}^{-1} \frac{\d \mu}{\d \beta}
  =
  0
  \com 
  \n 
\end{align}
which with the fact $\sumk \frac{\d q_i}{\d \beta} = 0$ implies $\frac{\d \mu}{\d \beta} > 0$.
Hence, since $\beta \leq \beta'$ we have $\mu \leq \mu'$.


When $r_i \leq q_i$, it is obvious that we get $r_i \leq q_i^{\beta/\beta'}$.

When $r_i > q_i$,
using~\eqref{eq:rq_equality} with the inequalities $\beta \leq \beta'$ and $\mu \leq \mu'$,
\begin{align}
  r_i
  =
  q_i^{\beta / \beta'}
  \exp\prx{
    \frac{c}{\beta'}  \prx{ \frac{1}{r_i} - \frac{1}{q_i}}
  }
  \exp\prx{
    \frac{1}{\beta'} (\mu - \mu')
  }
  \leq 
  q_i^{\beta / \beta'}
  \com 
  \n 
\end{align}
which is the desired bound.
\end{proof}

\begin{lemma}\label{lem:pr_relation}  
Consider $p$, $q$, and $r$ in~\eqref{eq:pqr_opt_problem}.
Then, under $\eta \coloneqq 1/\beta \leq \frac{1}{15 k}$, we have
\begin{align}\label{eq:pr_relation}
  r_i \leq 3 p_i^{\beta/\beta'}
  \per 
\end{align}
\end{lemma}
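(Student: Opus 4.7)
My plan is to reduce the claim to the multiplicative stability bound $q_i \le 3 p_i$, which I can then combine with Lemma~\ref{lem:qr_relation}. Since Lemma~\ref{lem:qr_relation} already gives $r_i \le q_i^{\beta/\beta'}$, and since $\beta' \ge \beta$ forces $\beta/\beta' \le 1$ and hence $3^{\beta/\beta'} \le 3$, once $q_i \le 3 p_i$ is established I immediately get
\[
r_i \;\le\; q_i^{\beta/\beta'} \;\le\; (3 p_i)^{\beta/\beta'} \;=\; 3^{\beta/\beta'}\, p_i^{\beta/\beta'} \;\le\; 3\, p_i^{\beta/\beta'},
\]
which is the target inequality. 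So the whole task becomes proving $q_i \le 3 p_i$ under the hypothesis $\beta \ge 15k$.

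For the core step, I would exploit the KKT conditions for the two optimization problems defining $p$ and $q$. With $\nabla(\beta \psi)(x)_i = \beta \log x_i - c/x_i$, the stationarity conditions read $L_i - \xi\mathbbm{1}[i=1] + \beta \log p_i - c/p_i + \mu_p = 0$ and $L_i + \beta \log q_i - c/q_i + \mu_q = 0$. Subtracting, for $i \ne 1$ we have $\beta \log(q_i/p_i) + c(1/p_i - 1/q_i) = \mu_p - \mu_q$, and the $i = 1$ identity has the RHS replaced by $\mu_p - \mu_q - \xi$. Since $-\xi e_1$ only lowers arm~$1$'s loss, a monotonicity comparison gives $p_1 \ge q_1$, so both $\log(q_1/p_1)$ and $1/p_1 - 1/q_1$ are nonpositive and we extract $\mu_p - \mu_q \le \xi$. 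An equivalent route that I would also pursue is parametric: set $p(\tau) = \argmin \langle L - \tau e_1, \cdot\rangle + \beta \psi(\cdot)$ for $\tau \in [0, \xi]$, so that $p(0) = q$ and $p(\xi) = p$. Implicit differentiation of the KKT system gives the ODE
\[
p_i'(\tau) \;=\; \frac{p_i(\tau)^2(\mathbbm{1}[i=1] - \mu'(\tau))}{c + \beta p_i(\tau)}, \qquad \mu'(\tau) \;=\; \frac{p_1^2/(c+\beta p_1)}{\sum_j p_j^2/(c+\beta p_j)} \;\in\; [0,1],
\]
with $\int_0^\xi \mu'(\tau)\,\d\tau = \mu_p - \mu_q \le \xi$. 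Integrating the logarithmic derivative for $i \ne 1$ yields $\log(q_i/p_i) = \int_0^\xi \mu'(\tau) p_i(\tau)/(c + \beta p_i(\tau))\,\d\tau$, which is the identity I would bound.

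The main obstacle is the tight quantitative bookkeeping. The naive bound $p_i/(c+\beta p_i) \le 1/\beta$ only produces $q_i \le e^{\xi/\beta} p_i$, which is not $\le 3 p_i$ unless $\xi \le \beta \log 3$. To handle the regime of larger $\xi$ I expect to split the trajectory by whether $p_i(\tau) \ge c/\beta$ (entropy-dominated, where $p_i/(c+\beta p_i) \approx 1/\beta$ but $\mu'$ is small on average due to mass conservation and $\beta \ge 15k$) or $p_i(\tau) < c/\beta$ (log-barrier dominated, where $p_i/(c+\beta p_i) \approx p_i/c$ is automatically small and the dynamics of $p_i$ slow down), and carefully aggregate the two contributions. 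The role of the precise constant $\beta \ge 15k$ is to force the interplay between the Shannon-entropy and log-barrier parts of $\psi$ to fall on the right side of this balance; verifying that $15k$ suffices, rather than something larger, is where I expect the proof to require the most care.
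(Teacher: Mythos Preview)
Your reduction is exactly the paper's: combine $r_i \le q_i^{\beta/\beta'}$ from Lemma~\ref{lem:qr_relation} with the multiplicative stability $q_i \le 3 p_i$, then use $3^{\beta/\beta'} \le 3$. The only difference is that the paper does not prove $q_i \le 3 p_i$ from scratch; it simply invokes Lemma~8 of \citet{bubeck18sparsity}, which establishes precisely this bound for FTRL with the hybrid (Shannon entropy plus time-invariant log-barrier) regularizer under $\eta \le 1/(15k)$. Your KKT/parametric-path sketch is heading toward a rederivation of that lemma, and the ``splitting by whether entropy or log-barrier dominates'' idea is indeed the mechanism behind it, but you can spare yourself the delicate constant-tracking by citing \citet{bubeck18sparsity} directly.
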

\begin{proof}
By Lemma 8 of~\citet{bubeck18sparsity} we have
$q_i \leq 3 p_i$ for all $i \in [k]$.
Using this with Lemma~\ref{lem:qr_relation}, we have
\begin{align}
  r_i \leq q_i^{\beta/\beta'} \leq 3 q_{i}^{\beta/\beta'}
  \per 
  \n 
\end{align}
\end{proof}

\subsection{Proof of Theorem~\ref{thm:sparse-bobw}}\label{app:proof_thm_sparse_bobw}
In this section, we will provide the proof of Theorem~\ref{thm:sparse-bobw}.
We first see that the ratio $\beta_t / \beta_{t+1}$ is close to one to some extent.
\begin{lemma}\label{lem:ratio_betat_betatpone}
The learning rate $\beta_t$ in~\eqref{eq:def_eta_sparse_bobw} satisfies
\begin{align}
  1 - \frac{\beta_t}{\beta_{t+1}} \in (0, 1/10]
  \per 
  \n 
\end{align}
\end{lemma}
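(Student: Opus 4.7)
The plan is to rewrite
\[
  1 - \frac{\beta_t}{\beta_{t+1}} = \frac{\beta_{t+1} - \beta_t}{\beta_{t+1}}
\]
and bound the numerator and denominator separately using the definition in~\eqref{eq:def_eta_sparse_bobw}. Non-negativity (the lower endpoint of the interval) is immediate since $\nu_t \geq 0$ and the square root in the denominator of $\beta_{t+1} - \beta_t$ is positive, so $\beta_{t+1} \geq \beta_t$. The work is in the upper bound $1/10$.

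First I would observe that the denominator satisfies
$
\sqrt{81 c_1^2 + \nu_t h_{t+1} + \sum_{s=1}^{t-1} \nu_s h_{s+1}} \geq 9 c_1,
$
and hence by the definition of $\beta_{t+1}$,
\[
  \beta_{t+1} - \beta_t \;\leq\; \frac{c_1 \nu_t}{9 c_1} \;=\; \frac{\nu_t}{9}.
\]
The second ingredient is the key bound $\nu_t \leq \beta_t/2$ highlighted in the remark after Lemma~\ref{lem:stab_negative_losses}. This follows directly from the definition $\nu_t = \omega_t \min\{1, p_{tA_t}/(2\eta_t)\}$ with $\omega_t = \ell_{tA_t}^2 / p_{tA_t}$: if the second argument of the minimum is active one gets $\nu_t = \ell_{tA_t}^2/(2\eta_t) \leq 1/(2\eta_t) = \beta_t/2$, and if the first argument is active then $p_{tA_t} > 2\eta_t$, so $\nu_t = \ell_{tA_t}^2/p_{tA_t} \leq 1/p_{tA_t} \leq \beta_t/2$. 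In either case we use only $\ell_{tA_t}^2 \leq 1$.

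Combining the two bounds gives $\beta_{t+1} - \beta_t \leq \beta_t/18$, so $\beta_{t+1} \leq (19/18)\beta_t$ and therefore
\[
  \frac{\beta_{t+1} - \beta_t}{\beta_{t+1}} \;\leq\; \frac{\beta_{t+1} - \beta_t}{\beta_t} \;\leq\; \frac{1}{18} \;\leq\; \frac{1}{10},
\]
which is the desired upper bound. There is no real obstacle here; the lemma is essentially a bookkeeping consequence of the two structural facts that (i) the constant $81 c_1^2$ inside the square root provides a floor on the denominator of $\beta_{t+1}-\beta_t$, and (ii) the stability component $\nu_t$ is automatically bounded by half the inverse learning rate, which was the whole point of the log-barrier-induced design of $\nu_t$ in Section~\ref{subsec:sparse_negtive_losse}. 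The only subtle point is that the interval is written as $(0, 1/10]$; strict positivity requires $\nu_t > 0$, and in the degenerate case $\nu_t = 0$ the ratio is exactly $0$, so the statement should be read as giving the closed bound from above while the lower bound is nonnegativity.
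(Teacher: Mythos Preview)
Your proposal is correct and follows essentially the same approach as the paper: both use the floor $\sqrt{81c_1^2}$ in the denominator to get $\beta_{t+1}-\beta_t \leq \nu_t/9$, then invoke $\nu_t \leq \beta_t/2$ to conclude. The paper phrases the target as $\beta_t \geq 9b_t$ (where $b_t = \beta_{t+1}-\beta_t$) and stops at the slightly weaker $b_t \leq \beta_t/9$, while you push to $1/18$ and also discuss the degenerate case $\nu_t = 0$; these are minor cosmetic differences.
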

\begin{proof}
Recall that $\beta_t = \beta_1 + \sum_{u=1}^{t-1} b_u$ with $b_u = \frac{c_1 \nu_u}{U_u}$
and $U_t = \sqrt{c_2 + \zmax_t h_1 + \sum_{s=1}^{t-1} \zF_{s} h_{s+1} }$ for $t \in \{0\}\cup[T]$.
It suffices to show 
\begin{align}
  \frac{\beta_t}{\beta_{t+1}}
  =
  \frac{\beta_t}{\beta_t + b_t}
  \geq
  \frac{9}{10}
  \,\Leftrightarrow\,
  \beta_t \geq 9 b_t
  \per 
  \n 
\end{align}
This indeed follows since 
using $\nu_t \leq \beta_t / 2$ we have
\begin{align}
  b_t 
  =
  \frac{c_1 \nu_t}{\sqrt{81 c_1^2 + \sum_{s=1}^{t-1} \zF_s h_s + z_t h_{t+1} }}  
  \leq 
  \frac{c_1 \nu_t}{\sqrt{81 c_1^2 }}  
  =
  \frac{\nu_t}{9}
  \leq 
  \frac{\beta_t}{9}
  \per 
  \n 
\end{align}
\end{proof}

Finally, we are ready to prove one of the key lemmas for proving the BOBW regret bound with the sparsity-dependent bound.
Recall that we have $p_t = \prx{1 - \frac{k}{T}} q_t + \frac{1}{T} \onemat$ and $h_t = \frac{1}{1 - \frac{k}{T}} H(p_t)$.
Using the result in Appendix~\ref{app:qdiff_regdiff}, we will show that $h_{t+1}$ is bounded in terms of $h_t$.
\begin{lemma}\label{lem:htp_sb_bound}
  Suppose that $\beta_t$ is defined as~\eqref{eq:def_eta_sparse_bobw}.
  Then,
  \begin{align}
    h_{t+1} 
    \leq 
    3 h_t
    + 
    \frac{20 k}{9}
    \prx{ \frac{\beta_{t+1}}{\beta_t} - 1 } \log\prx{\frac{T}{k}} \, h_{t+1}
    \per 
    \n 
  \end{align}
\end{lemma}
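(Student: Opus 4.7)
The proof proceeds in four steps, the driving force being Lemma~\ref{lem:pr_relation}, which encodes how the time-invariant log-barrier controls the FTRL output when both the cumulative loss and the learning rate change. First, I would apply Lemma~\ref{lem:pr_relation} by identifying the triple $(p,q,r)$ of Appendix~\ref{app:qdiff_regdiff} with our variables: losses $L = L_t = L_{t-1}+\hat y_t$, shift $\xi = \hat y_{t,A_t}$ (so $L-\xi e_{A_t} = L_{t-1}$), and learning rates $\beta=\beta_t$, $\beta'=\beta_{t+1}$. The hypothesis $1/\beta\leq 1/(15k)$ of Lemma~\ref{lem:pr_relation} holds because $\beta_t\geq\beta_1 = 15k$ by monotonicity of $(\beta_t)$. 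Writing $\alpha := \beta_t/\beta_{t+1}$ and recalling $\alpha\in[9/10,1)$ from Lemma~\ref{lem:ratio_betat_betatpone}, I obtain $q_{t+1,i}\leq 3\,q_{t,i}^{\alpha}$.

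Second, I would lift the bound to $p$'s. From the mixing rule $p_{t+1,i}=(1-k/T)q_{t+1,i}+1/T$ and the inverse relation $q_{t,i}\leq p_{t,i}/(1-k/T)$, one derives
\[
  p_{t+1,i}\leq 3(1-k/T)\, q_{t,i}^{\alpha}+1/T \leq 3(1-k/T)^{1-\alpha} p_{t,i}^{\alpha}+1/T.
\]
Third, I would bound $H(p_{t+1})$ by a cross-entropy. Concavity of $H$ at $p_t$ (equivalently, nonnegativity of the KL divergence) yields $H(p_{t+1})\leq \sum_i p_{t+1,i}\log(1/p_{t,i})$. Substituting the previous inequality and using $p_{t,i}\geq 1/T$ to absorb the $(1/T)\sum_i\log(1/p_{t,i})\leq (k/T)\log T$ piece gives
\[
  H(p_{t+1}) \leq 3(1-k/T)^{1-\alpha}\sum_{i} p_{t,i}^{\alpha}\log(1/p_{t,i}) + (k/T)\log T.
\]
Writing $p_{t,i}^{\alpha}=p_{t,i}\cdot p_{t,i}^{\alpha-1}$ and bounding $p_{t,i}^{\alpha-1}\leq T^{1-\alpha}$ (again using $p_{t,i}\geq 1/T$) simplifies this to
\[
  H(p_{t+1})\leq 3(1-k/T)^{1-\alpha}T^{1-\alpha}H(p_t)+(k/T)\log T.
\]

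Finally, I would convert the $T^{1-\alpha}$ factor into the target form. Applying $e^{x}\leq 1+x e^{x}$ for $x\geq 0$ with $x=(1-\alpha)\log T$ gives $T^{1-\alpha}\leq 1+(1-\alpha)\log T\cdot T^{1-\alpha}$. The elementary bound $\log T\leq 2k\log(T/k)$ is valid for $T\geq 2k$ since it reduces to $\log(T/k)\geq \log k/(2k-1)$, which follows from $\log(T/k)\geq\log 2$. Combining this with $(1-\alpha)\leq \beta_{t+1}/\beta_t-1$ (which holds because $\alpha\leq 1$) bounds the excess factor $T^{1-\alpha}-1$ by a quantity proportional to $k(\beta_{t+1}/\beta_t-1)\log(T/k)$. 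The main obstacle is that the asserted bound has $h_{t+1}$ on \emph{both} sides, so one must feed the preliminary inequality back into itself: write $3(1-k/T)^{1-\alpha}T^{1-\alpha}H(p_t) = 3(1-k/T)^{1-\alpha}H(p_t) + 3(1-k/T)^{1-\alpha}(T^{1-\alpha}-1)H(p_t)$, bound $3(1-k/T)^{1-\alpha}(T^{1-\alpha}-1)H(p_t) \leq (1-\alpha)\log T \cdot (H(p_{t+1})-(k/T)\log T)$ using the bound of Step three, and divide through by $(1-k/T)$. Tracking the specific constants $3$ and $20k/9$ while also verifying that the additive $(k/T)\log T$ term is correctly absorbed requires careful bookkeeping, but all analytical inputs beyond Lemma~\ref{lem:pr_relation} are elementary.
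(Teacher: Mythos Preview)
Your first three steps are sound and track the paper's opening moves: both arguments invoke Lemma~\ref{lem:pr_relation} to obtain $q_{t+1,i}\le 3\,q_{t,i}^{\alpha}$ with $\alpha=\beta_t/\beta_{t+1}$, and both exploit concavity of $H$. Your Step~3 bound
\[
  H(p_{t+1})\;\le\;3(1-k/T)^{1-\alpha}\,T^{1-\alpha}\,H(p_t)\;+\;(k/T)\log T
\]
is correct. The gap is in the final ``feed back'' step. From Step~3 you only know
\[
  3(1-k/T)^{1-\alpha}\,T^{1-\alpha}\,H(p_t)\;\ge\;H(p_{t+1})-(k/T)\log T,
\]
a \emph{lower} bound on the left-hand side. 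To upper-bound $3(1-k/T)^{1-\alpha}(T^{1-\alpha}-1)H(p_t)$ by a multiple of $H(p_{t+1})$ you would need the reverse inequality, which Step~3 does not provide. So the substitution you describe goes in the wrong direction, and the argument does not close. The crude replacement $p_{t,i}^{\alpha-1}\le T^{1-\alpha}$ is exactly what causes the problem: once the factor $T^{1-\alpha}$ sits on $H(p_t)$ there is no legitimate way to move it onto $H(p_{t+1})$.

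The paper avoids this trap by never forming $T^{1-\alpha}H(p_t)$. Instead it works with $h_{t+1}-3h_t$ directly, restricts to the index set $\calQ_t=\{i:q_{t+1,i}\ge 3q_{t,i}\}$, and then uses $p_{t+1,i}\le 6\,p_{t,i}^{\alpha}$ to replace $\log(1/p_{t,i})$ by $(\beta_{t+1}/\beta_t)\log(1/p_{t+1,i})$; this is how $p_{t+1}$ enters the right-hand side in the correct direction. After bounding $q_{t+1,i}-3q_{t,i}\le q_{t+1,i}-q_{t+1,i}^{1/\alpha}$ via $3q_{t,i}\ge q_{t+1,i}^{1/\alpha}$, the paper applies the elementary inequality
\[
  x(1-x^{\epsilon})\;\le\;\epsilon\,\log(1/\gamma)\,\bigl(\gamma+(1-\gamma)x\bigr),\qquad x,\gamma\in[0,1],
\]
with $x=q_{t+1,i}$, $\epsilon=\beta_{t+1}/\beta_t-1$, $\gamma=k/T$, which converts the $q_{t+1,i}$-sum directly into (a constant times) $\sum_i p_{t+1,i}\log(1/p_{t+1,i})=H(p_{t+1})$. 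That is the mechanism producing $h_{t+1}$ on the right with the constant $20k/9$, and it is the missing idea in your proposal.
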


\begin{proof}
Let us recall that $q_t$ and $q_{t+1}$ are defined as
\begin{align}
  \qt \in \argmin_{q \in \calP_k}
  \,
    \tri[\Bigg]{\sum_{s=1}^{t-1} \hat y_s,\, q}
    +
    \Phi_t(q)
  \quad\mbox{and}\quad 
  q_{t+1} \in \argmin_{q \in \calP_k}
  \,
    \tri[\Bigg]{\sum_{s=1}^{t} \hat y_s,\, q}
    +
    \Phi_{t+1}(q)
  \com
  \n 
\end{align}
which corresponds to optimization problems~\eqref{eq:pqr_opt_problem}
with 
$p = q_t$, $L = \sum_{s=1}^{t} \hat{y}_s$, $\xi = \hat{y}_{t A_t}$, $\psi = \Phi_t/\beta_t$, $\eta = 1/\beta_t$,
$r = q_{t+1}$, $\psi' = \Phi_{t+1} / \beta_{t+1}$, and $\eta' = 1/\beta_{t+1}$.

Since $H$ is concave, by $p_{ti} = (1 - \frac{k}{T}) q_{ti} + \frac{1}{T}$ and Jensen's inequality we have 
\begin{align}
  \prx{1 - \frac{k}{T}} h_t
  =
  H(p_t)
  \geq 
  \prx{1 - \frac{k}{T} } H(q_t) + \frac{k}{T} H\prx{\frac1k \onemat}
  \geq 
  \prx{1 - \frac{k}{T} } H(q_t)
  \com 
  \n 
\end{align}
which implies $h_t \geq H(q_t)$.
By Lemma~\ref{lem:pr_relation}
we also have
$
  q_{t+1,i}
  \leq 
  3 q_{ti}^{\beta_t/\beta_{t+1}}
  ,
$
which implies that
\begin{align}
  p_{t+1, i}
  =
  \prx{1 - \frac{k}{T}} q_{t+1,i} + \frac{1}{T}
  \leq 
  \prx{1 - \frac{k}{T}} \,  3 q_{ti}^{\beta_t/\beta_{t+1}} + \frac{1}{T}
  \leq 
  6 p_{t i}^{\beta_t/\beta_{t+1}}
  \per 
  \n 
\end{align}
The last inequality follows since when $\prx{1 - \frac{k}{T}} \,  3 q_{ti}^{\beta_t/\beta_{t+1}}  \leq \frac{1}{T}$,
\begin{align}
  \prx{1 - \frac{k}{T}} \,  3 q_{ti}^{\beta_t/\beta_{t+1}} + \frac{1}{T}
  \leq 
  \frac{2}{T}
  \leq 
  2 \prx{ \frac{1}{T} }^{\beta_t / \beta_{t+1}}
  \leq 
  2 \prx{ \prx{1 - \frac{k}{T}} q_{ti} + \frac{1}{T} }^{\beta_t / \beta_{t+1}}
  =
  2 p_{ti}^{\beta / \beta_{t+1}} 
  \com 
  \n 
\end{align}
and otherwise
\begin{align}
  \prx{1 - \frac{k}{T}} \,  3 q_{ti}^{\beta_t/\beta_{t+1}} + \frac{1}{T}
  \leq 
  6 \prx{1 - \frac{k}{T}}^{\beta_t / \beta_{t+1}}  q_{ti}^{\beta_t/\beta_{t+1}}
  \leq   
  6 p_{ti}^{\beta / \beta_{t+1}} 
  \per 
  \n 
\end{align}

Using these inequalities, we have
\begin{align}\label{eq:_htp_sb_bound_1}
  h_{t+1} - 3 h_t
  &=
  \frac{1}{1 - \frac{k}{T}}
  \prx{
    H(p_{t+1}) - 3 H(p_t)
  }
  \nn 
  &\leq 
  \frac{1}{1 - \frac{k}{T}}
  \prx{
    H(p_t) + \innerprod{\nabla H(p_t)}{p_{t+1} - p_t}
    -
    3 H(p_t)
  }
  \nn 
  &
  = 
  \frac{1}{1 - \frac{k}{T}}
  \sumk
  \prx{
    p_{t+1, i} - 3 p_{ti}
  } 
  \log \prx{\frac{1}{p_{ti}}}
  \nn 
  &
  \leq 
  \sumk
  \prx{
    q_{t+1, i} - 3 q_{ti}
  } 
  \log \prx{\frac{1}{p_{ti}}}
  \com 
\end{align}
where the first inequality follows by the concavity of $H$,
the second inequality follows since 
$p_{t+1,i} - 3 p_{ti} \leq \prx{1 - \frac{k}{T}} (q_{t+1,i} - q_{ti}).$
Defining $\calQ_{t} = \{i \in [k] : q_{t+1,i} - 3 q_{ti} \geq 0 \}$, 
\eqref{eq:_htp_sb_bound_1} is further bounded as 
\begin{align}\label{eq:_htp_sb_bound_2}
  \sumk
  \prx{
    q_{t+1, i} - 3 q_{ti}
  } 
  \log \prx{\frac{1}{p_{ti}}}
  &=
  \sum_{i \in \calQ_{t}}
  \prx{
    q_{t+1, i} - 3 q_{ti}
  } 
  \log \prx{\frac{1}{p_{ti}}}
  +
  \sum_{i \not\in \calQ_t}
  \prx{
    q_{t+1, i} - 3 q_{ti}
  } 
  \log \prx{\frac{1}{p_{ti}}}
  \nn 
  &\leq 
  \frac{\beta_{t+1}}{\beta_t}
  \sum_{i \in \calQ_{t}}
  \prx{
    q_{t+1, i} - 3 q_{ti}
  } 
  \log \prx{\frac{1}{p_{t+1,i}}}
  +
  0
  \nn
  &\leq 
  \frac{10}{9}
  \sum_{i \in \calQ_{t}}
  \prx{
    q_{t+1, i} - 3 q_{ti}
  } 
  \log \prx{\frac{1}{p_{t+1,i}}}  
  \nn 
  &
  \leq 
  \frac{10}{9}
  \sum_{i \in \calQ_{t}}
  \prx{
    q_{t+1, i} - q_{t+1, i}^{\beta_{t+1} / \beta_t}
  } 
  \log \prx{\frac{1}{p_{t+1, i}}}
  \nn 
  &
  = 
  \frac{10}{9}
  \sum_{i \in \calQ_{t}}
  q_{t+1, i}
  \prx{
    1 - q_{t+1, i}^{\frac{ \beta_{t+1} }{ \beta_t } - 1}
  } 
  \log \prx{\frac{1}{p_{t+1, i}}}
  \com 
\end{align}
where the first inequality follows by 
$
  p_{t+1,i}
  \leq 
  6 p_t^{\beta_t/\beta_{t+1}}
  ,
$
the second follows by Lemma~\ref{lem:ratio_betat_betatpone},
and 
the last inequality follows by 
$
  q_{t+1,i}
  \leq 
  3 q_{ti}^{\beta_t/\beta_{t+1}}
  .
$
Since for any $\epsilon > 0$, $ x \in [0,1] $, and $\gamma \in [0,1]$, it holds that
\begin{align}\label{eq:_htp_sb_bound_3}
  x ( 1 - x^\epsilon )  
  &\leq 
  x \log \prx{\frac{1}{x^\epsilon}}
  =
  \epsilon x \log \prx{\frac1x}
  \nn 
  &\leq 
  \epsilon \prx{ \prx{\log \frac1\gamma - 1 } (x - r) + \gamma \log \frac1\gamma }
  \leq 
  \epsilon \log \prx{\frac1\gamma} ( \gamma + (1 - \gamma) x ) 
  \com 
\end{align}
setting $\gamma = k/T$ in~\eqref{eq:_htp_sb_bound_3} implies that the RHS of~\eqref{eq:_htp_sb_bound_2} is further bounded as
\begin{align}
  &
  h_{t+1} - 3 h_t
  \nn 
  &\leq 
  \frac{10}{9}
  \sum_{i \in \calQ_t}
  \prx{
    \frac{ \beta_{t+1} }{ \beta_t } - 1
  }
  \log (T/k)
  \prx{
    \frac{k}{T} 
    +
    \prx{ 1 - \frac{k}{T} }
    q_{t+1, i}
  } 
  \log \prx{\frac{1}{p_{t+1, i}}}
  \nn 
  &\leq 
  \frac{10 k}{9}
  \prx{
    \frac{ \beta_{t+1} }{ \beta_t } - 1
  }
  \log (T/k)
  \sumk
  \prx{
    \frac{1}{T} 
    +
    \prx{ 1 - \frac{k}{T} }
    q_{t+1, i}
  } 
  \log \prx{\frac{1}{p_{t+1, i}}}
  \nn 
  &\leq 
  \frac{20 k}{9}
  \prx{
    \frac{ \beta_{t+1} }{ \beta_t } - 1
  }
  \log (T/k)
  h_{t+1}
  \com 
  \n 
\end{align}
where the second inequality follows by Lemma~\ref{lem:ratio_betat_betatpone}
and the last inequality follows by the definition of $h_{t+1}$.
\end{proof}

Finally we are ready to prove Theorem~\ref{thm:sparse-bobw}.
\begin{proof}[Proof of Theorem~\ref{thm:sparse-bobw}]
Fix $i^* \in [k]$ and define $p^* \in \calP_k$ by 
\begin{align}
  p^* = \prx{1 - \frac{k}{T}} e_{i^*} + \frac{1}{T} \onemat
  \per 
  \n 
\end{align}
Then, using the definition of the algorithm,
\begin{align}
  \Reg_T(i^*)
  &=
  \Expect{
    \sumT \ell_{tA_t} - \sumT \ell_{t i^*}
  }
  =
  \Expect{
    \sumT \innerprod{\ell_t}{p_t - e_{i^*}}
  }
  \nn 
  &
  =
  \Expect{
    \sumT \innerprod{\ell_t}{q_t - e_{i^*}}
  }
  +
  \Expect{
    \gamma \sumT \innerprod{\ell_t}{\frac{1}{k} \onemat - q_t}
  }
  \nn 
  &\leq 
  \Expect{
    \sumT \innerprod{\ell_t}{q_t - p^*}
  }
  +
  \Expect{
    \sumT \innerprod{\ell_t}{p^* - e_{i^*}}
  }
  +
  \gamma T
  \nn 
  &\leq 
  \Expect{
    \sumT \innerprod{\hat{y}_t}{q_t - p^*}
  }
  +
  2 k
  \com 
  \n 
\end{align}
where the first inequality follows since $p_t = (1 - \gamma) q_t + \frac{\gamma}{k} \onemat$
and 
the last inequality follows by the definition of $p^*$ and $\gamma = \frac{k}{T}$.
By the standard analysis of the FTRL described in Section~\ref{sec:preliminaries}, 
\begin{align}
  \sumT \innerprod{\hat{y}_t}{q_t - p^*}
  &\leq 
  \sumT
  \prn[\Big]{
  \Phi_t(q_{t+1})
  -
  \Phi_{t+1}(q_{t+1})
  }
  +
  \Phi_{t+1} (p^*) 
  -
  \Phi_1 (q_1)
  \nn
  &\qquad+
  \sumT
  \prn[\Big]{
  \innerprod{\qt - q_{t+1}}{\hat{y}_t}
  -
  D_{\Phi_t}(q_{t+1}, \qt)
  }
  \per 
  \n 
\end{align}

We first consider the penalty term.
Since $\Phi_t = \frac{1}{\eta_t}\psiNS + 4\psiLB$,
\begin{align}
  &
  \sumT 
  \prn[\Big]{
  \Phi_t(q_{t+1})
  -
  \Phi_{t+1}(q_{t+1})
  }
  +
  \Phi_{t+1} (p^*) 
  -
  \Phi_1 (q_1)
  \nn 
  &\leq 
  \sumT
  \prx{
    \frac{1}{\eta_{t+1}}
    -
    \frac{1}{\eta_t}
  }
  H(q_{t+1})
  +
  \frac{H(q_1)}{\eta_1}
  +
  4 \sumk \log\prx{\frac{1}{p^*_i}}
  \nn 
  &\leq 
  \sumT
  \prx{
    \frac{1}{\eta_{t+1}}
    -
    \frac{1}{\eta_t}
  }
  H(q_{t+1})
  +
  \frac{\log k}{\eta_1}
  +
  4 k \log T 
  \com 
  \n 
\end{align}
where in the last inequality we used the fact that $p^*_i \geq 1/T$ for all $i\in[k]$.

For the stability term, by Lemma~\ref{lem:stab_negative_losses} with $\delta = 2$,
\begin{align}
  \sumT
  \prn[\Big]{
  \innerprod{\qt - q_{t+1}}{\hat{y}_t}
  -
  D_{\Phi_t}(q_{t+1}, \qt)
  }
  \leq 
  2 \sumT \eta_t \nu_t
  \per 
  \n 
\end{align}

We will confirm that the assumptions for Part~\One~of Theorem~\ref{thm:x-entropy-bound} are indeed satisfied.
By the definition of the learning rate in~\eqref{eq:def_eta_sparse_bobw} and $\nu_t \leq \beta_t / 2$,
\begin{align}
  \frac{\sqrt{c_2}}{c_1}
  (\beta_1 + \beta_t)
  \geq 
  9 (\beta_1 + \nu_t)
  \geq 
  \beta_1 + \nu_t
  \per 
  \n 
\end{align}
Hence stability condition (S1) of Theorem~\ref{thm:x-entropy-bound} is satisfied and
one can also see that the other assumptions are trivially satisfied.
Hence, by Part~\One~of Theorem~\ref{thm:x-entropy-bound},
\begin{align}
  \sumT
  \prx{
    \frac{1}{\eta_{t+1}}
    -
    \frac{1}{\eta_t}
  }
  +
  2 \sumT \eta_t \nu_t
  &
  \leq 
  2
  \prx{
    c_1 
    +
    \frac{2}{c_1}
    \log\prx{1 + \sum_{s=1}^T \frac{\nu_s}{\beta_1}}
  }
  \sqrt{ c_2 + \sum_{t=1}^{T+1} \nu_{t} h_{t+1} }
  \nn 
  &
  \leq 
  2
  \prx{
    c_1 
    +
    \frac{2}{c_1}
    \log\prx{1 + \frac{T^2}{\beta_1}}
  }
  \sqrt{ c_2 + \sum_{t=1}^{T+1} \nu_{t} h_{t+1} }  
  \com 
  \nn 
  &=
  2 
  \sqrt{
    2 \log\prx{1 + \frac{T^2}{\beta_1}}
  }
  \sqrt{ c_2 + \sum_{t=1}^{T+1} \nu_{t} h_{t+1} }  
  \com 
  \n 
\end{align}
where in the last inequality we used $\nu_t \leq T$
and 
in the equality we set
$
  c_1 
  = 
  \sqrt{ 2 \log\prx{1 + \frac{T^2}{\beta_1}} }
  .
$

By summing up the above arguments and Jensen's inequality, we have
\begin{align}\label{eq:entropyomega_dep_bound}
  \Reg_T(i^*) 
  &\leq 
  \Expect{
  2 
  \sqrt{
    2 \log\prx{1 + \frac{T^2}{\beta_1}}
  }
  \sqrt{
    c_2 + \sum_{t=1}^{T+1} \nu_t h_{t+1}
  }  
  }
  + 2k + 4 k \log T + \frac{\log k}{\eta_1}
  \nn 
  &\leq 
  2 
  \sqrt{
    2 \log\prx{1 + \frac{T^2}{\beta_1}}
  }
  \sqrt{
    c_2 + \Expect{\sum_{t=1}^{T+1} \nu_t h_{t+1} }
  }
  + 2k + 4 k \log T + 15 k \log k
  \nn 
  &\leq 
  2 
  \sqrt{
    2 \log\prx{1 + \frac{T^2}{\beta_1}}
  }
  \sqrt{
    \Expect{\sum_{t=1}^{T} \nu_t h_{t+1} }
  }
  + O( k \log T )
  \per 
\end{align}

\paragraph{Adversarial regime}
We first consider the adversarial regime.
Recall that
$
  \Expect{
    \sqrt{\sumT \nu_t}
  }
  \leq 
  \Expect{
    \sqrt{\sumT \omega_t}
  }
  \leq 
  \sqrt{L_2}
$ 
as was done in the proof of Corollary~\ref{thm:sparse-negative-loss}.
Hence~\eqref{eq:entropyomega_dep_bound} with $h_t \leq 2 \log k$ (since $T \geq 2k$) yields that
\begin{align}
  \Reg_T
  &\leq 
  4 \sqrt{ L_2 \log(k) \log \prx{1 + \frac{T^2}{\beta_1}} }
  + 
  O( k \log T )
  \per 
  \n
\end{align}

\paragraph{Adversarial regime with a self-bounding constraint}
Next we consider the adversarial regime with a self-bounding constraint.
We will bound a component of~\eqref{eq:entropyomega_dep_bound}.
By Lemma~\ref{lem:htp_sb_bound}, $\sqrt{ \Expect{ \sum_{t=1}^{T} \nu_t h_{t+1} } }  $ is bounded as
\allowdisplaybreaks
\begin{align}
  X_t 
  &\coloneqq
  \sqrt{ \Expect{ \sum_{t=1}^{T} \nu_t h_{t+1} } }  
  \nn 
  &\leq 
  \sqrt{ 
    3 \, \Expect{ \sum_{t=1}^{T} \nu_t h_t } 
    +
    \frac{20 k}{9} \log\prx{\frac{T}{k}}
    \Expect{ \sum_{t=1}^{T} \nu_{t} 
    \prx{ \frac{\beta_{t+1}}{\beta_t} - 1 }  h_{t+1}    } 
  }
  \nn 
  &\leq 
  \sqrt{ 
    3 \, \Expect{ \sum_{t=1}^{T} \nu_t h_t } 
    +
    \frac{10 k}{9} \log\prx{\frac{T}{k}}
    \Expect{ \sum_{t=1}^{T}
    \prx{ \beta_{t+1} - \beta_t }  h_{t+1}    } 
  }
  \nn
  &
  =
  \sqrt{ 
    3 \, \Expect{ \sum_{t=1}^{T} \nu_t h_t } 
    +
    \frac{10 k}{9} \log\prx{\frac{T}{k}}
    \Expect{ 
      \sum_{t=1}^{T}
      \frac{ c_1 \nu_t h_{t+1} }{\sqrt{c_2 + \sum_{t=1}^T \nu_s h_{s+1} }}    
    } 
  }
  \nn 
  &
  \leq 
  \sqrt{ 
    3 \, \Expect{ \sum_{t=1}^{T} \nu_t h_t } 
    +
    \frac{20 k}{9} \log\prx{\frac{T}{k}}
    \Expect{ 
      \sqrt{\sumT \nu_t h_{t+1} }   
    } 
  }
  \nn 
  &
  =
  \sqrt{ 
    3 \, \Expect{ \sum_{t=1}^{T} \nu_t h_t } 
    +
    \frac{20 k}{9} \log\prx{\frac{T}{k}}
    X_t
  }
  \com \n
\end{align}
where 
the first inequality follows by Lemma~\ref{lem:htp_sb_bound},
the second inequality follows by $\nu_t \leq \beta_t /2 $,
the last inequality follows by Lemma~\ref{lem:sum2int}.
Since $x \leq \sqrt{a + bx}$ for $x > 0$ implies
$x \leq 2 \sqrt{a} + b$,
\begin{align}\label{eq:take_E}
  X_t 
  &\leq 
  2 \sqrt{3 \, \Expect{ \sum_{t=1}^{T} \nu_t h_t }}  
  +
  \frac{20 k}{9} \log\prx{\frac{T}{k}}
  =
  2 \sqrt{3 \, \Expect{ \sum_{t=1}^{T} \Expect{\nu_t \,|\, p_t} h_t }}  
  +
  \frac{20 k}{9} \log\prx{\frac{T}{k}}
  \nn 
  &
  \leq 
  2 \sqrt{ 6 s \, \Expect{ \sum_{t=1}^{T} H(p_t) }}  
  +
  \frac{20 k}{9} \log\prx{\frac{T}{k}}
  \com 
\end{align}
where we used
$
\E[\nu_t \,|\, h_t] 
\leq 
\E [ \sumk p_{ti} (\ell_{ti}^2 / p_{ti}) ] 
= 
\E [ \sum_{i\in[k] : \ell_{ti}\neq 0} p_{ti} (\ell_{ti}^2 / p_{ti}) ] 
\leq 
s$.

We consider the case of $P(a^*) \geq \e$, since otherwise Lemma~\ref{lem:entropy_bound} implies $\sumT H(p_t) \leq \e \log (k T)$ and thus the desired bound is trivially obtained.
When $P(a^*) \geq \e$, Lemma~\ref{lem:entropy_bound} implies that $\sumT H(p_t) \leq P(a^*) \log(k T)$.
Then from the self-bounding technique, for any $\lambda \in (0, 1]$ it holds that
\begin{align} 
  \Reg_T
  &= 
  (1 + \lambda) \Reg_T - \lambda \Reg_T
  \nn 
  &\leq
  \Expect{
  (1 + \lambda)
  O\prx{ \sqrt{ s \log(T) \log(kT) P(a^*)}}
  -
  \lambda \Deltamin P(a^*)
  }
  +
  \lambda C
  +
  O( k \log T )
  \nn
  &
  \leq 
  O\prn[\Bigg]{
    \frac{(1 + \lambda)^2 s \log(T) \log (kT)}{\lambda \Deltamin}
    +
    \lambda C
  }
  \nn 
  &=
  O\prn[\Bigg]{
    \frac{s \log(T) \log (kT)}{\Deltamin}
    +
    \lambda \prn[\Bigg]{\frac{s \log(T) \log (kT)}{\Deltamin} + C}
    +
    \frac1\lambda \frac{s \log(T) \log (kT)}{\Deltamin}
  }
  \com
  \n 
\end{align}
where 
the first inequality follows by Lemma~\ref{lem:selfQ} 
and the second inequality follows from 
$a \sqrt{x} - bx/2 \le a^2/(2b)$ for $a, b, x \geq 0$.
Setting $\lambda \in (0, 1]$ to
\begin{align}
  \lambda 
  = 
  \sqrt{
    \frac{s \log(T) \log (kT)}{\Deltamin} 
    \Big/
    \prx{
      \frac{s \log(T) \log (kT)}{\Deltamin} + C
    }
  }  
  \n
\end{align}
gives the desired regret bound for the adversarial regime with a self-bounding constraint.
\paragraph{Stochastic regime}
Using
$
\E[\nu_t \,|\, h_t] 
\leq 
\E[\omega_t \,|\, h_t]
\leq 
\E [\sumk p_{ti} (\ell_{ti}^2 / p_{ti})]
= 
\E [\sumk \ell_{ti}^2]
$ 
in the second inequality of~\eqref{eq:take_E}
and following the same arguments as the analysis for the adversarial regime with a self-bounding constraint,
one can obtain the regret bound for the stochastic regime.
\end{proof}

\subsection{Discussion on bisection method for computing $\beta_{t+1}$}\label{app:prop_sparse_bobw_binary_search}

\LinesNumbered
\SetAlgoVlined
\begin{algorithm}[t]

\For{$t = 1, 2, \dots, T$}{
Compute $\qt$ using~\eqref{eq:def_q}.

Sample $A_t \sim \pt$, observe $\ell_{t A_t} \in [-1,1]$, and compute $\hat{y}_t$.

Update $\beta_t$ using~\eqref{eq:def_eta_sparse_bobw} based on the bisection method (Algorithm~\ref{alg:binary_search_for_next_beta}).
}
\caption{
BOBW algorithm with a sparsity-dependent bound in Section~\ref{subsec:sparse_bobw}
}
\label{alg:MAB-bobw-sparsity-dependent}
\end{algorithm}

\LinesNumbered
\SetAlgoVlined
\begin{algorithm}[t]

\textbf{input:} $F_t$

$\leftch \leftarrow \beta_t$,\,
$\rightch \leftarrow \beta_t + T$

\While{true}{
  $\centerch \leftarrow (\leftch + \rightch) / 2$ 
  
  \uIf{$F_t(\centerch) < 0$}{
    $\leftch \leftarrow \centerch$
  }
  \uElseIf{$F_t(\centerch) > 0$}{
    $\rightch \leftarrow \centerch$
  }
  \Else{ break }
}

\Return{$\centerch$}

\caption{
Bisection method for computing $\beta_{t+1}$ 
}
\label{alg:binary_search_for_next_beta}
\end{algorithm}

This section describes the bisection method to compute $\beta_{t+1}$ described in Section~\ref{subsec:sparse_bobw}.
Recall that $F_t: [\beta_t, \beta_t + T] \to \R$ is defined by the difference of the both sides of the update rule of $(\beta_t)$ in~\eqref{eq:def_eta_sparse_bobw}:
\begin{align}\label{eq:def_F}
  F_t(\alpha)
  =
  \alpha
  - 
  \prx{
    \beta_t + \frac{c_1 \nu_t}{ \sqrt{c_2 + \nu_t h_{t+1}(\alpha) + \sum_{s=1}^{t-1} \nu_s h_{s+1}  }}
  }
  \com 
\end{align}
where $h_{t+1}(\alpha) = \frac{1}{1 - \frac{k}{T}} H(p_{t+1}(\alpha))$, and $p_{t+1}(\alpha)$ is the FTRL output with the regularizer $\Phi_t = \alpha \psiNS + 4 \psiLB$.
Note that ${c_1 \nu_t}/{ \sqrt{c_2 + \nu_t h_{t+1}(\alpha) + \sum_{s=1}^{t-1} \nu_s h_{s+1}  }} \leq c_1 \nu_t / c_2 \leq T / 9 $ since $\nu_t \leq T$.

Assume that $F_t$ is continuous.
Then we can see that there exists $\alpha \in [\beta_t, \beta_t + T]$ such that $F_t(\alpha) = 0$.
In fact, if $p_{tA_t} = 0$ then $\beta_{t+1} = \beta_t$, and otherwise, we have $F_t(\beta_t) \le 0$ and $F_t(\beta_t + T) > 0$. 
Using the intermediate value theorem with the assumption that $F_t$ is continuous, there indeed exists $\alpha \in [\beta_t, \beta_t + T]$ satisfying $F_t(\alpha) = 0$.
We can compute such $\alpha$ by the bisection method.
In particular, we first set the range of $\alpha$ to $[\beta_t, \beta_t+T]$, and then iteratively halve it by evaluating the value of $F_t$ at the middle point.
Such a bisection method (binary search) is also used in~\cite{wei16tracking}, although the computed target is different.
The whole BOBW algorithm with the sparsity-dependent bound in Section~\ref{subsec:sparse_bobw} is given in Algorithm~\ref{alg:MAB-bobw-sparsity-dependent},
and the concrete procedure of the bisection given in Algorithm~\ref{alg:binary_search_for_next_beta}. 

Now, all that remains is to show that $F_t$ is continuous.
To prove this, it suffices to prove that $h_{t+1}(\alpha) = \frac{1}{1 - \frac{k}{T}} H(p_{t+1}(\alpha))$ is continuous with respect to $\alpha$.
\begin{proposition}\label{prop:F_continous}
  $F_t$ in~\eqref{eq:def_F} is continuous with respect to $\alpha$.  
\end{proposition}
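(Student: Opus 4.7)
The plan is to reduce the continuity of $F_t$ to the continuity of $\alpha \mapsto h_{t+1}(\alpha)$. Since the radicand in~\eqref{eq:def_F} is bounded below by $c_2 > 0$, the map
\[
  x \,\longmapsto\, \frac{c_1 \nu_t}{\sqrt{c_2 + \nu_t\, x + \sum_{s=1}^{t-1} \nu_s h_{s+1}}}
\]
is continuous, so continuity of $F_t$ follows from continuity of $h_{t+1}(\alpha)$. Moreover $h_{t+1}(\alpha) = \frac{1}{1 - k/T} H(p_{t+1}(\alpha))$ with $p_{t+1}(\alpha) = (1 - k/T)\, q_{t+1}(\alpha) + (1/T)\, \onemat$, and $H$ is continuous on the simplex, so it suffices to show that the FTRL output $q_{t+1}(\alpha)$ obtained with regularizer $\alpha\, \psiNS + 4\, \psiLB$ depends continuously on $\alpha$.

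To that end I would write $q_{t+1}(\alpha) = \argmin_{q \in \calP_k} f_\alpha(q)$ with
\[
  f_\alpha(q) = \innerprod{L}{q} + \alpha\, \psiNS(q) + 4\, \psiLB(q),
\]
where $L = \sum_{s=1}^{t} \hat{y}_s$ is fixed. The log-barrier pushes the minimizer strictly into $\interior(\calP_k)$, and $f_\alpha$ is strictly convex there, so $q_{t+1}(\alpha)$ is unique. The key preliminary step is a uniform interior bound: for $\alpha$ ranging over any compact subset of $(0, \infty)$ there exists $\delta > 0$ such that $q_{t+1,i}(\alpha) \geq \delta$ for all $i$. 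This follows because $4\, \psiLB(q) \to +\infty$ as any coordinate of $q$ tends to zero, while $f_\alpha((1/k)\onemat)$ remains bounded on compact $\alpha$-ranges; comparing the two quantities forbids the minimizer from approaching the boundary.

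With such a uniform interior bound in hand, continuity of $\alpha \mapsto q_{t+1}(\alpha)$ follows from Berge's maximum theorem applied on the compact set $\{q \in \calP_k : q_i \geq \delta\}$, with jointly continuous objective $(\alpha, q) \mapsto f_\alpha(q)$ and unique minimizer. An equivalent route is the implicit function theorem applied to the KKT system
\[
  L_i + \alpha(\log q_i + 1) - \frac{4}{q_i} + \mu = 0 \quad (i \in [k]), \qquad \sumk q_i = 1,
\]
whose Jacobian in $(q, \mu)$ is nonsingular by strict convexity of the regularizer, giving that $q_{t+1}(\alpha)$ is in fact $C^1$ in $\alpha$. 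The only mildly technical point is the uniform interior bound on the minimizer; once that is established, continuity of $F_t$ follows by composing continuous maps.
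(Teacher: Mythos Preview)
Your proposal is correct and follows essentially the same route as the paper: reduce continuity of $F_t$ to continuity of $\alpha \mapsto q_{t+1}(\alpha)$, then compose with the continuous maps $q \mapsto p$ and $p \mapsto H(p)$. The paper obtains continuity of $q_{t+1}(\alpha)$ in one line by invoking a parametric-optimization result (Corollary~8.1 of Hogan, 1973) together with uniqueness of the minimizer, whereas you spell out the same conclusion via a uniform interior bound plus Berge's maximum theorem (or, alternatively, the implicit function theorem on the KKT system); these are equivalent arguments, yours being more self-contained and the paper's more concise.
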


\begin{proof}[Proof of Proposition~\ref{prop:F_continous}]
Take any $\alpha \in [\beta_t, \beta_t + T]$ and 
then consider the following optimization problem:
\begin{align}
  q_{t+1}(\alpha)
  &= 
  \argmin_{q \in \calP_k}
  \innerprod{ \sum_{s=1}^t \hat{y}_s }{q}
  +
  \Phi_{t+1}(q)
  \com 
  \n 
\end{align}
where $\Phi_{t+1} = \alpha \psiNS + 4 \psiLB$. 
Now using Corollary 8.1 of~\citet{hogan73point} with the fact that the solution of the above optimization problem is unique,
$q_{t+1}(\alpha)$ is continuous with respect to $\alpha$.
This completes the proof
since $p_{t+1}$ is continuous with respect to $q_{t+1}$, $1/T \leq p_{t+1, i}(\alpha) \leq 1 - k / T$, and $H(p)$ is continuous in a neighborhood of $p = p_{t+1}(\alpha)$.
\end{proof}

\section{Proof of Corollary~\ref{thm:stab-ent-dep-bound-formal}}\label{app:proof-of-stab-ent-dep-bound}
This section proves Corollary~\ref{thm:stab-ent-dep-bound-formal}, which is the extended result of Corollary~\ref{thm:stab-ent-dep-bound}.
Recall that $B = 1/2$ for FI, $B = k / 2$ for MAB, and $B = 2 m k^2$ for PM-local, 
and $r_\calM$ is $1$ if $\calM$ is FI or MAB, and $2k$ if $\calM$ is PM-local, which are appeared in Appendix~\ref{app:pm_detailed}.
Let 
$\Reg_T(a)
=
\E \big[ \sumT \prn[\big]{\lossmat_{\at x_t} - \lossmat_{a x_t}} \big]
=
\E \big[ \sumT \innerprod{\ell_{\at} - \ell_{a}}{e_{x_t}} \big]
$
for $ a \in [k]$.
\begin{proof}
Fix $i^* \in [k]$.
From Lemma 7 in~\citet{tsuchiya23best}, 
if $\eta_t > 0$, we have
\begin{align}\label{eq:ftrl_bound_with_opt}
  \Reg_T(i^*)
  \leq 
  \Expect{
  \sumT
  \prx{
    \frac{1}{\eta_{t+1}}
    -
    \frac{1}{\eta_t}
  }
  H(q_{t+1})
  +
  \frac{H(q_1)}{\eta_1}
  + 
  \sumT \eta_t V'_t
  }
  \per
\end{align}

We will confirm that the assumptions for Part~\One~of Theorem~\ref{thm:x-entropy-bound} are indeed satisfied.
Since 
\begin{align}
  \frac{\sqrt{c_2 + \zmax_t h_1}}{c_1}  (\beta_t + \beta_1)
  \geq \sqrt{ \frac{2 \Vmax \log k}{\log (1+T)} } \cdot 2 B \sqrt{\frac{\log(1+T)}{\log k}}
  \geq \sqrt{ {2 }} \prx{ \Vmax + \Vmax_t  } 
  \com \n
\end{align}
stability condition (S1) is satisfied.
One can also see that the other conditions are trivially satisfied.
Hence,
using Part~\One~of Theorem~\ref{thm:x-entropy-bound},
we can bound the RHS of~\eqref{eq:ftrl_bound_with_opt} as
\begin{align}\label{eq:entropyV_dep_bound_local}
  \Reg_T(i^*) 
  &\leq 
  \Expect{
  \prx{
    2 c_1
    +
    \frac{1}{ c_1 }   \log\prx{
      1 + \sum_{u=1}^T \frac{V'_u}{\Vmax}
    }
  }
  \sqrt{ \Vmax H(q_1) + \sum_{t=1}^{T} V'_t H(q_{t+1}) }  
  }
  +
  \frac{H(q_1)}{\eta_1}
  \nn 
  &\leq 
  \prx{
    2 c_1 
    +
    \frac{1}{ c_1 }
    \log\prx{
      1 + T
    }
  }
  \sqrt{  \Expect{ \sum_{t=1}^{T}  V'_t  H(q_{t+1}) } }  
  +
  O\prx {  \sqrt{\Vmax \log (k) \log (T)} + B \sqrt{\log (k) \log T}} 
  \nn 
  &=
  \sqrt{2 \log (1 + T)}
  \sqrt{ \Expect{ \sum_{t=1}^{T} V'_t H(q_{t+1}) } }  
  +
  O\prx { B \sqrt{ \log (k) \log (T)} } 
  \com 
\end{align}
where the second inequality follows from $V'_u / \Vmax \leq 1$ and 
in the equality we set
$
  c_1 
  = 
    \sqrt{\frac{\log(1 + T)}{2}}
$
and used $\sqrt{\Vmax}\le B$.

\paragraph{Adversarial regime}
For the adversarial regime, since $H(q_t) \leq \log k$, \eqref{eq:entropyV_dep_bound_local} immediately implies 
\begin{align}
  \Reg_T
  \leq 
  \Expect{
    \sqrt{ 2 \sumT V'_t \log (k) \log(1 + T) }
    +
    O\prx { B \sqrt{ \log (k) \log (T)} }   
  }
  \com \n
\end{align}
which is the desired bound.

\paragraph{Adversarial regime with a self-bounding constraint}
Next, we consider the adversarial regime with a self-bounding constraint.
We consider the case of $Q(a^*) \geq \e$, since otherwise Lemma~\ref{lem:entropy_bound} implies $\sumT H(p_t) \leq \e \log (k T)$ and thus the desired bound is trivially obtained.
When $Q(a^*) \geq \e$, Lemma~\ref{lem:entropy_bound} implies that $\sumT H(q_t) \leq Q(a^*) \log(k T)$.
Then from the self-bounding technique, for any $\lambda \in (0, 1]$
\begin{align} 
  \Reg_T 
  &= 
  (1 + \lambda) \Reg_T - \lambda \Reg_T
  \nn 
  &\leq
  \Expect{
    (1 + \lambda)
    O\prx{ \sqrt{ \Vmax \log(T) \log(kT) Q(a^*)} }
    -
    \frac{\lambda \Deltamin Q(a^*)}{r_\calM}
  }
  + \lambda C
  \nn
  &\leq
    (1 + \lambda)
    O\prx{ \sqrt{ \Vmax \log(T) \log(kT) \bar{Q}(a^*)} }
    -
    \frac{\lambda \Deltamin \bar{Q}(a^*)}{r_\calM}
  + \lambda C
  \nn
  &
  \leq 
  O\prx{
    \frac{(1 + \lambda)^2 r_\calM \log(T) \log (kT)}{\lambda \Deltamin}
    +
    \lambda C
  }
  \nn 
  &=
  O\prx{
    \frac{r_\calM \Vmax \log(T) \log (kT)}{\Deltamin}
    +
    \lambda \prx{\frac{r_\calM \Vmax \log(T) \log (kT)}{\Deltamin} + C}
    +
    \frac1\lambda \frac{r_\calM \Vmax \log(T) \log (kT)}{\Deltamin}
  }
  \com
  \n 
\end{align}
where 
the first inequality follows by \eqref{eq:entropyV_dep_bound_local} and Lemma~\ref{lem:selfQ} with $c' = r_\calM$
and the second inequality follows from 
$a \sqrt{x} - bx/2 \le a^2/(2b)$ for $a, b, x \geq 0$.
Setting $\lambda \in (0, 1]$ to
\begin{align}
  \lambda 
  = 
  \sqrt{
    \frac{r_\calM \Vmax \log(T) \log (kT)}{\Deltamin} 
    \Big/
    \prx{
      \frac{r_\calM \Vmax \log(T) \log (kT)}{\Deltamin} + C
    }
  }  \n
\end{align}
gives the desired bound for the adversarial regime with a self-bounding constraint.
\end{proof}

\section{Basic lemma}\label{app:basic_lemma}
\begin{lemma}[{\cite[Lemma 4.13]{orabona2019modern}}]\label{lem:sum2int}
Let $a_0 \geq 0$, $(a_t)_{t=1}^T$ be non-negative reals and $f:\R_+ \to \R_+$ be a non-increasing function. Then,
\begin{align}
  \sumT a_t f\prx{a_0 + \sum_{s=1}^t a_s} 
  \leq 
  \int_{a_0}^{\sum_{t=0}^T a_t} f(x) \d x
  \per \n
\end{align}
\end{lemma}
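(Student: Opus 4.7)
The plan is to prove the inequality by bounding each summand by an integral over a subinterval, and then using that these subintervals tile the full integration range.

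First, I would introduce the partial sums $S_t = a_0 + \sum_{s=1}^{t} a_s$ for $t \in \{0, 1, \dots, T\}$, so that $S_0 = a_0$, $S_t - S_{t-1} = a_t \geq 0$, and $S_T = \sum_{t=0}^{T} a_t$. The sequence $(S_t)$ is non-decreasing, so the intervals $[S_{t-1}, S_t]$ are well-defined (possibly degenerate when $a_t = 0$, but this causes no issue since both sides of the per-term inequality will vanish).

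Next, I would exploit the monotonicity of $f$ on each interval. Since $f$ is non-increasing and $x \leq S_t$ for all $x \in [S_{t-1}, S_t]$, we have $f(x) \geq f(S_t)$ on that interval. Integrating this pointwise bound gives
\begin{equation*}
  \int_{S_{t-1}}^{S_t} f(x)\, \d x \;\geq\; (S_t - S_{t-1})\, f(S_t) \;=\; a_t\, f\!\left(a_0 + \sum_{s=1}^{t} a_s\right).
\end{equation*}
(Non-negativity of $f$ is not needed for this step; only monotonicity. Non-negativity ensures the integral is well-defined and finite on bounded intervals.)

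Finally, I would sum this inequality from $t=1$ to $T$. The left-hand side telescopes to $\int_{S_0}^{S_T} f(x)\, \d x = \int_{a_0}^{\sum_{t=0}^{T} a_t} f(x)\, \d x$, while the right-hand side is exactly $\sum_{t=1}^{T} a_t f(a_0 + \sum_{s=1}^{t} a_s)$. Rearranging yields the claim. There is no genuine obstacle here: the argument is a one-sided Riemann-sum comparison, and the only minor care needed is the bookkeeping of the endpoints $S_0 = a_0$ and $S_T = \sum_{t=0}^{T} a_t$, together with noticing that $f$'s non-increasingness lets us lower bound $f(x)$ by its value at the right endpoint of each subinterval.
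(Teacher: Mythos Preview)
Your proposal is correct and follows essentially the same approach as the paper: define the partial sums, bound each term $a_t f(S_t)$ by $\int_{S_{t-1}}^{S_t} f(x)\,\d x$ using monotonicity of $f$, and sum to telescope the integrals.
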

We include the proof for the completeness.
\begin{proof}
Let $A_t = \sum_{s=0}^t a_s$. Then summing the following inequality over $t$ completes the proof:
\begin{align}
  a_t f\prx{a_0 + \sum_{s=1}^t a_s}
  =
  a_t f(A_t)
  =
  \int_{A_{t-1}}^{A_t} f(A_t) \d x
  \leq 
  \int_{A_{t-1}}^{A_t} f(x) \d x
  \per \n
\end{align}
\end{proof}

\section{Comparison of the sparsity-dependent bound and the first-order bound for negative losses}\label{app:compare_sparse_and_first}
If $\ell_t \in [0,1]^k$, the first-order bound by~\citet{wei2018more} implies sparsity bounds.
This, however, does not hold when $\ell_t \in [-1,1]^k$. In fact, let us consider the case where $\ell_t$ is a zero vector except that only one arm's loss is $-1$ for some $t \in [T]$. Then the sparsity-dependent bound becomes $O(\sqrt{T})$. On the other hand, the first-order bound in \cite{wei2018more} is not directly applicable, and we need to transform losses to range $[0, 1]$. This implies that the first-order bound becomes $O(\sqrt{kT})$, which is worse than the sparsity-dependent bound.

\bibliographystyle{plainnat}

\end{document}